\tikzset{
  strike through/.style={
    postaction=decorate,
    decoration={
      markings,
      mark=at position 0.5 with {
        \draw[-] (-3pt,-3pt) -- (3pt, 3pt);
      }
    }
  }
}
\tikzset{
  middleneg/.style={
    postaction=decorate,
    decoration={
      markings,
      mark=at position 0.5 with {
        \coordinate [label=below:$\neg$] (a) at (0,0); ;
      }L
    }
  }
}
\newcommand{\ie}{\mbox{i.e.}}
\newcommand{\eg}{\mbox{e.g.}}
\newcommand{\wrt}{\mbox{w.r.t.}}
\newcommand{\st}{\mbox{s.t.}}
\newcommand{\tuple}[1]{\mbox{$\langle #1 \rangle$}}
\newtheorem{definition}{Definition}
\newtheorem{theorem}{Theorem}
\newtheorem{lemma}[theorem]{Lemma}
\newtheorem{proposition}[theorem]{Proposition}
\newtheorem{corollary}[theorem]{Corollary}
\newtheorem{example}{Example}
\newtheorem{remark}{Remark}
\newcommand{\T}{\ensuremath{\mathcal{T}}} 
\newcommand{\KB}{\ensuremath{\mathcal{K}}} 
\newcommand{\A}{\ensuremath{\mathcal{A}}} 
\DeclareRobustCommand{\sat}{\models}
\newcommand{\entails}{\models}
\newcommand{\R}{\ensuremath{\mathfrak{R}}} 
\newcommand{\K}{{\sf K}}
\newcommand{\N}{\ensuremath{{\sf N}}} 
\newcommand{\CN}{\ensuremath{{\sf N}_{\mathscr{C}}}} 
\newcommand{\RN}{\ensuremath{{\sf N}_{\mathscr{R}}}} 
\newcommand{\Names}{\ensuremath{{\N_{\I}}}}  
\newcommand{\ON}{\ensuremath{{\sf N}_{\mathscr{O}}}} 
\newcommand{\dlAnd}{\sqcap}
\newcommand{\dlOr}{\sqcup}
\newcommand{\subs}{\sqsubseteq}
\newcommand{\I}{\ensuremath{\mathcal{I}}}
\newcommand{\ALC}{\ensuremath{\mathcal{ALC}}}
\newcommand{\alc}{\ensuremath{\mathcal{ALC}}}
\newcommand{\EL}{\ensuremath{\mathcal{EL}}}
\newcommand{\aboxent}{\models_{rc}}
\newcommand{\aboxentsec}{\models_{rc}'}
\newcommand{\usually}{\raisebox{-0.5ex}{$\:\stackrel{\sqsubset}{{\scriptstyle\sim}}\:$} }
\newcommand{\nusually}{\not\hspace{-1.7mm}\usually}
\newcommand{\rk}{\ensuremath{\text{\it rk}}} 
\newcommand{\dsrel}{\usually}
\newcommand{\D}{\ensuremath{\mathcal{D}}} 
\newcommand{\E}{\ensuremath{\mathcal{E}}}
\newcommand{\hide}[1]{}
\newif\iftodo
\newlength{\RoundedBoxWidth}
\newsavebox{\WhiteRoundedBox}
   {\setlength{\RoundedBoxWidth}{\dimexpr#1}
    \begin{lrbox}{\WhiteRoundedBox}
       \begin{minipage}{\RoundedBoxWidth}}%
   {   \end{minipage}
    \end{lrbox}
    \begin{center}
    \begin{tikzpicture}%
       \draw node[draw=Orange,fill=white,rounded corners,line width=1.5pt,%
             inner sep=0.7ex,text width=\RoundedBoxWidth]%
             {\usebox{\WhiteRoundedBox}};
    \end{tikzpicture}
    \end{center}}
\newcommand{\ratent}{\vdash_{rc}}
\newcommand{\cass}[2]{\mbox{$#1$:$#2$}}
\newcommand{\rass}[3]{\mbox{$(#1,#2)$:$#3$}}
\newcommand{\ELbot}{\ensuremath{\mathcal{E}\mathcal{L}_\bot}}
\newcommand{\ELObot}{\ensuremath{\mathcal{E}\mathcal{L}\mathcal{O}_\bot}}
\newcommand{\defELObot}{\ensuremath{\mathcal{EL\defnom{O}}_\bot}}
\newcommand{\dl}[1] {{\sf  #1}}
\newcommand{\nd}{\noindent}
\newcommand{\ru}{R^\cup_\KB}
\newcommand{\rub}{R^\cup_{\KB^\bullet}}
\newcommand{\defnom}[1]{\mbox{$\langle #1 \rangle$}}
\newcommand{\typ}{\bullet}
\title{A Polynomial Time Subsumption Algorithm for Nominal Safe \ELObot~under Rational Closure\footnote{This is a preprint version of a paper accepted for publication in \emph{Information Sciences}, DOI: \url{https://doi.org/10.1016/j.ins.2018.09.037}}}
\author[1]{Giovanni Casini}
\author[2]{Umberto Straccia}
\author[3]{Thomas Meyer}
\affil[1]{CSC, University of Luxembourg, Luxembourg. Email: \emph{{giovanni.casini@uni.lu}}}
\affil[2]{ISTI - CNR, Italy. Email: \emph{{umberto.straccia@isti.cnr.it}}}
\affil[3]{University of Cape Town and CAIR, South Africa. Email: \emph{{tmeyer@cs.uct.ac.za}}}
\date{}
\begin{document}
\maketitle

%
%
%
%
%
%
%

\begin{abstract}
Description Logics (DLs) under Rational Closure (RC) is a well-known framework for non-monotonic reasoning in DLs. 
In this paper, we address the concept subsumption decision problem under RC for  nominal safe \ELObot, a notable and practically important DL representative of the OWL 2 profile OWL 2 EL.

Our contribution here is to define a polynomial time subsumption procedure for nominal safe \ELObot~under RC that relies entirely on a series of classical, monotonic \ELbot~subsumption tests. Therefore, any existing classical monotonic  \ELbot~reasoner can be used as a black box to implement our method. 
We then also adapt the method to one of the known extensions of RC for DLs, namely Defeasible Inheritance-based DLs without losing the computational tractability.

\end{abstract}

\section{Introduction}

\nd Description logics~(DLs) provide the logical foundation of formal ontologies of the OWL family.\footnote{\url{https://www.w3.org/TR/owl2-overview/}}
Among the various extensions proposed to enhance the representational capabilities of DLs, endowing them with non-monotonic features is still a main issue, as documented by the past 20~years of technical development 
(see \eg~\cite{Bonatti15,Eiter11,Giordano13,Motik10} and references therein, and Section~\ref{related}).

We recall that a typical problem that can be addressed using non-monotonic formalisms is reasoning with ontologies in which some classes are exceptional \wrt~some properties of their super classes.

\begin{example}\label{ex01}
We know that  
\emph{avian red blood cells}, \emph{mammalian red blood cells}, and hence also \emph{bovine red blood cells} are \emph{vertebrate red blood cells}, and that vertebrate red blood cells normally \emph{have a cell membrane}. We also know that vertebrate red blood cells normally \emph{have a nucleus}, but that mammalian red blood cells normally don't. 
\qed
\end{example}
\nd A classical formalisation of the ontology above would imply that mammalian red blood cells do not exist, since, being a subclass of vertebrate red blood cells, they would have a nucleus, but in the meantime,  they are an atypical subclass that does not have a nucleus. Therefore, mammalian red blood cells would and would not have a nucleus at the same time. Unlike a classical approach,  the use of a non-monotonic formalism may allow us to deal with such exceptional classes.

Among the various proposals to inject non-monotonicity into DLs, the preferential approach has recently gained  
attention~\cite{BritzEtAl2008,BritzEtAl2015a,CasiniStraccia2010,CasiniStraccia13,CasiniEtAl2014,CasiniStraccia14,CasiniEtAl2015,Giordano2009,GiordanoEtAl2012b,Giordano18a,Giordano15,Giordano18,Moodley15}  as it is based on one of the major frameworks for non-monotonic reasoning in the propositional case, namely the {\em KLM approach}~\cite{KrausEtAl1990}. One of the main constructions in the preferential approach is \emph{Rational Closure} (RC) \cite{LehmannMagidor1992}. RC has some interesting properties: the conclusions are intuitive, and the decision procedure can be reduced to a series of classical decision problems, sometimes preserving the computational complexity of the underlying classical decision problem. 

In this paper, we address the concept subsumption decision problem under RC for  nominal safe \ELObot~\cite{Kazakov14}, a computationally tractable and practically important DL representative of the OWL 2 profile OWL 2 EL.\footnote{\url{https://www.w3.org/TR/owl2-profiles/}} In fact, 
(i) nominal safe  \ELObot~is the language $\EL$~\cite{Baader05a}, extended with the bottom concept (denoted $\bot$) and with the so-called \emph{ObjectHasValue} construct (also denoted as $\mathcal{B}$ in the DL literature), which is an existential quantification. Roughly, nominal safe \ELObot~is as \ELObot, except for the fact that a nominal may only occur in concept expressions of the form $\exists r.\{a\}$, or in inclusion axioms of the form $\{a\} \subs C$, stating that individual $a$ is an instance of concept $C$ (note that so-called \emph{role assertions} can be also expressed via $\{a\} \subs \exists r.\{b\}$); and (ii) in~\cite{Kazakov14} it is shown that many  OWL EL ontologies are nominal safe and that an \ELbot~reasoner is sufficient to decide the subsumption problem, decreasing the inference time significantly in practice.

In summary, our contributions are as follows.
\begin{enumerate}
\item We describe a subsumption decision procedure  under RC for nominal safe \ELObot~that runs in polynomial time.
%
Al feature of our approach is that our procedure relies entirely on a series of classical, monotonic \ELbot~subsumption tests, and, thus, any existing \ELbot~reasoner can be used as a black box to implement our method. Note that \eg, in~\cite{BritzEtAl2015a} it is shown that the use of a DL reasoners as a black box for RC under \ALC~is scalable in practice. We conjecture that this property holds with respect to RC under nominal safe \ELObot~as well.

\item We will also illustrate how to adapt our procedure to a relevant modification of RC for DLs, namely Defeasible Inheritance-based DLs~\cite{CasiniStraccia13}. We recall that Defeasible Inheritance-based DLs have been introduced to overcome  some inference limitations of RC~\cite{CasiniStraccia2010}: in fact, in~\cite[Appendix A]{CasiniStraccia13} it is shown that  Defeasible Inheritance-based DLs behave  better than RC \wrt~most of the ``benchmark'' examples illustrated there. A feature of our  proposed procedure is that it runs in polynomial time and maintains the advantage of the previous point.
\end{enumerate}
\nd In the following, we will proceed as follows: for the sake of completeness, Section~\ref{prel} introduces nominal safe \ELObot~and  recaps salient notions about Rational Closure for \alc~\cite{BritzEtAl2015a,CasiniStraccia2010,Moodley15}; 
Section~\ref{ratclosEL} we describe a polynomial time procedure to decide subsumption under RC for defeasible  \ELbot; in Section~\ref{extend}, we adapt our procedure for the Defeasible Inheritance-based \ELObot. In Section~\ref{rcelo} we address nominal safe \ELObot~and show how we polynomially reduce reasoning within it to defeasible  \ELbot, and thus inherit the computational complexity of reasoning of the subsumption decision problem from defeasible \ELbot~ and its RC extensions. Section~\ref{related} discusses related work and  Section~\ref{conclusions} concludes and addresses future work. 

All relevant proofs are in the appendixes.\footnote{A preliminary version of some results in Sections \ref{ratclosEL} and \ref{extend} appear in the Technical Report~\cite{CasiniEtAl2015TR}.}

%
\section{Preliminaries}\label{prel}
%
\nd To make the paper self-contained, in the following we briefly present the DLs \ELbot, \ELObot, and nominal safe \ELObot, and an exponential time procedure to decide subsumption in the  DL \ALC~under RC via series of classical \ALC~subsumption tests. The latter is important here as we will adapt it to decide  the subsumption problem for nominal safe \ELObot~in polynomial time.

\subsection{The DLs \ELbot, \ELObot, and nominal safe \ELObot}\label{EL}

\nd \ELbot~is the DL \EL~with the addition of the empty concept $\bot$~\cite{Baader05a}. It is a proper sublanguage of \alc. Note that considering  \EL~alone would not make sense in our case as \EL~ontologies are always concept-satisfiable, while the notion of defeasible reasoning is built over a notion of conflict (see Example \ref{ex01}) which needs to be expressible in the language.

\ELObot~is \ELbot~extended with so-called nominal concepts (denoted with the letter $\mathcal{O}$ in the DL literature, while nominal safe \ELObot~is \ELObot~with some restrictions on the occurrence of nominals.

\paragraph{Syntax.} The vocabulary is given by a set of \emph{atomic concepts} $\CN=\{A_1,\dots, A_n\}$, a set of \emph{atomic roles} $\RN=\{r_1,\ldots, r_m\}$ and a set of \emph{individuals}  $\ON=\{a,b, c,\dots\}$.
All these sets are assumed to be finite. \ELObot~\emph{concept expressions} $C,D,\ldots$ are built according to the following syntax:
\[
C,D \rightarrow A\mid\top\mid\bot\mid C\dlAnd D\mid \exists  r.C \mid \{a\} \ .
\]
\nd An \emph{ontology} $\T$  (or \emph{TBox}, or \emph{knowledge base})
is a finite set of \emph{Generalised Concept Inclusion} (GCI) axioms $C\subs D$ ($C$ \emph{is subsumed by} $D$), meaning that all the objects in the concept $C$ are also in the concept $D$. We use the expression $C = D$ as  shorthand for having both $C \subs D$ and $D \subs C$.

\paragraph{The DL \ELbot.} \nd A concept of the form $\{a\}$ is called a \emph{nominal}. \ELbot~is \ELObot~without nominals.

\paragraph{The DL nominal safe \ELObot.} Nominal safe \ELObot~is \ELObot~with some restrictions on the occurrence of nominals and is defined as follows~\cite{Kazakov14}. An \ELObot~concept $C$ is \emph{safe} if $C$ has only occurrences of nominals in subconcepts of the form $\exists r.\{a\}$;
 $C$ is \emph{negatively safe} (in short, \emph{n-safe}) if $C$ is either safe or a nominal. A GCI $C \subs D$ is \emph{safe} if $C$ is n-safe and $D$ is safe. An \ELObot~ontology is  \emph{nominal safe} if all its GCIs are safe. It is worth noting that nominal safeness is a quite commonly used pattern of nominals in OWL EL ontologies, as reported in~\cite{Kazakov14}.
%
\paragraph{Semantics.} 
An \emph{interpretation} is a pair $\I=\tuple{\Delta^\I,\cdot^\I}$, where $\Delta^\I$ is a non-empty set, called \emph{interpretation domain} and $\cdot^\I$ is an \emph{interpretation function} that
\begin{enumerate}
\item maps atomic concepts $A$ into a set $A^\I \subseteq  \Delta^\I$;
\item maps $\top$ (resp.~$\bot$) into a set  $\top^\I =  \Delta^\I$ (resp.~$\bot^\I =  \emptyset$);
\item maps roles $r$ into a set  $r^\I \subseteq  \Delta^\I \times \Delta^\I $;
\item maps individuals $a\in\ON$ into an object $a^\I \in \Delta^\I$.
\end{enumerate}
\nd The interpretation function $\cdot^\I$ is extended to complex concept expressions as follows:
\begin{eqnarray*}
(C \dlAnd D)^\I & = & C^\I \cap D^\I \\
(\exists r.C)^\I & = & \{ o \in \Delta^\I \mid \exists o' \in \Delta^\I \mbox{ s.~t. } \tuple{o, o'} \in r^\I \mbox{ and } o' \in C^\I \} \\
\{a\}^\I & = & \{a^\I \} \ .
\end{eqnarray*}
\nd An interpretation $\I$ \emph{satisfies (is a model of)} $C\subs D$ if $C^\I \subseteq D^\I$, denoted $\I\sat C\subs D$.
$\I$ \emph{satisfies  (is a model of)} an ontology $\T$  if it satisfies each axiom in it. An axiom $\alpha$ is \emph{entailed} by a $\T$ if every model of $\T$ is a model of $\alpha$, denoted as $\T \models \alpha$.
\begin{remark}\label{aboxA}
Note that a so-called \emph{concept assertion} $\cass{a}{C}$ ($a$ is an instance of concept $C$) and a \emph{role assertion} $\rass{a}{b}{r}$ ($a$ and $b$ are related via role $r$) can easily be represented in nominal safe \ELObot~via the mapping $\cass{a}{C} \mapsto \{a\} \subs C$ and $\rass{a}{b}{r} \mapsto \{a\} \subs \exists r.\{b\}$. 
\qed
\end{remark}
\nd In the following, we recap here some salient facts related to nominal safe \ELObot~\cite[Appendix A]{Kazakov14}, which we will use once we present our entailment decision algorithm for nominal safe \ELObot. Specifically,  we can replace nominals in a nominal safe \ELObot~ontology $\T$ with newly introduced concept names, yielding an $\ELbot$~ontology $\T'$, such that $\T'$ supports the same entailments as $\T$ \cite{Kazakov14}. Hence, an entailment decision procedure for $\ELbot$~suffices to decide entailment for nominal safe \ELObot~(but not for unrestricted \ELObot).

Consider an \ELObot~ontology $\T$. For each individual $a$ occurring in $\T$ consider a new atomic concept $N_a$. For $X$ an \ELObot~concept, GCI, or ontology, we define $N(X)$ to be the result of replacing each occurrence of each nominal $\{a\}$ in $X$ with $N_a$.\label{defN} The following proposition provides  a sufficient condition to check entailment.
\begin{proposition}[\cite{Kazakov14}, Lemma 5 and Corollary 2]\label{propelo}
Let $\T$ be an \ELObot ontology and  $\alpha$ an \ELObot~axiom that do not contain atomic concepts of the form $N_a$. Then
\begin{enumerate}
\item if $N(\T) \models N(\alpha)$ then $\T \models \alpha$;
\item if $N(\T) \models N_a \subs \bot$ for some $a$ then $\T$ is not satisfiable.\qed
\end{enumerate}
\end{proposition}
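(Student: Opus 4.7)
My plan is to prove the contrapositive of each item by transforming any model of $\T$ into a model of $N(\T)$ that preserves the denotation of every original concept. The key construction is as follows: given an interpretation $\I = \tuple{\Delta^\I,\cdot^\I}$, define $\I'$ to share the same domain and to agree with $\I$ on all atomic concepts, roles, and individuals occurring in $\T$ and $\alpha$, and to additionally interpret each fresh concept name $N_a$ (for $a \in \ON$ occurring in $\T$ or $\alpha$) as the singleton $\{a^\I\}$. All other symbols can be interpreted arbitrarily.

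The crucial lemma is then: for every \ELObot~concept $C$ built from the vocabulary of $\T$ and $\alpha$, we have $N(C)^{\I'} = C^{\I}$. I would prove this by structural induction on $C$. The base cases $C = A$, $C = \top$, $C = \bot$ are immediate from the fact that $\I'$ agrees with $\I$ on atomic symbols, and the case $C = \{a\}$ holds because $N(\{a\})^{\I'} = N_a^{\I'} = \{a^\I\} = \{a\}^\I$. The inductive cases for $C_1 \dlAnd C_2$ and $\exists r.C_1$ are routine, using the fact that $\I$ and $\I'$ agree on the interpretation of roles. Extending the lemma to GCIs gives that, for every axiom $\beta$ in the vocabulary of $\T$ and $\alpha$, we have $\I \sat \beta$ iff $\I' \sat N(\beta)$.

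For item (1), suppose $\T \not\models \alpha$ and pick $\I \sat \T$ with $\I \nsat \alpha$. Construct $\I'$ as above. By the lemma applied axiom-wise to each GCI in $\T$, we obtain $\I' \sat N(\T)$, while the lemma applied to $\alpha$ yields $\I' \nsat N(\alpha)$. Hence $N(\T) \not\models N(\alpha)$, proving the contrapositive. For item (2), suppose $\T$ is satisfiable and let $\I$ be a model. Construct $\I'$ as above and fix any individual $a$ occurring in $\T$; by definition $N_a^{\I'} = \{a^\I\} \neq \emptyset$, so $\I' \nsat N_a \subs \bot$. Since also $\I' \sat N(\T)$, we conclude $N(\T) \not\models N_a \subs \bot$ for any $a$.

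The only delicate point is the base case for nominals: the construction must interpret each $N_a$ as exactly the singleton $\{a^\I\}$ (not as a larger set), which is what guarantees the equality $N(\{a\})^{\I'} = \{a\}^\I$ in the inductive lemma and also what gives the non-emptiness needed in item (2). Everything else is a straightforward syntactic induction. Note that the converse directions do not hold in general, since $N(\T)$ admits models in which $N_a$ is interpreted as a non-singleton or empty set; this is exactly why the proposition is phrased as a sufficient condition rather than an equivalence.
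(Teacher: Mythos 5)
Your proof is correct. The paper does not actually prove this proposition itself---it is imported verbatim from Kazakov et al.\ (Lemma~5 and Corollary~2 of~\cite{Kazakov14})---but your contrapositive argument, expanding a model $\I$ of $\T$ to a model $\I'$ of $N(\T)$ by interpreting each fresh name $N_a$ as the singleton $\{a^{\I}\}$ and establishing $N(C)^{\I'}=C^{\I}$ by structural induction, is exactly the standard argument behind the cited result, including the correct observation that the converse fails because $N(\T)$ also has models where $N_a$ is not a singleton.
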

\nd 
The converse of Proposition~\ref{propelo} does not hold in general but holds for nominal safe  \ELObot.
\begin{proposition}[\cite{Kazakov14}, Theorem 4]\label{propeloB}
Let $\T$ be a nominal safe \ELObot ontology and  $\alpha$ a safe \ELObot~axiom that do not contain atomic concepts of the form $N_a$. Then
\begin{enumerate}
\item if $N(\T) \not\models N_a \subs \bot$ for all $a$ then  $\T$ is satisfiable;  
\item if $\T \models \alpha$ then $N(\T) \models N(\alpha)$.\qed
\end{enumerate}
\end{proposition}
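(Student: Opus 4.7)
The plan is to establish, for nominal safe \ELObot, a converse to Proposition~\ref{propelo}: the encoding $N(\cdot)$ is complete for safe axioms, essentially because the restricted occurrences of nominals (only immediately under existentials, or as the LHS of an inclusion) let us ``collapse'' each $N_a^{\I'}$ to a single element without violating any axiom. Both clauses are proved by constructing, from a model of $N(\T)$, a model of $\T$ that reflects the relevant (non)entailment.

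For clause (1), suppose $N(\T)\nentails N_a\subs\bot$ for every individual $a$ occurring in $\T$. For each such $a$, fix a model $\I_a$ of $N(\T)$ with $N_a^{\I_a}\neq\emptyset$; since \ELObot~is a positive (Horn-like) DL, the disjoint union $\I'=\bigsqcup_a \I_a$ is again a model of $N(\T)$ in which every $N_a^{\I'}\neq\emptyset$. Next, take the quotient $\I''$ of $\I'$ under the smallest equivalence relation that identifies all elements of $N_a^{\I'}$ for every $a$ (closing under transitivity in case different $N_a^{\I'}$ overlap). Because \ELObot~has no negation or disjunction, surjective homomorphisms preserve satisfaction of GCIs, so $\I''\sat N(\T)$; moreover each image of $N_a$ is now a singleton $\{x_a\}$. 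Finally, define $\I$ on the same domain with the same interpretations of non-$N_a$ atomic concepts and of roles, set $a^\I := x_a$, and discard the auxiliary names $N_a$.

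The core of the verification is a structural induction establishing $C^\I = N(C)^{\I''}$ for every safe \ELObot~concept $C$ free of the $N_a$: the case $\exists r.\{a\}$ is immediate since $N_a^{\I''}=\{x_a\}=\{a^\I\}$; conjunction, and existentials $\exists r.D$ for safe $D$, pass through by induction, because nominal safeness forces every nominal to sit directly under an existential, so translation and quotient commute. From this correspondence, satisfaction of every safe GCI $C\subs D\in\T$ follows from $N(C)\subs N(D)\in N(\T)$ being satisfied by $\I''$; and every axiom $\{a\}\subs C\in\T$ holds in $\I$ because $a^\I=x_a\in N_a^{\I''}\subseteq N(C)^{\I''}=C^\I$. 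Hence $\I\sat\T$. For clause (2), argue by contrapositive: given a model $\I_0$ of $N(\T)$ falsifying $N(\alpha)=N(C)\subs N(D)$ at some element $x$, and assuming $\T$ is satisfiable (otherwise the statement is handled separately via clause (1) and Proposition~\ref{propelo}.2), form the disjoint union of $\I_0$ with witnesses of the nonemptiness of each $N_a$. The union still models $N(\T)$, still falsifies $N(\alpha)$ at $x$ (disjoint unions preserve positive concept memberships), and now fulfils the hypothesis of clause (1); applying the clause (1) construction yields a model $\I$ of $\T$ and the concept-level correspondence transfers the counterexample, giving $x\in C^\I\setminus D^\I$, whence $\T\nentails\alpha$.

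The main obstacle is the quotient step together with the concept-level induction: one must check both that collapsing $N_a^{\I'}$ to a point does not break any GCI of $N(\T)$, which hinges on \ELObot's positivity, and that the translation commutes with the quotient, which hinges on nominal safeness. Without nominal safeness, a nominal could appear in a context (\eg, as a conjunct of a compound LHS) whose semantics genuinely depends on the singleton nature of $\{a\}$, and the quotient would no longer be sound; this is exactly why Proposition~\ref{propelo} cannot be strengthened to unrestricted \ELObot.
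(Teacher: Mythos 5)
Your overall strategy (build a model of $\T$ from a model of $N(\T)$ by collapsing each $N_a$ to a point) is a reasonable plan, but the quotient step contains a genuine error, and it is the heart of the argument. Surjective homomorphisms do \emph{not} preserve satisfaction of GCIs, even in a purely positive logic: a homomorphism preserves membership of an element in an $\mathcal{EL}$ concept in the forward direction, but a GCI $C \subs D$ is a universally quantified implication, and the quotient can create \emph{new} instances of the left-hand side $C$ that have no preimage satisfying $C$. Concretely, take $\T = \{\{a\} \subs \top,\ A \dlAnd B \subs \bot\}$, which is nominal safe, so $N(\T) = \{N_a \subs \top,\ A \dlAnd B \subs \bot\}$ and $N(\T) \nentails N_a \subs \bot$. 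The interpretation $\I'$ with domain $\{u,v\}$, $N_a^{\I'}=\{u,v\}$, $A^{\I'}=\{u\}$, $B^{\I'}=\{v\}$ is a model of $N(\T)$, but identifying $u$ and $v$ produces a single point lying in $A \dlAnd B$, so the quotient violates $A \dlAnd B \subs \bot$ and is not a model of $N(\T)$. Hence you cannot start from \emph{arbitrary} models $\I_a$; you must start from a model in which the elements of each $N_a^{\I'}$ already agree on every relevant concept membership --- in effect a canonical (minimal) model of $N(\T)$, in which $N_a$ is realised by a single designated element satisfying exactly the entailed atoms. That is where the real work, and the real use of nominal safety, lies, and it is missing from your argument; the claim that positivity of the logic makes quotients sound is simply false.

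The same problem resurfaces in your proof of clause (2). Even granting that the quotient preserved models, the counterexample element $x$ with $x \in N(C)^{\I'} \setminus N(D)^{\I'}$ is sent by the quotient map to an element that can only \emph{gain} positive memberships; in particular its image may well lie in $N(D)^{\I''}$ (for instance if $x$ belongs to some $N_a^{\I'}$ and is merged with an element of $N(D)^{\I'}$), which destroys the countermodel. Negative facts such as $x \notin N(D)^{\I'}$ are not preserved under homomorphic images, so the phrase ``the concept-level correspondence transfers the counterexample'' hides exactly the step that can fail. Both gaps are repaired simultaneously by working from the outset with a canonical model of $N(\T)$ whose $N_a$-elements are uniquely determined and minimal; note that the paper itself does not reprove this proposition but imports it from Kazakov et al., where such a canonical-model argument is carried out.
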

\nd Note that Proposition~\ref{propeloB} fails if the use of nominals is not safe.
\begin{example}[\cite{Kazakov14}, Remark 2] \label{nosafe}
Consider 
\[
\T = \{A \subs \{a\}, B \subs \{a \}, A \subs \exists r.B \} \ .
\]
\nd It is easily verified that $\T$ is satisfiable and that $\T \models A \subs B$. However, for
\[
N(\T) = \{A \subs N_a, B \subs N_a, A \subs \exists r.B \} 
\]
\nd we have that $N(\T)$ is satisfiable, $N(A\subs B) = A \subs B$, but $N(\T) \not \models A \subs B$.
\qed
\end{example}
%
\subsection{Rational Closure in \ALC}\label{alc}
\nd We briefly recap RC for the DL \ALC~(see, \eg~\cite{BritzEtAl2015a}), which in turn is based on its original formulation for Propositional Logic \cite{LehmannMagidor1992}. 

The DL \ALC~is the DL $\ELbot$ extended with concept negation, \ie~concept expressions of the form $\neg C$ and semantics
$(\neg C)^\I  =  \Delta^\I \setminus C^\I$. Note that by using the negation $\neg$ and the conjunction $\dlAnd$ we can introduce also, \eg~the disjunction $\dlOr$, \ie~$C \dlOr D$ is a macro for $\neg (\neg C \dlAnd \neg D$), that is, it is interpreted as $(C \dlOr D)^\I  = C^\I \cup D^\I$.

A \emph{defeasible GCI} axiom is of the form $C\usually D$, that is read as `Typically, an instance of $C$ is also an instance of $D$'. We extend ontologies  with a $DBox$ $\D$, \ie~a finite set of defeasible GCIs and denote an ontology as $\KB=\tuple{\T,\D}$, where $\T$ is a TBox and $\D$ is a DBox.
\begin{example}\label{ex02}
We can  formalise the information in Example \ref{ex01} in \ELbot~and, thus, in \alc~with the following ontology $\KB=\tuple{\T,\D}$, with\footnote{The acronyms stand for: 
\dl{BRBC} - Bovine Red Blood Cells;
\dl{MRBC} - Mammalian Red Blood Cells;
\dl{ARBC} - Avian Red Blood Cells;
\dl{VRBC} - Vertebrate Red Blood Cells;
\dl{hasN} - has Nucleus;
\dl{hasCM} - has Cell Membrane.
}

\begin{eqnarray*}
\T  & = \{ &\dl{BRBC}\subs\dl{MRBC}, \\
&& \dl{ARBC}\subs\dl{VRBC}, \\
&& \dl{MRBC}\subs\dl{VRBC}, \\
&& \exists\dl{hasN}.\top\dlAnd \dl{NotN}\subs\bot \ \ \} \\ \\
 \D & = \{ & \dl{VRBC}\usually\exists\dl{hasCM}.\top,\\
 &&  \dl{VRBC}\usually\exists\dl{hasN}.\top,\\
 && \dl{MRBC}\usually \dl{NotN} \  \ \} \ . 
\end{eqnarray*}\qed
\end{example}
\nd Given a KB $\KB=\tuple{\T,\D}$,  RC satisfies some basic desiderata:  the axioms in $\T$ and $\D$ are included into the set of the derivable axioms, that moreover  is closed under  the following  properties.
\[
\begin{array}{ccccc}
(\text{Ref})&C \dsrel C&&(\text{\small LLE})&{\displaystyle \frac{\entails C= D,\ C \dsrel E}{D \dsrel E}}\\
\\
(\text{And})&{\displaystyle \frac{C \dsrel D,\ C \dsrel E}{C \dsrel D \dlAnd E}}&&(\text{Or})&{\displaystyle \frac{C \dsrel E,\ D \dsrel E}{C \dlOr D \dsrel E}}\\
\\
(\text{\small RW})&{\displaystyle \frac{C \dsrel D,\ \entails D \subs E}{C \dsrel E}}&&(\text{\small CM})&{\displaystyle \frac{C \dsrel D,\ C \dsrel E}{C\dlAnd D \dsrel E}}\\
\\
\text{({\small RM})}&{\displaystyle \frac{C\dsrel E, \ C\nusually\lnot D}{C\dlAnd D\dsrel E}}
\end{array}
\]
\nd Reflexivity (Ref), Left Logical Equivalence (LLE), Right Conjunction (And), Left Disjunction (Or), and Right Weakening (RW) are all properties that correspond to well-known properties of the classical subsumption relation $\subs$. Cautious Monotonicity (CM) and Rational Monotonicity (RM) are constrained forms of Monotonicity that are useful and desirable in modelling defeasible reasoning. (CM) guarantees that our inferences are cumulative, that is, whatever we can conclude about typical $C$s (\eg~that they are in $D$), we can add such information to $C$ ($C\dlAnd D$) and still derive all the information associated to typical $C$s ($C\dlAnd D \dsrel E$). The stronger principle (RM) is necessary to model the principle of \emph{presumption of typicality}, that is,
if  the typical elements of a class $C$ satisfy certain properties (\eg~$E$) and we are  not informed that the typical elements of $C$ do  not satisfy the properties of $D$ ($C\nusually \lnot D$), then we can assume that the typical elements of  $C\dlAnd D$ satisfy all the properties characterising the typical elements of $C$ ($C\dlAnd D\dsrel E$). 
%
%
We refer to~\cite{KrausEtAl1990,Makinson1994a}  for a deeper explanation of the meaning of such properties and why they are desirable for modelling defeasible reasoning.

RC is a form of inferential closure that satisfies all the  properties above; it is based on the semantic notion of ranked interpretation and on the directly connected notion of ranked entailment, which we illustrate next.
\begin{definition}[Ranked interpretation]\label{rankedint}
A \emph{ranked interpretation} is a triple $R=\tuple{\Delta^R,\cdot^R,\prec^R}$, where $\Delta^R$ and $\cdot^R$ are as in the classical DL interpretations, while $\prec^R$ is a modular preference relation over $\Delta^R$, that is, a strict partial order satisfying the following property:
\begin{description}
%
\item[\emph{Modularity}:] $\prec^R$ is modular if and only if there is a ranking function $\rk:\Delta^R\longrightarrow \mathbb{N}$ \st\
for every $o,p\in \Delta^R$, $o\prec^R p$ iff $\rk(o)< \rk(p)$. \qed
\end{description}
\end{definition}
%
%
%
%
%
%
\nd In the definition above, $o\prec^R p$ means that the object $o$ is considered  more typical than the object $p$. 
The order $\prec^R$ allows us to partition the domain $\Delta^R$ of a ranked interpretation $R$ into a sequence of \emph{layers},
$\tuple{L^R_0,\ldots,L^R_n,\dots} \ ,\label{layers}$ where for every object $o$, $o\in L^R_0$ iff $o\in\min_{\prec^R}(\Delta^R)$ and $o\in L^R_{i+1}$ iff $o\in\min_{\prec^R}(\Delta^R\setminus\bigcup_{0\leq j\leq i} L^R_j)$.\footnote{Given a set $X$ and the  order $\prec$ defined over $X$, $\min_{\prec}(X)=\{x\in X\mid \not\exists y\in X\ \text{\st~} y\prec x\}$} From this partition, we can define the \emph{height} of an individual $a$ as
\[
h_R(a)=i\text{ iff }a^R\in L^R_i \ .
\]
The lower the height, the more typical  the individual in the interpretation is taken to be. We can also extend this to a level of typicality for  concepts: the \emph{height} of a concept $C$ in an interpretation $R$, $h_R(C)$, as the lowest (most typical) layer in which the concept's  extension is non-empty: \ie
\[
h_R(C)=\min\{i\mid (C^R)\cap L^R_i\neq\emptyset\} \ .
\]
\nd If in a model $R$ there is no individual satisfying a concept $C$, we set $h_R(C)=\infty$.
\begin{definition}[Ranked model]\label{rankedsat}
An interpretation $R=\tuple{\Delta^R,\cdot^R,\prec^R}$ \emph{satisfies} (is \emph{a model of}) $C\subs D$ (denoted $R\sat C\subs D$) iff $C^R\subseteq D^R$, and \emph{satisfies} (is \emph{a model of}) $C\usually D$ iff $\min_{\prec^R}(C^R)\subseteq D^R$ (denoted $R\sat C\usually D$). $R$ \emph{satisfies} (is \emph{a model of})
 $\KB=\tuple{\T,\D}$ iff $R\sat\alpha$ for all  axioms $\alpha\in\T\cup\D$.
 \qed
\end{definition}
\nd Hence, $C\usually D$ is satisfied by $R$ iff all the most typical individuals in $C^R$ are also in $D^R$.
We  say that two ontologies are \emph{rank equivalent} iff they are satisfied by exactly the same ranked models, and that an ontology is \emph{rank satisfiable} iff there is at least a ranked model that satisfies it.
\begin{remark}\label{remark_strict}
Note that from the above definition of the satisfiability of an axiom $C\usually D$ 
we obtain the following correspondence: for every ranked model $R$,
\[
R\sat C\subs \bot \text{ iff } R\sat C\usually \bot \ .
\]
This allows for the translation of every classical axiom $C\subs D$ into a defeasible axiom $C\dlAnd\neg D\usually \bot$. Note also that such a translation is not feasible in \ELObot, as  $\neg$ is not supported in \ELbot.
\qed
\end{remark}
\nd Now, the definition of ranked entailment follows directly from the notion of a ranked model. So, let $\R^\KB$ be the class of the ranked models of an ontology $\KB$.
\begin{definition}[Ranked Entailment]\label{rankedent}
Given an ontology $\KB=\tuple{\T,\D}$ and a defeasible axiom $C\usually D$, \KB~\emph{rationally entails} $C\usually D$ (denoted $\KB\entails_\R C\usually D$) iff $\forall R\in\R^\KB\hspace{-0.1cm},\ R\sat C\usually D$.
\qed
\end{definition}
\nd The main drawbacks of ranked entailment are that it is too weak from the inferential point of view and does not satisfy the (RM) property \cite{BritzEtAl2015a,LehmannMagidor1992}. RC is a kind of entailment that extends Ranked Entailment, allowing us to overcome  these limitations.
It is based on a notion of \emph{exceptionality} that is built on Ranked Entailment.
\begin{definition}[Exceptionality]\label{exceptionality}
A concept $C$ is \emph{exceptional} \wrt~an ontology $\KB=\tuple{\T,\D}$~iff $\KB\entails_\R~\top\usually\neg C$. That is to say, $C$ is \emph{exceptional} \wrt~$\KB$ iff, for every ranked model $R\in\R^\KB$,

\[
C^{R}\cap\min_{\prec^R}(\Delta^{R})=\emptyset \ .
\]
\nd An axiom $C\usually D$ is \emph{exceptional} \wrt~\KB~iff $C$ is exceptional.
\qed
\end{definition}
\nd Intuitively, a concept is exceptional \wrt~an ontology iff it is not possible to have it satisfied by any typical individual (\ie, an individual in the layer $0$, that corresponds to $\min_{\prec^R}(\Delta^{R})$) in any ranked model of the ontology. Iteratively applied, the notion of exceptionality allows to associate to every concept $C$ a \emph{rank value} \wrt~an ontology $\KB=\tuple{\T,\D}$~in the following way (called \emph{RC ranking procedure}).

\begin{enumerate} \label{classicalRC}
\item A concept $C$ has rank $0$ ($r_\KB(C)=0$) iff it is not exceptional \wrt~\KB~(that is, $h_R(C)=0$ for some model $R$ of \KB). In this case we set $r_\KB(C\usually D)=0$ for every defeasible axiom having $C$ as antecedent. The set of the axioms in $\D$ with rank $0$ is denoted as $\D_0$.



\item For $i>0$, $C$ has rank $i$ iff it does not have rank $i-1$ and it is not exceptional
wrt~$\KB^i = \tuple{\T,\D\setminus\bigcup_{j=0}^{i-1}\D_{j}}$.
If $r_\KB(C)=i$, then we set $r_\KB(C\usually D)=i$. The set of the axioms in $\D$ with rank $i$ is denoted as $\D_i$.

\item By iterating the previous step a finite number of times, we finally reach a (possibly empty) subset $\E \subseteq\D$ \st~all the axioms in $\E$ are exceptional w.r.t. $\tuple{\T,\E}$.\footnote{Since $\D$ is finite, we must reach such a point.} If $\E\neq\emptyset$ we define the rank value of the axioms in $\E$ as $\infty$, and the set $\E$ is denoted as $\D_\infty$.
\end{enumerate}
As a consequence, according to the procedure above, \D~can be partitioned into a finite sequence 
$\langle \D_0, \ldots$, $\D_n,\D_\infty \rangle$ ($n\geq 0$), where $\D_\infty$ may be possibly empty. This semantic procedure allows us to give a rank value to every concept and every defeasible subsumption. Using the rank values, we can define the notion of RC as follows:
\begin{definition}[Rational Closure]\label{RCdef}
$C\usually D$ is in the RC of an ontology $\KB$ iff
\[
r_\KB(C\dlAnd D) < r_\KB(C\dlAnd\neg D) \mbox{ or } r_\KB(C)=\infty  \ .
\]
\qed
\end{definition}
\nd Informally, the above definition says that $C\usually D$ is in the rational closure of $\KB$ if the most typical instances of $C$ happen to be all instances of $C\dlAnd D$, and not  of $C\dlAnd \neg D$. 

Given an ontology $\KB=\tuple{\T,\D}$, distinct ways of defining models of $\KB$ characterising its RC can be found in \cite{BritzEtAl2015a} (summarised also here in~\ref{canmod} and presented also in \cite[Section 4.1]{Moodley15}) and in~\cite{Giordano15}.
Both such kinds of models can be described as \emph{minimal} models of the ontology $\tuple{\T,\D}$. Paraphrasing Definition 23 in~\cite{Giordano15}, we can define a preference relation $<$ among ranked interpretations in the following way.

\begin{definition}[\cite{Giordano15}, Definition 23]\label{def_minimal_giordano}
Let $R=\tuple{\Delta^R,\cdot^R,\prec^R}$ and $R'=\tuple{\Delta^{R'},\cdot^{R'},\prec^{R'}}$ be two ranked interpretations s.t. $\Delta^{R}=\Delta^{R'}$ and $C^{R}=C^{R'}$ for every concept $C$. $R$ is preferred to $R'$ ($R<R'$) iff for every $x\in\Delta^R$, $h_R(x)\leq h_{R'}(x)$, and there is a $y\in\Delta^R$ s.t. $h_R(y)< h_{R'}(y)$. An interpretation $R$ is minimal \wrt~an ontology $\KB$ if it is a model of $\KB$ and there is no model $R'$ of $\KB$ s.t. $R'<R$.
\end{definition}

The reason behind the use of minimal models in characterising RC is in the direct connection between minimality and the \emph{presumption of typicality}: in minimal models we \emph{maximise} the amount of typicality for each individual in the domain, modulo the satisfaction of the ontology. We will go back to the role of minimality in Section \ref{rcelo}.

The type of reasoning we are primarily interested in modelling is subsumption checking in \alc~under RC, that is, deciding whether a defeasible subsumption $C\usually D$ is or is not a consequence under RC of an ontology $\tuple{\T,\D}$. In~\cite{BritzEtAl2015a} a detailed decision procedure for subsumption checking in \alc~under RC is described, which we recap here.\footnote{The procedure is based on the one by Casini and Straccia~\cite{CasiniStraccia2010} and paired with a proper semantics; the latter needed to be  modified slightly since it does not always give back the expected result in case $\D^r_\infty\neq\emptyset$. The procedure presented in~\cite{BritzEtAl2015a} have been presented (and peer-reviewed) also in~\cite{Moodley15}.} This will be useful, as our subsumption decision procedure for defeasible \ELbot~will be a variant of it. 
The key step in translating the semantic procedure into a correspondent one, based on classical \alc~decision steps, is given by the following proposition.
\begin{proposition}\label{except}
For every concept $C$ and every ontology $\KB=\tuple{\T,\D}$,~if
\begin{equation}\label{cexcet}
\T \models \bigsqcap \{ \neg E \dlOr F \mid E \usually F \in \D\} \subs \neg C \ 
\end{equation}
then $C$ is exceptional \wrt~$\KB$.
\qed
\end{proposition}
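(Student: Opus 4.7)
The plan is to unfold the definition of exceptionality and exploit the materialisation $\bigsqcap\{\neg E \dlOr F \mid E \usually F \in \D\}$ as a classical concept that every top-layer element of a ranked model must satisfy. So let $\mu \abbrev \bigsqcap\{\neg E \dlOr F \mid E \usually F \in \D\}$ and assume $\T \models \mu \subs \neg C$. To show $C$ is exceptional w.r.t.\ $\KB=\tuple{\T,\D}$, fix an arbitrary ranked model $R \in \R^\KB$ and an arbitrary $o \in \min_{\prec^R}(\Delta^R)$; it suffices to prove $o \notin C^R$.

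The key observation is that $o \in \mu^R$. Indeed, take any defeasible axiom $E \usually F \in \D$. If $o \notin E^R$ then trivially $o \in (\neg E \dlOr F)^R$. Otherwise $o \in E^R$, and since $o$ already lies in the bottom layer $L^R_0$, nothing in $\Delta^R$ is strictly $\prec^R$-below $o$; hence $o \in \min_{\prec^R}(E^R)$. Because $R$ is a ranked model of $\KB$, it satisfies $E \usually F$, which by Definition \ref{rankedsat} means $\min_{\prec^R}(E^R) \subseteq F^R$, so $o \in F^R \subseteq (\neg E \dlOr F)^R$. Taking the intersection over all defeasible axioms yields $o \in \mu^R$.

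Now $R$, when stripped of its preference relation, is a classical \alc~interpretation that satisfies every axiom of $\T$, so the hypothesis $\T \models \mu \subs \neg C$ gives $\mu^R \subseteq (\neg C)^R$. Combining with $o \in \mu^R$ we obtain $o \in (\neg C)^R$, i.e.\ $o \notin C^R$. Since $R$ and $o$ were arbitrary, $C^R \cap \min_{\prec^R}(\Delta^R) = \emptyset$ for every $R \in \R^\KB$, which is exactly the semantic characterisation of exceptionality in Definition \ref{exceptionality}.

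The argument is essentially a one-step semantic unfolding; the only subtle point is step three above, where one needs that membership of $o$ in the minimal layer forces $o \in \min_{\prec^R}(E^R)$ for \emph{any} concept $E$ containing $o$. This is where modularity of $\prec^R$ is implicitly used (via the layered decomposition of $\Delta^R$), and is what turns the purely material conjunction $\mu$ into a faithful surrogate for satisfaction of all defeasible axioms at the most typical layer.
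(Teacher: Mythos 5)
Your proof is correct and follows essentially the same route as the paper's: the paper argues contrapositively (assuming a non-exceptional $C$ and deriving a bottom-layer witness of $\mu \dlAnd C$), while you argue directly, but the key step is identical in both --- an object in $\min_{\prec^R}(\Delta^R)$ that belongs to $E^R$ is automatically in $\min_{\prec^R}(E^R)$ and hence in $F^R$, so it satisfies the materialisation. One minor remark: this step needs only that nothing lies strictly below a $\prec^R$-minimal element of the whole domain, so modularity is not actually required there, contrary to your closing aside.
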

\nd By Proposition~\ref{except}, checking exceptionality can be done by using a classical DL reasoner for \alc. Now, 
consider an \alc~defeasible ontology $\KB=\tuple{\T,\D}$ and a defeasible GCI $C\usually D$. In order to decide whether  $C\usually D$ is in the RC of an ontology $\KB$, we perform two steps: the first one is a \emph{ranking} procedure, that transforms the initial ontology $\KB=\tuple{\T,\D}$ into a rank equivalent ontology $\KB^\bullet=\tuple{\T^\bullet,\D^\bullet}$, where $\D^\bullet$ is partitioned into a sequence $\D^\bullet_0,\ldots,\D^\bullet_n$, with each $\D^\bullet_i$ containing the defeasible axioms with rank $i$; the second one uses $\KB^\bullet$ to decide whether an axiom $C\usually D$ is in the RC of $\KB$.

Specifically, define the function $e$ that, given any ontology $\tuple{\T',\D'}$, returns exceptional axioms as
\begin{equation} \label{eqE}
e(\T',\D') = \{ C \usually D \in \D' \mid \T' \models \bigsqcap \{ \neg E \dlOr F \mid E \usually F \in \D'\} \subs \neg C\} \ .
\end{equation}
\nd The function $e$ gives back axioms in $\D'$ that are exceptional \wrt~$\tuple{\T',\D'}$ (see also~\cite[Section 6]{BritzEtAl2015a}).

Now in order to decide whether  $C\usually D$ is in the RC of $\KB$, we execute the following two steps shown below, which we will call {\bf RC.Step 1} and  {\bf RC.Step 2}. Note that {\bf RC.Step 1}  will correspond to procedure  $\mathtt{ComputeRanking}$, while  {\bf RC.Step 2} is encoded in  procedure  $\mathtt{RationalClosure}$, both presented in Section~\ref{ratclosEL} later on. Also,  the execution of {\bf RC.Step 1}, \ie, procedure $\mathtt{ComputeRanking}$, can be followed \eg~in Example~\ref{exinfty}), which also illustrates why Steps 1.1 and 1.2 may need to be repeated more than once to extract all the needed information.
\begin{description} \label{algoRCalc}
\item[ {\bf RC.Step 1}] Let $\T^0 = \T, \D^0 = \D$ and $i=0$. Repeat Steps 1.1 and 1.2 until $\D_i^\infty = \emptyset$.
\begin{description}
\item[Step 1.1]  Given $\tuple{\T^i, \D^i}$, construct the sequence $\E_0, \E_1, \ldots$
\begin{eqnarray*}
\E_0 & = & \D^i \\
\E_{j+1} & = & e(\T^i, \E_j) \ .
\end{eqnarray*}
\nd Since $\D^i$ is finite, the iteration will terminate with (a possibly empty) fixed-point of $e(\T^i, \cdot)$. 
\item[Step 1.2] For a defeasible GCI $E \usually F \in \D^i$, define  the rank of $E \usually F$ and of concept $E$ as
\begin{eqnarray*}
r^i(E \usually F) & = &
\left \{
\begin{array}{ll}
j & \mbox{if } E \usually F \in \E_j \mbox{ and } E \usually F \not\in \E_{j+1} \\
\infty & \mbox{if } E \usually F \in \E_j \mbox{ for all } j \\ 
\end{array}
\right . \\ \\
r^i(E) & = & r^i(E \usually F) \ .
\end{eqnarray*}
\nd Define 
\begin{eqnarray*}
\D^i_j & = & \{ E \usually F \in \D^i \mid r^i(E \usually F) = j \} \ .
\end{eqnarray*}
It follows that $\D^i$ is partitioned into sets $\D^i_0, \ldots, \D^i_m, \D^i_\infty$, for some $m$, with $\D^i_\infty$ possibly empty. Finally, define
\begin{eqnarray*}
\T^{i+1} & = & \T^i \cup \{E \subs \bot \mid E \usually F \in \D^i_\infty \} \\
\D^{i+1} & = & \D^i \setminus \D^i_\infty\ .
\end{eqnarray*}
\end{description}
\nd {\bf RC.Step 1} terminates  after building a sequence of ontologies $\tuple{\T^0,\D^0}, \ldots, \tuple{\T^k,\D^k}$ and ranking functions $r^0, \ldots, r^k$, for some $k \leq |\D|$, once we reach the point where $\D^k_\infty = \emptyset$. Let $r^k$ partition $\D^k$ into $\D^k_1,\ldots,\D^k_n$ for some $n$.
Furthermore, let  $\T^\bullet=\T^k$, $\D^\bullet=\D^k$, and $\D^\bullet_i=\D^k_i$ for every $0\leq i\leq n$.

Once we have applied {\bf RC.Step 1}, Proposition \ref{except} holds also in the opposite direction.
\begin{proposition}\label{except2}
Given an ontology $\KB^\bullet=\tuple{\T^\bullet,\D^\bullet}$, obtained from the application of ${\bf RC.Step 1}$ to an ontology $\KB=\tuple{\T,\D}$, then for every concept $C$,
\[
\T^\bullet \models \bigsqcap \{ \neg E \dlOr F \mid E \usually F \in \D^\bullet\} \subs \neg C 
\]
if and only if  $C$ is exceptional \wrt~$\KB^\bullet$.
\qed
\end{proposition}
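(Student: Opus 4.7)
The forward direction, namely that $\T^\bullet \models \bigsqcap \{ \neg E \dlOr F \mid E \usually F \in \D^\bullet\} \subs \neg C$ implies exceptionality of $C$ w.r.t.~$\KB^\bullet$, is already delivered by Proposition~\ref{except}. So the content of this proposition lies entirely in the converse direction, which I would establish by contraposition. Suppose the classical entailment fails: there exists a classical model $\I \models \T^\bullet$ and an element $x_0 \in \Delta^\I$ with $x_0 \in C^\I$ and $x_0 \in (\neg E \dlOr F)^\I$ for every $E \usually F \in \D^\bullet$. The goal is to build a ranked model $R$ of $\KB^\bullet$ in which $x_0$ lies at rank $0$, so that $C^R \cap \min_{\prec^R}(\Delta^R) \neq \emptyset$ and $C$ is thereby witnessed not to be exceptional w.r.t.~$\KB^\bullet$.

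The construction exploits the partition $\D^\bullet = \D^\bullet_0 \cup \cdots \cup \D^\bullet_n$ delivered by \textbf{RC.Step 1}, together with the hypothesis that the procedure terminated with $\D^\bullet_\infty = \emptyset$. A short induction on $j$ shows that the sequence $\E_0, \E_1, \ldots$ computed in Step~1.1 over $\tuple{\T^\bullet, \D^\bullet}$ satisfies $\E_j = \bigcup_{m \geq j} \D^\bullet_m$. Now for every $E \usually F \in \D^\bullet_k$ we have $E \usually F \in \E_k \setminus \E_{k+1}$, so by the definition of $e$ in~(\ref{eqE}) the entailment $\T^\bullet \models \bigsqcap \{ \neg E' \dlOr F' \mid E' \usually F' \in \bigcup_{m \geq k} \D^\bullet_m \} \subs \neg E$ fails; hence there is a classical model $\J_{k, E \usually F} \models \T^\bullet$ containing an $E$-instance and satisfying $\neg E' \dlOr F'$ for every $E' \usually F' \in \bigcup_{m \geq k} \D^\bullet_m$. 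I would then take $R$ to be the disjoint union of $\I$ and of all the $\J_{k, E \usually F}$, placing $\I$ together with every $\J_{0, \cdot}$ at rank $0$ and each $\J_{k, \cdot}$ at rank $k$ for $k \geq 1$; the preference relation $\prec^R$ is induced by the layer index, which automatically makes it modular.

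Verification that $R \models \KB^\bullet$ then splits into two parts. First, $R \models \T^\bullet$ because every component model does. Second, for each $E \usually F \in \D^\bullet$ belonging to $\D^\bullet_k$, the component $\J_{k, E \usually F}$ places an $E$-instance at rank $k$, so the minimal rank $m$ of $E$-instances in $R$ satisfies $m \leq k$; and every component model placed at rank $m$ satisfies $\neg E \dlOr F$, since $E \usually F \in \bigcup_{j \geq m} \D^\bullet_j$. Consequently $\min_{\prec^R}(E^R) \subseteq F^R$, so $R \sat E \usually F$. Since $x_0$ lies in the rank-$0$ layer and $x_0 \in C^R$, $C$ is not exceptional w.r.t.~$\KB^\bullet$, completing the contrapositive.

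The main obstacle is the identification $\E_j = \bigcup_{m \geq j} \D^\bullet_m$, which is what lets us turn the syntactic non-exceptionality of each $E \usually F \in \D^\bullet_k$ into a classical witness model whose material implications already cover every defeasible axiom at ranks $\geq k$. This identification in turn relies critically on the termination condition $\D^\bullet_\infty = \emptyset$: without it, some defeasible axioms would not receive a finite rank and no finite-layered ranked model satisfying all of $\KB^\bullet$ simultaneously could be assembled.
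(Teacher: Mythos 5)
Your forward direction (citing Proposition~\ref{except}) and your analysis of the sequence $\E_j$ are fine: the identity $\E_j=\bigcup_{m\geq j}\D^\bullet_m$ does hold after {\bf RC.Step 1}, and it does give you, for each $E\usually F\in\D^\bullet_k$, a classical model $\J_{k,E\usually F}\models\T^\bullet$ containing a \emph{single witness element} $y$ with $y\in E^{\J}$ and $y\in(\neg E'\dlOr F')^{\J}$ for all $E'\usually F'\in\bigcup_{m\geq k}\D^\bullet_m$. The gap is in how you assemble these into a ranked model. Placing the \emph{whole} model $\J_{k,\cdot}$ (and the whole of $\I$) at a fixed rank only controls the witness element: the failed subsumption test says nothing about the remaining elements of these models, so your claim that ``every component model placed at rank $m$ satisfies $\neg E\dlOr F$'' is unjustified. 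Concretely, $\I$ may contain an element $z\neq x_0$ with $z\in (E')^{\I}$ and $z\notin (F')^{\I}$ for some $E'\usually F'\in\D^\bullet_1$; in your $R$ this $z$ sits in layer $0$, hence $z\in\min_{\prec^R}((E')^R)\setminus (F')^R$ and $R\not\sat E'\usually F'$, so $R$ is not a model of $\KB^\bullet$ and the contrapositive collapses. You also cannot repair this by demanding models that \emph{globally} satisfy $\top\subs\neg E'\dlOr F'$: that is a strictly stronger requirement than the failure of the subsumption test in the definition of $e$, and does not follow from it.

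The missing idea is to rank \emph{individual elements} rather than whole component models. This is what the paper's proof does: it merges the single classical counter-model $M$ for $C$ with the characteristic ranked model $\rub$ of the rational closure of $\KB^\bullet$ (built in Appendix A), and assigns each $x\in\Delta^M$ the height equal to the least $i$ with $x\in(H_i)^M$, where $H_i$ is as in {\bf RC.Step 2}. Any element violating $\neg E\dlOr F$ for some $E\usually F\in\D^\bullet_i$ thereby lands strictly above layer $i$, while $\rub$ guarantees that the minimal $E$-instances sit exactly at layer $i$ and satisfy $F$; your $x_0$ satisfies $H_0$ and so stays in layer $0$, witnessing non-exceptionality of $C$. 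Your witness models $\J_{k,\cdot}$ could in principle replace the appeal to $\rub$ as the devices anchoring $h_R(E)\leq k$, but only if you apply the same per-element height assignment to them (and to $\I$) as well.
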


\item[{\bf RC.Step 2}] \label{step2}  So, let 
$\T^\bullet = \T^k, \D^\bullet  = \D^k$, $r^\bullet =r^k$, and $\D^\bullet$ be partitioned into $\D^\bullet_0, \ldots, \D^\bullet_n$. 
\begin{description}
\item[Step 2.1] For $0 \leq i \leq n$ define the concept 
\begin{eqnarray*}
H_i & = & \bigsqcap \{\neg E \dlOr F \mid E \usually F \in  \D^\bullet_i \cup \ldots \cup \D^\bullet_n \} \ .
\end{eqnarray*}
\nd Note that if $j < i$ then $\models H_j \subs H_i$. 

\item[Step 2.2] Finally,  given $C \usually D$, let $H_C$ be the first concept $H_i$ of the sequence $H_0, \ldots, H_n$  such that $\T^\bullet \not \models H_i \subs \neg C$. If there is no such $H_i$ let $H_C$ be $\top$.
Then, we say that $C\usually D$ is \emph{derivable} from $\KB$ iff $\T^\bullet \models C \dlAnd H_C \subs D$. 

$C\subs D$ is \emph{derivable} from $\KB$ iff $\T^\bullet \models C\dlAnd\neg D\usually \bot$ (see Remark \ref{remark_strict}). 
\end{description}
\end{description}
\nd With $\KB\ratent C\usually D$ (resp. $\KB\ratent C\subs D$) we will denote that $C\usually D$ (resp. $C\subs D$) is derivable from $\KB$ via {\bf RC.Step 2}.  In~\cite{BritzEtAl2015a,Moodley15} it is shown that {\bf RC.Step 1} is correct \wrt~the semantic definition of ranking, and that {\bf RC.Step 2} is correct \wrt~the semantic definition of RC (Definition \ref{RCdef}). That is, respectively,
\begin{proposition}[\cite{BritzEtAl2015a}, Proposition 7]\label{ranking_correct}
Given an ontology $\KB=\tuple{\T,\D}$ and a concept $C$, then $r_{\KB} (C)=r^\bullet(C)$ holds.
\qed
\end{proposition}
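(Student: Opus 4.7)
The plan is to prove $r_\KB(C) = r^\bullet(C)$ by a two-level induction that mirrors the nested structure of \textbf{RC.Step 1}. The outer induction is on the pass index $i$ of the outer loop, and the inner argument, applied to a fixed pair $\tuple{\T^i, \D^i}$, is on the iteration index $j$ of the $\E_j$ sequence in Step~1.1. The overall strategy is to establish (i) \emph{rank equivalence} of $\tuple{\T^i, \D^i}$ and $\tuple{\T^{i+1}, \D^{i+1}}$ at every outer step, which propagates to $r_\KB(C) = r_{\KB^\bullet}(C)$, and (ii) on the fixed-point ontology $\KB^\bullet$ (where $\D^i_\infty = \emptyset$), the syntactic sequence $\E_0, \E_1, \ldots$ of Step~1.1 reproduces exactly the semantic exceptionality iteration, so that $r^\bullet$ coincides with $r_{\KB^\bullet}$.

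For (i), the key lemma is: if $E \usually F$ belongs to the fixed point $\D^i_\infty$ of $e(\T^i, \cdot)$, then in every ranked model $R$ of $\tuple{\T^i, \D^i}$ we have $E^R = \emptyset$. I would prove this by showing inductively on the iteration of $e$ that each $\E_j$ produced in Step~1.1 forces $\min_{\prec^R}(\Delta^R)$ to avoid $E$, so at the fixed point $E$ has no instance at any finite layer, and hence no instance at all. Consequently, augmenting $\T^i$ with $E \subs \bot$ for every $E \usually F \in \D^i_\infty$ does not eliminate any ranked model, and simultaneously dropping such axioms from the DBox is also model-preserving since their antecedent is now forced to be empty. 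This yields rank equivalence between $\tuple{\T^i, \D^i}$ and $\tuple{\T^{i+1}, \D^{i+1}}$, and hence, by iteration and finiteness of $\D$, between $\KB$ and $\KB^\bullet$.

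For (ii), I would invoke Proposition~\ref{except2}, which upgrades Proposition~\ref{except} to an iff once the $\infty$-rank axioms have been absorbed into the TBox. With this, I would argue by induction on $j$ that $\E_j$ equals the set of axioms in $\D^\bullet$ with semantic rank $\geq j$ with respect to $\KB^\bullet$: the base case $j = 0$ holds since $\E_0 = \D^\bullet$, and the inductive step relies on the observation that ranked models of $\tuple{\T^\bullet, \E_j}$ are exactly the ranked models of $\tuple{\T^\bullet, \D^\bullet}$ with the first $j$ layers stripped off, so that an axiom is semantically exceptional at level $j$ iff it is syntactically exceptional with respect to $\tuple{\T^\bullet, \E_j}$, iff it lies in $\E_{j+1}$. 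For an arbitrary concept $C$ that does not appear as an antecedent in $\D$, I extend the definition of $r^\bullet(C)$ using the $H_i$ concepts of Step~2.1: $r^\bullet(C)$ is the least $i$ with $\T^\bullet \nentails H_i \subs \neg C$, and $\infty$ otherwise. Matching this value to $r_\KB(C)$ proceeds by the same exceptionality argument applied to $\top \usually \neg C$, using that $H_i$ materially encodes the defeasible information of rank $\geq i$.

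The main obstacle is the key lemma in part (i): showing that syntactic fixed-point exceptionality of $E \usually F$ implies $E^R = \emptyset$ in every ranked model, rather than merely that $E$ has no typical element. This requires a careful interaction between the modularity of $\prec^R$ and the classical consequences enforced by $\T^i$, since a naive argument only gives $h_R(E) > 0$. It is precisely the need to iterate this emptying phenomenon up the ranks that justifies the outer loop of \textbf{RC.Step 1} and the iterated absorption of $\infty$-rank axioms into the TBox as $E \subs \bot$.
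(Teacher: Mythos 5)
The paper does not actually prove this proposition: it is imported verbatim as Proposition~7 of~\cite{BritzEtAl2015a}, so the only in-paper material your argument can be measured against is the surrounding machinery (Propositions~\ref{except} and~\ref{except2} and the canonical-model construction of~\ref{canmod}, which is the semantic toolkit the cited proof relies on). Your overall architecture is the right one and is essentially the standard argument: (i) each outer pass of {\bf RC.Step 1} produces a rank-equivalent ontology, because the antecedents of the fixed-point axioms $\D^i_\infty$ are empty in \emph{every} ranked model of $\tuple{\T^i,\D^i}$, so adding $E\subs\bot$ and deleting $E\usually F$ changes nothing semantically; and (ii) on the terminal ontology, Proposition~\ref{except2} (applied at each level $j$, not only at $j=0$ --- this needs to be said, since as stated it concerns exceptionality \wrt~$\tuple{\T^\bullet,\D^\bullet}$ only) turns the inner $e$-iteration into exactly the semantic exceptionality iteration. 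Your extension of $r^\bullet$ to arbitrary concepts via the $H_i$ is also the intended reading. One small overclaim: the ranked models of $\tuple{\T^\bullet,\E_j}$ are a strict superset of, not ``exactly'', the truncations of models of $\tuple{\T^\bullet,\D^\bullet}$; what the induction actually needs is only that $\tuple{\T^\bullet,\E_j}$ is rank-equivalent to the ontology $\KB^j$ used by the semantic procedure, which follows from (i) applied level by level.

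The genuine gap is the one you name yourself and then leave open: the key lemma of part (i). As written, your inner induction (``each $\E_j$ forces $\min_{\prec^R}(\Delta^R)$ to avoid $E$'') only yields $h_R(E)\geq 1$ for antecedents of $\E_j$-axioms, from which ``no instance at any finite layer'' does not follow; without the lemma the whole reduction collapses. The missing idea is to strengthen the invariant to: for every ranked model $R$ of $\tuple{\T^i,\D^i}$ and every $E\usually F\in\E_j$, one has $h_R(E)\geq j$. For the inductive step, suppose $h_R(E)=m\leq j$ and pick $o\in E^R$ of height $m$; for each $G\usually H\in\E_j$ the induction hypothesis gives $h_R(G)\geq j\geq m$, so by modularity either $o\notin G^R$ or $o\in\min_{\prec^R}(G^R)\subseteq H^R$, hence $o$ satisfies $\bigsqcap\{\neg G\dlOr H\mid G\usually H\in\E_j\}$, contradicting $\T^i\models\bigsqcap\{\neg G\dlOr H\mid G\usually H\in\E_j\}\subs\neg E$ for $E\usually F\in\E_{j+1}$. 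Since the sequence $\E_0,\E_1,\ldots$ is eventually constant and equal to $\D^i_\infty$, every antecedent of a fixed-point axiom satisfies $h_R(E)\geq j$ for \emph{all} $j$, i.e.\ $E^R=\emptyset$, which is the lemma. Note that this argument is confined to a single element $o$ and its layer; it cannot see through role successors (e.g.\ that $\exists r.A$ is empty once $A$ is), which is the actual reason the outer loop and the absorption of $E\subs\bot$ into the TBox are needed --- your closing sentence attributes the outer loop to the wrong phenomenon.
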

\begin{proposition}[\cite{BritzEtAl2015a}, Theorem 5]\label{mainratio}
Given an ontology $\KB=\tuple{\T,\D}$, and concepts $C,D$, then $C\usually D$ is in the RC of $\KB$ iff $\KB\ratent C\usually D$.
\qed
\end{proposition}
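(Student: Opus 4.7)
The plan is to leverage the two preceding results in sequence: Proposition~\ref{ranking_correct}, which guarantees that the procedural ranks $r^\bullet$ computed by {\bf RC.Step 1} coincide with the semantic ranks $r_\KB$, and Proposition~\ref{except2}, which provides an exact syntactic characterisation of exceptionality in $\KB^\bullet=\tuple{\T^\bullet,\D^\bullet}$ through the materialisation concepts $H_i$. With these in hand, proving the biconditional reduces to translating the semantic condition in Definition~\ref{RCdef} into the subsumption test of {\bf RC.Step 2}.

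First I would dispose of the degenerate case $r_\KB(C)=\infty$. By Proposition~\ref{ranking_correct} also $r^\bullet(C)=\infty$, hence by Proposition~\ref{except2} we have $\T^\bullet \models H_i\subs\neg C$ for every $i=0,\ldots,n$, so Step~2.2 sets $H_C=\top$. Moreover, since {\bf RC.Step 1} terminates with $\D^\bullet_\infty=\emptyset$, every concept of infinite rank must be strictly unsatisfiable w.r.t.\ $\T^\bullet$, so $\T^\bullet\models C\subs D$ vacuously and both sides of the biconditional hold simultaneously.

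For the principal case $r_\KB(C)=i<\infty$, I would set $j=r_\KB(C\dlAnd D)$ and $k=r_\KB(C\dlAnd\neg D)$. Since $C$ is the disjunction of $C\dlAnd D$ and $C\dlAnd\neg D$, the minimum of $j$ and $k$ must equal $i$. The key observation, delivered by Proposition~\ref{except2}, is that $k>i$ iff $\T^\bullet\models H_i\subs\neg(C\dlAnd\neg D)$, which is in turn equivalent to $\T^\bullet\models C\dlAnd H_i\subs D$. Combined with the fact that, because $r_\KB(C)=i$, the concept $H_C$ selected in Step~2.2 coincides with $H_i$ (Proposition~\ref{except2} again: $H_i$ is the first element of the sequence $H_0,\ldots,H_n$ for which $\T^\bullet\not\models H_i\subs\neg C$), a short case analysis on whether $j<k$, $j>k$, or $j=k$ shows that $\T^\bullet\models C\dlAnd H_C\subs D$ iff $j<k$, which is precisely the semantic condition for $C\usually D$ to belong to the RC of $\KB$. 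The classical-subsumption clause for strict axioms $C\subs D$ in Step~2.2 is handled by Remark~\ref{remark_strict}, translating it into the already-treated defeasible case $C\dlAnd\neg D\usually\bot$.

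The main obstacle is the boundary case $j=k=i$, where one must argue that $\T^\bullet\not\models C\dlAnd H_i\subs D$: this requires noting that non-exceptionality of $C\dlAnd\neg D$ at level $i$ yields, via Proposition~\ref{except2}, a classical model of $\T^\bullet$ witnessing the joint satisfiability of $C$, $H_i$, and $\neg D$, hence blocking the subsumption. The other subtle point is making sure that the $H_C=\top$ convention in the $r_\KB(C)=\infty$ branch does not produce spurious derivations; this is what the emptiness of $\D^\bullet_\infty$ after {\bf RC.Step 1} buys us, since it forces unsatisfiability of $C$ w.r.t.\ $\T^\bullet$ rather than a merely ``typical'' infinite rank.
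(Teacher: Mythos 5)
You should first note that the paper does not prove this statement at all: it is imported verbatim as Theorem~5 of \cite{BritzEtAl2015a} (the text only says that the correctness of {\bf RC.Step 2} ``is shown'' there), so there is no in-paper proof to match yours against. Your reconstruction from Proposition~\ref{ranking_correct} and Proposition~\ref{except2} is the natural one and is essentially sound: the reduction of Definition~\ref{RCdef} to the test $\T^\bullet \models C \dlAnd H_C \subs D$ via $\min\{r(C\dlAnd D), r(C\dlAnd\neg D)\} = r(C)$ and the equivalence $\T^\bullet\models H_i\subs\neg(C\dlAnd\neg D)$ iff $\T^\bullet\models C\dlAnd H_i\subs D$ is exactly the right skeleton, and your handling of the $r_\KB(C)=\infty$ branch via $\T^\bullet\models C\subs\bot$ (which follows from $\D^\bullet_\infty=\emptyset$ and Proposition~\ref{except2} applied with an empty DBox) is correct.

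The one point you should make explicit rather than leave implicit is that Proposition~\ref{except2}, as stated, only characterises exceptionality \wrt~$\KB^\bullet=\tuple{\T^\bullet,\D^\bullet}$, i.e.\ it only covers the test $\T^\bullet\models H_0\subs\neg X$. You invoke it three times at level $i>0$: to identify $H_C$ with $H_{r(C)}$, to translate $k>i$ into $\T^\bullet\models H_i\subs\neg(C\dlAnd\neg D)$, and to produce the countermodel in the boundary case $j=k=i$. Each of these needs the level-$i$ analogue: $X$ is exceptional \wrt~$\tuple{\T^\bullet,\D^\bullet_i\cup\ldots\cup\D^\bullet_n}$ iff $\T^\bullet\models H_i\subs\neg X$. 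This does hold, but it requires the extra observation that each $\tuple{\T^\bullet,\D^\bullet_i\cup\ldots\cup\D^\bullet_n}$ is itself already a fixed point of {\bf RC.Step 1} (its own $\D_\infty$ is empty, being contained in the empty fixed point computed from $\D^\bullet$), so that the hypothesis of Proposition~\ref{except2} is met; without that remark the ``only if'' directions you rely on are not licensed by anything stated in the paper. You also need, and tacitly use, that $r_\KB(C\dlAnd D)$ and $r_\KB(C\dlAnd\neg D)$ in Definition~\ref{RCdef} coincide with the corresponding ranks computed over $\KB^\bullet$; Proposition~\ref{ranking_correct} gives this for arbitrary concepts, so citing it once more at that point closes the argument.
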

\begin{remark}\label{remalcrc}
Note that an indispensable requirement of the above described defeasible subsumption procedure for \alc~under RC is to have a classical DL subsumption decision procedure supporting the empty concept, concept conjunction, 
negation and disjunction.
\qed
\end{remark}
\nd From~\cite{BritzEtAl2015a}, the following propositions are immediate. 
\begin{proposition}\label{classgci}
A classical GCI $C\subs D$ is in the RC of $\KB= \tuple{\T,\D}$ iff $\T^\bullet \models C \subs D$, where $\T^\bullet$ has been computed using 
{\bf RC.Step 1}.
\qed
\end{proposition}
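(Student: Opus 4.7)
The plan is to translate the classical GCI $C\subs D$ into the defeasible GCI $C\dlAnd\neg D \usually \bot$ via Remark~\ref{remark_strict}, then apply Proposition~\ref{mainratio} to the translated GCI, and finally close the argument with a short case analysis on the concept $H_{C\dlAnd\neg D}$ constructed by \textbf{RC.Step 2}.

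First, Remark~\ref{remark_strict} states that $R\sat C\subs D$ iff $R\sat C\dlAnd\neg D\usually\bot$ in every ranked model $R$. Since RC is characterised through a class of (minimal) ranked models of $\KB$, this pointwise equivalence lifts to RC-entailment: the classical GCI $C\subs D$ is in the RC of $\KB$ iff the defeasible GCI $C\dlAnd\neg D \usually \bot$ is in the RC of $\KB$. Applying Proposition~\ref{mainratio} to the latter and unfolding \textbf{RC.Step 2} for the defeasible GCI with antecedent $C\dlAnd\neg D$ and consequent $\bot$, we obtain that $C\subs D$ is in the RC of $\KB$ iff
\[
\T^\bullet \models (C\dlAnd\neg D)\dlAnd H_{C\dlAnd\neg D} \subs \bot .
\]

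It remains to show that this condition is equivalent to $\T^\bullet \models C\subs D$, that is, to $\T^\bullet \models C\dlAnd\neg D \subs \bot$. The direction $(\Leftarrow)$ is immediate, since conjoining with $H_{C\dlAnd\neg D}$ on the left can only strengthen the subsumed side. For $(\Rightarrow)$, I would split on the form of $H_{C\dlAnd\neg D}$. If $H_{C\dlAnd\neg D}=H_i$ for some $i\leq n$, then by the defining property in Step~2.2 we have $\T^\bullet \not\models H_i \subs \neg(C\dlAnd\neg D)$, which yields a model of $\T^\bullet$ in which $(C\dlAnd\neg D)\dlAnd H_i$ is non-empty; this contradicts the hypothesis $\T^\bullet \models (C\dlAnd\neg D)\dlAnd H_i \subs \bot$. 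Hence the only possibility is $H_{C\dlAnd\neg D}=\top$, and the hypothesis collapses directly to $\T^\bullet \models C\dlAnd\neg D \subs \bot$, as required.

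The main obstacle, in my view, is justifying the very first step: Remark~\ref{remark_strict} gives only a pointwise ranked-model equivalence, whereas RC is defined through a specific minimal-ranked-model construction (see~\ref{canmod} or Definition~\ref{def_minimal_giordano}) rather than as validity in all ranked models of $\KB$. One must therefore verify that the translation $C\subs D \leftrightarrow C\dlAnd\neg D\usually\bot$ preserves the exact class of models used to define RC, so that Proposition~\ref{mainratio} may be invoked on the defeasible counterpart. Once this reduction is granted, the remainder is a clean case split exploiting the definition of $H_C$ in Step~2.2.
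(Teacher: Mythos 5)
Your proof is correct and follows essentially the same route as the paper's: translate $C\subs D$ into $C\dlAnd\neg D\usually\bot$, invoke the correctness of {\bf RC.Step 2}, and observe that the case $H_{C\dlAnd\neg D}=H_i$ is impossible (the two defining conditions contradict each other), leaving $H_{C\dlAnd\neg D}=\top$ and hence $\T^\bullet\models C\dlAnd\neg D\subs\bot$. The ``obstacle'' you flag is harmless: the equivalence of Remark~\ref{remark_strict} holds pointwise in \emph{every} ranked model, so it is preserved by validity over any subclass of ranked models used to characterise RC (in particular the single characteristic model of~\ref{canmod}), which is exactly how the paper dispatches the first step in one line.
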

\begin{corollary}[\cite{BritzEtAl2015a}, Corollary 2]\label{consistency}
An ontology $\KB=\tuple{\T,\D}$ does not have a ranked model iff $\T^\bullet\entails \top\subs\bot$, where $\T^\bullet$ has been computed using 
{\bf RC.Step 1}.
\qed
\end{corollary}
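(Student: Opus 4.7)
The plan is to derive the corollary as an almost immediate consequence of Proposition~\ref{classgci} by instantiating it with $C = \top$ and $D = \bot$. That yields: $\top \subs \bot$ is in the RC of $\KB$ iff $\T^\bullet \entails \top \subs \bot$. It therefore suffices to establish the auxiliary equivalence: $\top \subs \bot$ is in the RC of $\KB$ iff $\KB$ admits no ranked model. This splits the overall proof into one trivial direction and one more delicate direction.

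The first step is to make precise the rank equivalence between $\KB$ and $\KB^\bullet$, which is implicit in the correctness of the ranking procedure (Proposition~\ref{ranking_correct}). Here I would argue that every axiom $E \subs \bot$ introduced into $\T^{i+1}$ during {\bf RC.Step 1} corresponds to some $E \usually F \in \D^i_\infty$, and that any concept receiving infinite rank must have empty extension in every ranked model of $\KB$: otherwise, an element of $E^R$ would witness a finite rank for $E$, contradicting that $E$ is exceptional at every stage of the procedure. Conversely, a ranked model of $\KB^\bullet$ automatically satisfies the original defeasible axioms of $\D^i_\infty$ because $\min_{\prec^R}(E^R) = \emptyset$ whenever $E^R = \emptyset$. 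Hence $\KB$ has no ranked model iff $\KB^\bullet$ has no ranked model.

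For the easy direction, if $\T^\bullet \entails \top \subs \bot$ then $\T^\bullet$ is classically unsatisfiable, so in particular no ranked interpretation with non-empty domain can satisfy it, whence $\KB^\bullet$ and (by rank equivalence) $\KB$ admit no ranked model. For the converse direction, I would contrapose: assuming $\T^\bullet \not\entails \top \subs \bot$, pick a classical model $\I$ of $\T^\bullet$ and use the fact that at termination $\D^\bullet_\infty = \emptyset$, so $r^\bullet$ assigns a finite rank to every $E \usually F \in \D^\bullet$. One can then build a ranked model $R$ of $\KB^\bullet$ by setting $\Delta^R = \Delta^\I$, $\cdot^R = \cdot^\I$, and placing each object at a layer determined by the maximal rank of the antecedents it satisfies; all classical axioms of $\T^\bullet$ are preserved, and the stratification guarantees each defeasible axiom of rank $i$ is satisfied because its antecedent is non-exceptional in the residual ontology at stage $i$.

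The main obstacle is this last construction: verifying that the stratification genuinely yields a ranked model of every defeasible axiom in $\D^\bullet$. This hinges on Proposition~\ref{except2}, which gives a two-way exceptionality check for $\KB^\bullet$, and on carefully tracking that if $E$ is non-exceptional at stage $i$ then in a classical model of $\T^\bullet$ some $E$-instance can coherently be placed at layer $i$ without disturbing the axioms already satisfied at lower layers. Once this construction is in place, combining all three steps closes the chain: $\KB$ has no ranked model iff $\top \subs \bot$ is in the RC of $\KB$ iff, by Proposition~\ref{classgci}, $\T^\bullet \entails \top \subs \bot$.
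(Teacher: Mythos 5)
Your overall decomposition --- instantiate Proposition~\ref{classgci} with $C=\top$, $D=\bot$, and then prove that $\top\subs\bot$ lies in the RC of $\KB$ exactly when $\KB$ has no ranked model --- is the right one; the paper itself offers no proof (it imports the statement as Corollary~2 of the cited reference and calls it immediate), so the comparison is against the argument that citation stands for. The genuine gap is in your converse direction. Stratifying the domain of an \emph{arbitrary} classical model $\I$ of $\T^\bullet$ by ``the maximal rank of the antecedents each object satisfies'' (or, equivalently, by the least $i$ with $x\in H_i^\I$) does not in general produce a ranked model of $\KB^\bullet$. Take $\T=\{A\subs B,\ C\dlAnd D\subs\bot\}$ and $\D=\{B\usually C,\ A\usually D\}$; then $\T^\bullet=\T$, $B\usually C$ has rank $0$ and $A\usually D$ has rank $1$. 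The one-element classical model with $x\in A^\I\cap B^\I\cap C^\I$ and $x\notin D^\I$ satisfies $\T^\bullet$, but under your stratification $x$ is the unique (hence minimal) element of $A^R$ and fails $D$, so $A\usually D$ is violated. The problem is that non-exceptionality of $E$ at stage $i$ only guarantees that \emph{some} classical model of $\T^\bullet$ contains an $E\dlAnd H_i$ witness, not that your chosen $\I$ does; to repair the construction you must take a disjoint union of such witness models (one for each $\T^\bullet\not\models H_i\subs\neg E$), which is exactly the merging device used in the paper's proof of Proposition~\ref{except2} and in the characteristic-model construction of~\ref{canmod}.

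That said, the entire model construction is avoidable, which is presumably why the source treats the corollary as immediate. By Definition~\ref{exceptionality}, $\top$ is exceptional \wrt~$\KB$ iff $\min_{\prec^R}(\Delta^R)=\emptyset$ for every $R\in\R^\KB$; since every ranked interpretation has a non-empty domain and a ranking function into $\mathbb{N}$, its bottom layer is non-empty, so $\top$ is exceptional \wrt~$\KB$ iff $\R^\KB=\emptyset$. If $\R^\KB=\emptyset$ the ranking procedure yields $\D_0=\emptyset$ and hence $r_\KB(\top)=\infty$; if $\R^\KB\neq\emptyset$ then $r_\KB(\top)=0$. Since $r_\KB(\bot)=\infty$ always, Definition~\ref{RCdef} gives that $\top\usually\bot$ (equivalently $\top\subs\bot$) is in the RC of $\KB$ iff $r_\KB(\top)=\infty$ iff $\KB$ has no ranked model, and Proposition~\ref{classgci} then closes the chain to $\T^\bullet\entails\top\subs\bot$. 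Your rank-equivalence preamble and the witness-model repair are only needed if one insists on exhibiting a ranked model explicitly.
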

%
\section{Rational Closure in \ELbot}\label{ratclosEL}
%
\nd We now present a subsumption decision procedure under RC for \ELbot~by adapting the procedure for \alc~under RC to \ELbot.  By Remark~\ref{remalcrc}, as \ELbot~does not support concept negation and disjunction, the main problem we have to address is to find a way to overcome this limitation. Concretely, we will define alternative ways  both to
\begin{enumerate}
\item  express whether an \ELbot~concept is exceptional using a classical \ELbot~subsumption problem only;
\item  express the subsumption problems in  Steps 2.1 and 2.2 above in terms of  \ELbot~subsumption problems only.
\end{enumerate}
%
\subsection{A Subsumption Decision Procedure for \ELbot~under RC}\label{procedure}
%
\nd Consider a defeasible \ELbot~ontology $\KB=\tuple{\T,\D}$.
As for \alc, we will define two procedures. The first one is a \emph{ranking} procedure that transforms the initial ontology $\KB=\tuple{\T,\D}$ into a rank equivalent ontology $\KB^*=\tuple{\T^*,\D^*}$, where $\D^*$ is partitioned into a sequence $\D_0,\ldots,\D_n$, with each $\D_i$ containing the defeasible axioms with rank $i$. The second one uses $\KB^*$ to decide whether an axiom $C\usually D$ is in the RC of $\KB$.
\paragraph{The Ranking Procedure.} \label{rankings}
Given an ontology $\tuple{\T,\D}$, the ranking procedure is defined  by means of two procedures: one for finding  exceptional axioms and one for determining the rank value of axioms, as defined in Section~\ref{alc}.

In the following, given an ontology $\tuple{\T,\E}$, and a new atomic concept $\delta_\E$ (with $\E$ indicating a set of defeasible subsumptions), we define $\T_{\delta_\E}$ as
\begin{equation} \label{Tdelta}
\T_{\delta_\E}=\T\cup\{E \dlAnd\delta_\E\subs F \mid E \usually F \in\E\} \ .
\end{equation}
\nd Informally, we introduce the atom $\delta_\E$ as a way of representing the information that characterises the lowest rank. Hence, its introduction is aimed at the formalisation of the typicality of the lowest layer: $C\dlAnd\delta_{\E}$ is introduced to represent the individuals in $C$ that are in the lowest layer.
\begin{remark}
The aim of  the definition for $\T_{\delta_\E}$ is to replace the \alc~subsumption test in Proposition~\ref{except} with the \ELbot~subsumption test
\[
\T_{\delta_\D}\entails C\dlAnd \delta_\D \subs\bot \ , 
\]
for an ontology $\tuple{\T,\D}$.
\qed
\end{remark}
\nd We obtain an analogue of Proposition~\ref{except}.
\begin{proposition}\label{exceptB}
For every concept $C$ and every ontology $\KB=\tuple{\T,\D}$,~if
\begin{equation}\label{cexcetB}
\T_{\delta_\D}\entails C\dlAnd \delta_\D \subs\bot \ ,
\end{equation}
\nd where $\delta_\D$ is a new atomic concept, then $C$ is exceptional \wrt~$\KB$.
\qed
\end{proposition}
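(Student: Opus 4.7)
The plan is to prove the contrapositive: if $C$ is not exceptional with respect to $\KB = \tuple{\T,\D}$, then $\T_{\delta_\D} \not\entails C \dlAnd \delta_\D \subs \bot$. The intuition driving the construction is that the fresh atomic concept $\delta_\D$ is meant to pin down, inside a classical \ELbot{} interpretation, the set of ``most typical'' objects of a ranked model; the materialisations $E \dlAnd \delta_\D \subs F$ then encode that typical $E$-objects are $F$-objects, mimicking what the defeasible axiom $E \usually F$ says about layer~$0$.

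First I would unfold the assumption that $C$ is not exceptional wrt~$\KB$. By Definition~\ref{exceptionality}, this means there exists a ranked model $R = \tuple{\Delta^R, \cdot^R, \prec^R}$ of $\KB$ and some $o \in \Delta^R$ with $o \in C^R \cap \min_{\prec^R}(\Delta^R)$. From $R$ I would construct a classical \ELbot{} interpretation $\I = \tuple{\Delta^\I, \cdot^\I}$ by setting $\Delta^\I = \Delta^R$, $A^\I = A^R$ for every atomic concept $A$ occurring in $\KB$, $r^\I = r^R$ for every role $r$, $a^\I = a^R$ for every individual $a$, and crucially
\[
\delta_\D^\I \; = \; \min_{\prec^R}(\Delta^R) \; = \; L_0^R \ .
\]
A routine induction on concept structure shows that $G^\I = G^R$ for every concept $G$ not mentioning $\delta_\D$.

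The key step is then to verify that $\I \sat \T_{\delta_\D}$. Since $R$ satisfies every classical GCI in $\T$ and $\I$ agrees with $R$ on all symbols occurring in $\T$, we get $\I \sat \T$ immediately. For each materialisation $E \dlAnd \delta_\D \subs F$ arising from $E \usually F \in \D$, take any $x \in (E \dlAnd \delta_\D)^\I$. Then $x \in E^R$ and $x \in \min_{\prec^R}(\Delta^R)$; but any $\prec^R$-minimal element of $\Delta^R$ that lies in $E^R$ must also be $\prec^R$-minimal within $E^R$, so $x \in \min_{\prec^R}(E^R)$. Since $R \sat E \usually F$, Definition~\ref{rankedsat} gives $x \in F^R = F^\I$, as required. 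Finally, the witness $o$ from the previous paragraph satisfies $o \in C^\I \cap \delta_\D^\I$ while $\bot^\I = \emptyset$, so $\I \nsat C \dlAnd \delta_\D \subs \bot$. Hence $\T_{\delta_\D} \not\entails C \dlAnd \delta_\D \subs \bot$, which is the contrapositive we wanted.

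The main obstacle, and the step that carries the real content, is the little argument that $o \in E^R \cap \min_{\prec^R}(\Delta^R)$ implies $o \in \min_{\prec^R}(E^R)$: this is what lets us transfer the semantic commitment of the defeasible axiom $E \usually F$ (which only constrains $\prec^R$-minimal $E$-objects) to an assertion about $\delta_\D$-tagged $E$-objects in the classical interpretation $\I$. Everything else is essentially bookkeeping about how $\I$ inherits the extensions of the original vocabulary from $R$. Note that nothing in the construction requires concept negation or disjunction, so the argument lives entirely inside \ELbot{}, which is precisely the reason $\T_{\delta_\D}$ is a suitable replacement for the \alc{} materialisation used in Proposition~\ref{except}.
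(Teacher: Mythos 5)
Your proof is correct, but it takes a different route from the paper's. The paper proves Proposition~\ref{exceptB} as an immediate corollary of two prior results: Lemma~\ref{lemma_main}, which shows (by a purely classical model-manipulation argument, interpreting $\delta_\D$ as the extension of the \alc{} materialisation $\bigsqcap\{\neg E\dlOr F\mid E\usually F\in\D\}$) that the \ELbot{} test~(\ref{cexcetB}) is equivalent to the \alc{} test~(\ref{cexcet}), and Proposition~\ref{except}, which links the latter to exceptionality via ranked models. You instead argue the contrapositive in one step: from a ranked model $R$ witnessing non-exceptionality you build a classical counter-model of~(\ref{cexcetB}) directly, interpreting $\delta_\D$ as the bottom layer $L_0^R$ rather than as the extension of the materialisation concept (note $L_0^R$ is in general a proper subset of $H_0^R$; either choice works). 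Your key step --- that a $\prec^R$-minimal element of $\Delta^R$ lying in $E^R$ is $\prec^R$-minimal in $E^R$, so the layer-$0$ reading of $\delta_\D$ validates every materialised axiom --- is sound, and it is essentially the same observation that drives the paper's proof of Proposition~\ref{except}. What your version buys is self-containedness: it never leaves \ELbot{} semantics and makes no appeal to boolean operators, which is conceptually cleaner for this proposition in isolation. What the paper's factoring buys is reuse: Lemma~\ref{lemma_main} is also the engine behind Propositions~\ref{propcr}, \ref{exceptB2} and~\ref{lemeq}, and in particular it delivers the converse direction (after ranking) for free, which your one-directional construction does not.
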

\nd Procedure $\mathtt{Exceptional}$ illustrates how to compute the exceptional axioms.
\begin{procedure}[h]
\caption{Exceptional($\T,\E$)}\label{algex}
{\small
 \KwIn{$\T\text{ and }\E\subseteq{\D}$}
 \KwOut{$\E'\subseteq\E$ such that $\E'$ is a set  of exceptional axioms \wrt~$\tuple{\T,\E}$}
$\E':=\emptyset$\;
$\T_{\delta_\E}=\T\cup\{E \dlAnd\delta_\E\subs F\mid E\usually F\in\E\}$, where  $\delta_\E$ is a new atomic concept\;
 \ForEach{$E \usually F \in\E$}{	
   	\If{$\T_{\delta_\E}\entails E \dlAnd \delta_\E\subs\bot$}{$\mathcal{E'}$ := $\mathcal{E'}\cup\{E \usually F\}$\;}
}
\Return{$\mathcal{E'}$}
}
\end{procedure}

\nd The procedure $\mathtt{ComputeRanking}$ instead, shows how we implement {\bf RC.Step 1} in \ELbot, which we comment shortly on next. 
 \begin{procedure}[h]
\caption{ComputeRanking($\KB$)\label{algrank}}
{\small
\KwIn{Ontology $\KB=\tuple{\T,\D}$}
\KwOut{Ontology $\tuple{\T^*,\D^*}$, partitioning (ranking) $\R =\{\D_0,\ldots,\D_n\}$ of $\D^*$}
$\T^*$:=$\T$\;
$\D^*$:=$\D$\;
$\R$:=$\emptyset$\;
\Repeat {$\D_\infty=\emptyset$}
{$i$ := $0$\;
$\mathcal{E}_{0}$ := $\D^*$\;
$\mathcal{E}_{1}$ := $\mathtt{Exceptional}$($\T^*,\mathcal{E}_{0}$)\;
\While{$\E_{i+1}\neq\E_{i}$}{
$i$ := $i$ + 1\;
$\E_{i+1}$ := $\mathtt{Exceptional}$($\T^*,\E_{i}$)\;
}
$\D_\infty$ := $\E_{i}$\;
$\D^*$ := $\D^*\setminus\D_\infty$\;
$\T^*$ := $\T^*\cup\{E \subs\bot\mid E\usually F \in \D_\infty\}$\;}

\For{$j$ = $1$ to $i$}{
$\D_{j-1}$ := $\E_{j-1}\setminus\E_{j}$\;
$\R$ := $\R\cup\{\D_{j-1}\}$\;
}
\Return{$\tuple{\tuple{\T^*,\D^*},\R}$}
}
\end{procedure}
We start  by considering an ontology $\KB=\tuple{\T,\D}$. Lines 8-10 loop until we reach a  (possibly empty)  fixed-point of exceptional axioms.
Then, each axiom $C\usually D$ in the fixed point of the exceptionality function is eliminated from $\D^*$ (line 12) and we add $C\subs\bot$ to $\T^*$ (line 13). We repeat the loop in lines 5 - 13 until no exceptional axioms can be found anymore (\ie, $\E_i=\E_{i+1} = \emptyset$, for some $i\geq 0$).
\begin{remark}
Note that the loop  in between lines 5 - 13  allows us to move all the strict knowledge possibly `hidden' inside the DBox to the TBox. That is, there may be defeasible axioms in the DBox that are actually equivalent to classical axioms, and, thus, can be moved from the DBox to the TBox as classical inclusion axioms. Example~\ref{exinfty}  illustrates such a case.
\qed
\end{remark}
\nd Lines 15-17 determine the rank value of the remaining defeasible axioms not in $\D_\infty$. That is,  set $\D_{j-1}$ is the set of axioms of rank $j-1$ ($1 \leq j \leq i$), which are the axioms in $\E_{j-1} \setminus \E_{j}$.

The following can easily be shown.
\begin{proposition} \label{propcr}
Consider an ontology $\KB=\tuple{\T,\D}$. Then $\mathtt{ComputeRanking}(\KB)$ returns the ontology $\tuple{\T^{*}, \D^{*}}$, where $\D^*$ is partitioned into a sequence $\D_0,\ldots,\D_n$, where $\T^*$, $\D^*$ and all $\D_i$ are equal to the sets $\T^\bullet$, $\D^\bullet$ and $\D^\bullet_0,\ldots,\D^\bullet_n$ obtained via {\bf RC.Step 1}.
\qed
\end{proposition}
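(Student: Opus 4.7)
The plan is to reduce the claim to a single equivalence between the \ELbot-based exceptionality check used in $\mathtt{Exceptional}$ and the \alc-based check of Proposition~\ref{except} used in {\bf RC.Step 1}, and then to match the two algorithms iteration by iteration.

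First, I would establish the following biconditional strengthening of Proposition~\ref{exceptB}: for every \ELbot~TBox $\T$, every finite set of defeasible axioms $\E$, and every \ELbot~concept $C$, we have $\T \models \bigsqcap \{ \neg E \dlOr F \mid E \usually F \in \E \} \subs \neg C$ iff $\T_{\delta_\E} \models C \dlAnd \delta_\E \subs \bot$, where $\delta_\E$ is a fresh atomic concept. The ($\Leftarrow$) direction takes any model $\I'$ of $\T_{\delta_\E}$ with $x \in (C \dlAnd \delta_\E)^{\I'}$, observes that the reduct $\I$ of $\I'$ to the signature of $\T$ is a model of $\T$, and uses the enrichment axioms $E \dlAnd \delta_\E \subs F$ to place $x$ in $C^{\I} \cap (\bigsqcap \{ \neg E \dlOr F \})^{\I}$, contradicting the \alc~test. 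The ($\Rightarrow$) direction extends any countermodel $\I$ of the \alc~test to a model $\I'$ of $\T_{\delta_\E}$ by setting $\delta_\E^{\I'} := (\bigsqcap \{ \neg E \dlOr F \})^{\I}$; because $\delta_\E$ is fresh, every enrichment axiom is satisfied, and the original witness of $C \dlAnd \bigsqcap \{ \neg E \dlOr F \}$ lies in $(C \dlAnd \delta_\E)^{\I'}$.

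With this equivalence in hand, a straightforward induction on the inner-loop index $i$ shows that the iterated calls $\mathtt{Exceptional}(\T^*, \E_i)$ produce exactly the same sets as the iterates $e(\T^i, \E_i)$ of {\bf RC.Step 1}. Hence their fixed points coincide, so the $\D_\infty$ computed at line~11 equals the $\D^i_\infty$ of {\bf RC.Step 1}. A second induction on the number of outer iterations of the repeat-until loop (lines~4--13) then establishes that the sequence of pairs $\tuple{\T^*, \D^*}$ visits exactly the stages $\tuple{\T^i, \D^i}$ of {\bf RC.Step 1}, so the final $\T^*$ equals $\T^\bullet$, $\D^*$ equals $\D^\bullet$, and the partition $\{ \D_0, \ldots, \D_n \}$ assembled in lines~15--17 equals $\{ \D^\bullet_0, \ldots, \D^\bullet_n \}$.

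The main obstacle is the ($\Rightarrow$) direction of the equivalence, where one must check that the naive extension of an \alc~countermodel does satisfy every enrichment axiom $E \dlAnd \delta_\E \subs F$ without creating any new violations; this hinges on the freshness of $\delta_\E$ and the fact that it occurs nowhere in $\T$ or in $C$. Termination is a minor point: each outer iteration either strictly shrinks $\D^*$ via a nonempty $\D_\infty$ or exits the loop, and the inner loop terminates by monotonicity of $\mathtt{Exceptional}$ together with the finiteness of $\D$.
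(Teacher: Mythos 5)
Your proposal is correct and takes essentially the same route as the paper: the paper isolates exactly your biconditional as Lemma~\ref{lemma_main} (proved with the same two constructions, in particular extending an \alc~countermodel by setting $\delta_\E^{\I} := (\bigsqcap\{\neg E \dlOr F \mid E \usually F \in \E\})^{\I}$ and using freshness of $\delta_\E$), and then concludes Proposition~\ref{propcr} by observing that the two procedures differ only in how exceptionality is tested, which your iteration-by-iteration matching merely spells out in more detail. The only quibble is that your direction labels for the equivalence appear to be swapped (the argument you call ($\Leftarrow$) in fact establishes that the \alc~test implies the \ELbot~test, and vice versa), but both implications are correctly argued.
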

\nd Also, once we have applied the procedure $\mathtt{ComputeRanking}$, the proposition corresponding in the \ELbot~framework to Proposition~\ref{except2} holds.
\begin{proposition}\label{exceptB2}
Given an ontology $\KB^*=\tuple{\T^*,\D^*}$, obtained from the application of the procedure $\mathtt{ComputeRanking}$ to an ontology $\KB=\tuple{\T,\D}$, for every concept $C$,
\[
\T^*_{\delta_{\D^*}}\entails C\dlAnd \delta_{\D^*} \subs\bot \ , 
\]
if and only if  $C$ is exceptional \wrt~$\KB^*$.
\qed
\end{proposition}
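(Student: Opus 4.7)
The biconditional naturally splits into two directions, one easy and one substantive. The easy direction --- that the subsumption implies the exceptionality of $C$ \wrt~$\KB^*$ --- follows directly from Proposition~\ref{exceptB} applied to $\KB^*$ itself, since Proposition~\ref{exceptB} places no restriction on its input ontology having been processed by $\mathtt{ComputeRanking}$. The substantive direction is that the exceptionality of $C$ \wrt~$\KB^*$ implies the subsumption.

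For the substantive direction I plan to reduce the claim to the analogous \alc~result, Proposition~\ref{except2}. Two ingredients are required. First, by Proposition~\ref{propcr} the ontology $\KB^*$ returned by $\mathtt{ComputeRanking}$ coincides with the one produced by {\bf RC.Step 1}, so Proposition~\ref{except2} is applicable to $\KB^*$. Second, since $\KB^*$ is syntactically an \ELbot~ontology and $C$ is an \ELbot~concept, the notion of exceptionality of $C$ \wrt~$\KB^*$ is invariant between the \ELbot~and \alc~readings: every ranked \ELbot~model of $\KB^*$ extends to a ranked \alc~model by interpreting the \alc~connectives classically, and restricting a ranked \alc~model back preserves both the interpretation of $C$ and the satisfaction of all axioms of $\KB^*$. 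Combining these two ingredients, the exceptionality of $C$ yields the \alc~entailment $\T^* \entails H \dlAnd C \subs \bot$, where $H := \bigsqcap \{ \neg E \dlOr F \mid E \usually F \in \D^*\}$.

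The remaining step is to transfer this \alc~entailment into the desired \ELbot~entailment, exploiting that the fresh atom $\delta_{\D^*}$, together with the axioms $\{E \dlAnd \delta_{\D^*} \subs F \mid E \usually F \in \D^*\}$ added in $\T^*_{\delta_{\D^*}}$, acts as a proxy for $H$. Given any \ELbot~model $\I \sat \T^*_{\delta_{\D^*}}$ and any $x \in (C \dlAnd \delta_{\D^*})^\I$, the added axioms force $x \in E^\I \Rightarrow x \in F^\I$ for every $E \usually F \in \D^*$; viewing $\I$ as an \alc~model, this gives $x \in H^\I$ and hence $x \in (H \dlAnd C)^\I$, contradicting $\T^* \entails H \dlAnd C \subs \bot$. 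The main hurdle I anticipate is making both transfers --- exceptionality between \ELbot~and \alc, and subsumption between $H$ and $\delta_{\D^*}$ --- rigorous while keeping the proof concise; however, since $\T^*$ is negation-free and the $\delta_{\D^*}$-axioms are Horn-like, each reduction amounts to routine model-theoretic bookkeeping.
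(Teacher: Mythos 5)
Your proposal is correct and follows essentially the same route as the paper, which derives Proposition~\ref{exceptB2} directly from Proposition~\ref{propcr}, the \alc~result in Proposition~\ref{except2}, and Lemma~\ref{lemma_main} (the equivalence of the $H$-based and $\delta$-based subsumption tests, proved in the appendix). Your inline ``$\delta_{\D^*}$ as a proxy for $H$'' argument is precisely the relevant direction of that lemma, and your use of Proposition~\ref{exceptB} for the easy direction is just the already-packaged combination of the same lemma with Proposition~\ref{except}.
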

\nd Next, we describe some examples that illustrate the behaviour of the ranking procedure. The following example shows a case in which there is non-defeasible knowledge `hidden' in a DBox and that  more than one cycle of the lines 4-14 in $\mathtt{ComputeRanking}$ is needed to extract this information.
\begin{example}\label{exinfty}
Let $\KB=\tuple{\T,\D}$ be an ontology with
\begin{eqnarray*}
\T &  =\{ & A\subs B, \\
&&  B\dlAnd D\subs\bot \ \ \} \\
\D & =\{ & B\usually C, \\
&& A\usually D, \\
&&  E\usually \exists r.A \ \ \} \ .
 \end{eqnarray*}
It can be verified that the execution of $\mathtt{ComputeRanking}(\KB)$ is as follows:
\begin{eqnarray*}
&& \T^* = \T,  \D^* = \D, \R=\emptyset\\
\mathtt{repeat} 1 & i=0  & \mathcal{E}_{0} = \D^*,  \mathcal{E}_{1} = \{ A\usually D \} \\
		        & i=1  & \mathcal{E}_{2} = \{ A\usually D \} \mbox{ (end while) } \\
                         &       & \D_\infty = \mathcal{E}_{2} = \{ A\usually D \} \\
                          &       & \D^* = \D^* \setminus  \{ A\usually D \}  = \{B\usually C, E\usually \exists r.A\}\\
                         &       & \T^* = \T^* \cup  \{ A \subs \bot \}  = \{A\subs B, B\dlAnd D\subs\bot , A \subs \bot\}\\\\
\end{eqnarray*}
\begin{eqnarray*}                                               
\mathtt{repeat} 2 & i=0  & \mathcal{E}_{0} = \D^*,  \mathcal{E}_{1} = \{ E\usually \exists r.A\} \\
                         & i=1  & \mathcal{E}_{2} = \{  E\usually \exists r.A\ \} \mbox{ (end while) } \\
                         &       & \D_\infty = \mathcal{E}_{2} = \{ E\usually \exists r.A \} \\
                         &       & \D^* = \D^* \setminus  \{ E\usually \exists r.A \ \}  = \{B\usually C\} \\
                         &       & \T^* = \T^* \cup  \{ E \subs \bot \}  = \{A\subs B, B\dlAnd D\subs\bot , A \subs \bot, E \subs \bot\} \\\\
\mathtt{repeat} 3 & i=0  & \mathcal{E}_{0} = \D^*,  \mathcal{E}_{1} = \emptyset \\
  			& i=1  & \mathcal{E}_{2} = \emptyset \mbox{ (end while) }\\
			&       & \D_\infty = \mathcal{E}_{2} \\
			&       & \D^* = \D^* \setminus \emptyset  = \{ B\usually C\} \\
			&       & \T^* = \T^* \cup \emptyset  = \{A\subs B, B\dlAnd D\subs\bot , A \subs \bot, E \subs \bot\} \mbox{ (end repeat) } \\ \\
\mathtt{for}	& j=1  & \D_{0} = \E_{0}\setminus \E_{1} = \{ B\usually C \} \\
			&       & \R = \R \cup\{\D_{0}\} = \{\D_{0} \}  \mbox{ (end for) }
\end{eqnarray*}
Therefore, $\mathtt{ComputeRanking}(\KB)$ terminates with
\begin{eqnarray*}
 \T^* & = & \{ \ \   A\subs B, B\dlAnd D\subs\bot , A \subs \bot, E \subs \bot \ \  \} \\
 \D^*& = & \{ \ \ B\usually C \ \ \} \\
  \R    & = &  \{ \ \ \D_{0} \ \ \} \\
  \D_0  & = &  \{ \ \  B\usually C \ \   \}  \ .
\end{eqnarray*}
The only defeasible axiom  in $\D^*$ is $B\usually C$, which has rank $0$. Axioms $A\usually D$ and $E\usually \exists r.A$ have rank $\infty$ instead, and so are substituted by the classical axioms $A \subs \bot$ and $E \subs \bot$. Note that we need to iterate the loop in lines 5-13 in procedure $\mathtt{ComputeRanking}$ more than once to determine such ranking values. In fact, in the first round we get $A \subs \bot$, while the second round we get also $E \subs \bot$.
\qed
\end{example}
\begin{example}\label{exranking}
Consider the ontology $\KB$ in Example \ref{ex02}. It can be verified that the execution of\newline $\mathtt{ComputeRanking}(\KB)$ is as follows:
\begin{eqnarray*}
&& \T^* = \T,  \D^* = \D, \R=\emptyset\\
\mathtt{repeat} 1 & i=0  & \mathcal{E}_{0} = \D^*,  \mathcal{E}_{1} = \{ \dl{MRBC}\usually \dl{NotN} \} \\
		        & i=1  & \mathcal{E}_{2} = \emptyset \\
			& i=2  & \mathcal{E}_{3} = \emptyset \mbox{ (end while) } \\
                         &       & \D_\infty = \mathcal{E}_{3} = \emptyset \\
                         &       & \D^* = \D^* \setminus  \emptyset  = \D \\
                         &       & \T^* = \T^* \cup  \emptyset  = \T \mbox{ (end repeat) } 
\end{eqnarray*}
\begin{eqnarray*}
\mathtt{for}	& j=1  & \D_{0} = \E_{0}\setminus \E_{1} = \{\dl{VRBC}\usually\exists\dl{hasCM}.\top,  \dl{VRBC}\usually\exists\dl{hasN}.\top \} \\
			&       & \R = \R \cup\{\D_{0}\} = \{\D_{0} \} \\
			& j=2  & \D_{1} = \E_{1}\setminus \E_{2} = \{ \dl{MRBC}\usually \dl{NotN} \} \\
			&       & \R = \R \cup\{\D_{1}\} = \{\D_{0}, \D_{1} \}  \mbox{ (end for) }
\end{eqnarray*}

\nd Therefore, $\mathtt{ComputeRanking}(\KB)$ terminates with
\begin{eqnarray*}
 \T^* & = & \{ \ \   \dl{BRBC}\subs\dl{MRBC}, \dl{ARBC}\subs\dl{VRBC}, \dl{MRBC}\subs\dl{VRBC}, \exists\dl{hasN}.\top\dlAnd \dl{NotN}\subs\bot \ \  \} \\
 \D^*& = & \{ \ \ \dl{VRBC}\usually\exists\dl{hasCM}.\top,  \dl{VRBC}\usually\exists\dl{hasN}.\top, \dl{MRBC}\usually \dl{NotN}  \ \ \} \\
  \R    & = &  \{ \ \ \D_{0}, \D_{1} \ \ \} \\
  \D_0  & = &  \{ \ \  \dl{VRBC}\usually\exists\dl{hasCM}.\top,  \dl{VRBC}\usually\exists\dl{hasN}.\top \ \   \}  \\
  \D_1  & = &  \{ \ \  \dl{MRBC}\usually \dl{NotN} \ \   \}  \ .
\end{eqnarray*}
\nd Defeasible axioms in $\D_0$ have rank value $0$, while the axiom in  $\D_1$ has rank value $1$.  
\qed
\end{example}
\begin{remark} \label{reelbotcla}
From Propositions~\ref{classgci} and~\ref{propcr}, and Corollary~\ref{consistency}, we  immediately also get decision procedures for \ELbot~to determine both whether  classical GCIs  are in the RC of an ontology and whether an ontology has a ranked model, so we do not address these decision problems further.
\qed
\end{remark}
%
%
\paragraph{The Subsumption Decision Procedure.} \label{procedurerc}
So far,  we have defined a procedure that determines the rank value of the axioms in a KB, which is based on a sequence of classical \ELbot~subsumption decision steps (those in the 
$\mathtt{Exceptional}$ procedure, line 4). As next, we illustrate how we are going to implement {\bf RC.Step 2} using \ELbot~subsumption tests only and, thus, get an algorithm to decide whether a defeasible axiom $C\usually D$ is in the RC of an \ELbot~KB.  Specifically, given a KB $\KB$, let us assume that we have applied to it the $\mathtt{ComputeRanking}$ procedure and, thus, the returned KB 
 does not have defeasible inclusion axioms with $\infty$ as rank value.

 In the following, given $\KB=\tuple{\T,\D}$ as the output of the $\mathtt{ComputeRanking}$ procedure, with $\D$ partitioned into $\D_0,\ldots,\D_n$, and given new atomic concepts $\delta_i$ ($0 \leq i \leq n$), we define $\T_{\delta_i}$ as
\begin{equation} \label{Tdeltai}
\T_{\delta_i} =\T\cup\{E \dlAnd\delta_i\subs F \mid E \usually F \in  \D_i \cup \ldots \cup \D_n \} \ .
\end{equation}
\begin{remark}\label{entrc}
The purpose of definition $\T_{\delta_i}$ is to encode the concepts $H_i$ in {\bf RC.Step 2.2} as \ELbot~GCIs. Specifically, the aim is to replace the subsumption tests
\begin{eqnarray}
\T^* & \not \models & H_i \subs \neg C \label{rct1}\\
\T^* & \models & C \dlAnd H_i \subs D  \label{rct2}
\end{eqnarray}
\nd  in Step 2.2 with the two equivalent \ELbot~subsumptions tests
\begin{eqnarray}
\T^*_{\delta_i} & \not\entails & C\dlAnd\delta_i\subs \bot  \label{rct3} \\
\T^*_{\delta_i} & \entails & C\dlAnd\delta_i\subs D  \label{rct4} \ .
\end{eqnarray}
\nd respectively.
\qed
\end{remark}
\begin{proposition}\label{lemeq}
By referring to Remark~\ref{entrc}, the subsumption test~(\ref{rct1}) (resp.~\ref{rct2}) is equivalent to the subsumption test~(\ref{rct3}) (resp.~\ref{rct4}).
\qed
\end{proposition}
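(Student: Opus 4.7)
The plan is to establish both equivalences by a straightforward model-theoretic correspondence between models of $\T^*$ and models of $\T^*_{\delta_i}$, mediated by the freshness of the atomic concept $\delta_i$. The crucial observation is that the axioms $\{E \dlAnd \delta_i \subs F \mid E \usually F \in \D_i \cup \ldots \cup \D_n\}$ added to obtain $\T^*_{\delta_i}$ exactly encode the clauses $\neg E \dlOr F$ defining $H_i$, but relativised to objects lying in $\delta_i$. Therefore in any model of $\T^*_{\delta_i}$ one has $\delta_i^\I \subseteq H_i^\I$, and, conversely, setting $\delta_i^\I := H_i^\I$ in any model of $\T^*$ yields a model of $\T^*_{\delta_i}$.

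For the direction from~(\ref{rct1}) to~(\ref{rct3}) (stated contrapositively, $\T^* \not\models H_i \subs \neg C$ implies $\T^*_{\delta_i} \not\entails C \dlAnd \delta_i \subs \bot$), I would take a model $\I$ of $\T^*$ containing an object $o \in (C \dlAnd H_i)^\I$, keep $\I$'s interpretation of all original symbols and enrich it by $\delta_i^{\I'} := H_i^\I$. I would then verify $\I' \models \T^*_{\delta_i}$ by checking each new axiom $E \dlAnd \delta_i \subs F$: any element in $E^\I \cap H_i^\I$ satisfies $\neg E \dlOr F$, so, being in $E^\I$, must lie in $F^\I$. Since $(C \dlAnd \delta_i)^{\I'} = (C \dlAnd H_i)^\I \ni o$, the inclusion $C \dlAnd \delta_i \subs \bot$ fails in $\I'$. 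The same extension proves (\ref{rct2}) $\Rightarrow$ (\ref{rct4}): if every model of $\T^*$ satisfies $C \dlAnd H_i \subs D$, then $\I'$ inherits this, and $(C \dlAnd \delta_i)^{\I'} = (C \dlAnd H_i)^\I \subseteq D^{\I'}$.

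For the converse directions, I would start from a model $\I$ of $\T^*_{\delta_i}$ and take any $o \in (C \dlAnd \delta_i)^\I$. For every $E \usually F \in \D_i \cup \ldots \cup \D_n$, the added axiom $E \dlAnd \delta_i \subs F$ gives $o \in E^\I \Rightarrow o \in F^\I$, i.e.\ $o \in (\neg E \dlOr F)^\I$, so $o \in H_i^\I$. The reduct of $\I$ to the signature of $\T^*$ is still a model of $\T^*$ (since $\T^* \subseteq \T^*_{\delta_i}$) and witnesses $o \in (C \dlAnd H_i)^\I$, refuting $H_i \subs \neg C$; if moreover $o \notin D^\I$ in the given model of $\T^*_{\delta_i}$, the same reduct refutes $C \dlAnd H_i \subs D$.

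No serious obstacle is anticipated: the argument is an unfolding of the semantics of $\dlAnd$ and $\dlOr$ combined with the freshness of $\delta_i$. The only subtlety is the interplay of two signatures, since $H_i$ lives in \alc~(requiring $\neg$ and $\dlOr$) while $\delta_i$ lives in the \ELbot~signature; one must take care that the enrichment/reduction of models respects this split, and that $\delta_i$'s freshness ensures that no pre-existing axiom of $\T^*$ constrains $\delta_i^{\I'}$.
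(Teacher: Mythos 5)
Your proof is correct and follows essentially the same route as the paper's own argument (which instantiates its Lemma~\ref{lemma_main}): extend a model of $\T^*$ by setting $\delta_i^\I := H_i^\I$ for one direction, and observe that every model of $\T^*_{\delta_i}$ satisfies $\delta_i^\I \subseteq H_i^\I$ and reduces to a model of $\T^*$ for the other. The only cosmetic slip is that the extension construction in your second paragraph actually establishes (\ref{rct4})~$\Rightarrow$~(\ref{rct2}) (contrapositively), while (\ref{rct2})~$\Rightarrow$~(\ref{rct4}) is what your reduct argument in the third paragraph delivers; both pieces are present, so the equivalence is fully proved.
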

\begin{example}[Example~\ref{exranking} cont.]\label{exrankingA}
From $\tuple{\tuple{\T^*,\D^*},\R}$ in Example~\ref{exranking}, we get by definition that
\begin{eqnarray*}
\T^*_{\delta_0} & = \T^*\cup\{ & \dl{VRBC}\dlAnd\delta_0\subs\exists\dl{hasCM}.\top,\\
&& \dl{VRBC}\dlAnd\delta_0\subs\exists\dl{hasN}.\top,\\
&& \dl{MRBC}\dlAnd\delta_0\subs \dl{NotN} \ \ \} \ .
\end{eqnarray*}
\nd and
\begin{eqnarray*}
\T^*_{\delta_1} & = \T^*\cup\{\dl{MRBC}\dlAnd\delta_1\subs \dl{NotN}\}  \ .
\end{eqnarray*}
Note that we get the following results:
\begin{eqnarray*}
\T^*_{\delta_0}  & \entails  & \dl{MRBC}\dlAnd\delta_0\subs\bot\\
\T^*_{\delta_0} & \not\entails & \dl{VRBC}\dlAnd\delta_0\subs\bot \\
\T^*_{\delta_1} & \not\entails & \dl{MRBC}\dlAnd\delta_1\subs\bot
\end{eqnarray*}
\qed
\end{example}
\nd Procedure  $\mathtt{RationalClosure}$ illustrates how we implement the subsumption decision procedure for \alc~under RC, using \ELbot~subsumption tests only. Note that essentially lines 1 - 7 implement {\bf RC.Step~1}, while lines 8 - 18 implement {\bf RC.Step 2}.
\begin{procedure}[h]
\caption{RationalClosure($\KB,\alpha$) \label{RCalg}}
{\small
\KwIn{Ontology $\KB$ and defeasible axiom $\alpha$ of the form  $C\usually D$}
\KwOut{$\mathtt{true}$ iff $C\usually D$ is in the Rational Closure of $\KB$}
$CL : = \T \entails C\subs D$ //Check if $\alpha$ holds classically\;
  \If{CL}{\Return{$CL$}}
  $\tuple{\tuple{\T^*,\D^*},\{\D_0,\ldots,\D_n\}}$ := $\mathtt{ComputeRanking}$($\KB$)\;
  $CL : = \T^* \entails C\subs D$ //Check if $\alpha$ holds classically, after finding strict knowledge in $\D$\;
  \If{CL}{\Return{$CL$}}
  //Compute $C$'s rank $i$\;
  $i$ :=  $0$; $\D_\R$ :=  $\D^*$\;
  $\T_{\delta_0}:=\T^*\cup\{E\dlAnd\delta_0\subs F\mid E\usually F\in\D_\R\}$, where  $\delta_0$ is a new atomic concept\;
  \While{$\T_{\delta_i}\entails C\dlAnd\delta_i\subs \bot$ {\bf and} $\D_\R\neq\emptyset$}{
    $\D_\R$ := $\D_\R\backslash\D_{i}$; $i$ := $i + 1$\;
    $\T_{\delta_i}:=\T^*\cup\{E\dlAnd\delta_i\subs F\mid E\usually F\in\D_\R\}$, where  $\delta_i$ is a new atomic concept\;
  }
  // Check now if $\alpha$ holds under RC\;
  \eIf{$\T_{\delta_i}\not\entails C\dlAnd\delta_i\subs \bot$}{\Return{$\T_{\delta_i}\entails C\dlAnd\delta_i\subs D$}}{\Return{$CL$}}
  }
\end{procedure}

\begin{example}[Example~\ref{exrankingA} cont.]\label{exdecision}
We want to decide whether the red blood cells of a bovine ($\dl{BRBC}$) should presumably have a nucleus, that is, whether $\dl{BRBC}\usually\exists\dl{hasN}.\top$,
$\dl{BRBC}\usually\dl{notN}$, or neither of them are in the RC of $\KB$.

First of all, we determine the rank of $\dl{BRBC}$, then we check whether we can conclude that the typical elements of $\dl{BRBC}$ are in $\dl{hasN}.\top$ (or in $\dl{notN}$).
We have that
\begin{eqnarray*}
\T^*_{\delta_0} & \entails & \dl{BRBC}\dlAnd\delta_0\subs\bot\\
\T^*_{\delta_1} & \not\entails & \dl{BRBC}\dlAnd\delta_1\subs\bot \ .
\end{eqnarray*}
\nd Hence $\dl{BRBC}$ is of rank $1$, and we can associate with it the defeasible information of rank $1$, that is, we can use the TBox $\T^*_{\delta_1}$.
\begin{eqnarray*}
\T^*_{\delta_1} &\not\entails & \dl{BRBC}\dlAnd\delta_1\subs\exists\dl{hasN}.\top\\
\T^*_{\delta_1} & \entails  & \dl{BRBC}\dlAnd\delta_1\subs\dl{notN} \ .
\end{eqnarray*}
\nd Therefore, we  conclude that it's not the case that the red blood cells of bovines presumably have a nucleus. That is,
$\mathtt{RationalClosure}$($\KB,\dl{BRBC}\usually\exists\dl{hasN}.\top$)  and
$\mathtt{RationalClosure}$($\KB,\dl{BRBC}\usually\dl{notN}$) return respectively $\mathtt{false}$ and $\mathtt{true}$.
\qed
\end{example}
\nd By Propositions~\ref{mainratio}, \ref{propcr}, and \ref{lemeq}, and by construction of the $\mathtt{RationalClosure}$ procedure, we immediately get the following result.
\begin{proposition}\label{completeness}
Consider an \ELbot~ontology $\KB=\tuple{\T,\D}$ and a defeasible GCI $C\usually D$. Then,  $C\usually D$ is in the RC of $\KB$ iff $\mathtt{RationalClosure}(\KB,C\usually D)$ returns $true$.
\qed
\end{proposition}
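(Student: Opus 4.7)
The plan is to reduce correctness of $\mathtt{RationalClosure}$ to the already-established correctness of the \alc-based decision procedure of Section~\ref{alc}, using Propositions~\ref{propcr} and~\ref{lemeq} as a bridge. By Proposition~\ref{mainratio}, $C\usually D$ is in the RC of $\KB$ iff $\KB\ratent C\usually D$ in the sense of \textbf{RC.Step~2}, so it suffices to verify that the algorithm returns $\mathtt{true}$ exactly in these cases.

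First, I would dispose of the fast-path returns. Lines 1-3 return whenever $\T\entails C\subs D$, which is sound since $\T\subseteq\T^\bullet$, and any classical $\T^\bullet$-subsumption makes $C\usually D$ derivable under RC (take $H_C=\top$ in \textbf{RC.Step~2.2}). The call to $\mathtt{ComputeRanking}$ on line~4 returns, by Proposition~\ref{propcr}, an ontology $\tuple{\T^*,\D^*}$ with partition $\D_0,\ldots,\D_n$ that coincides with the $\tuple{\T^\bullet,\D^\bullet}$ and $\D^\bullet_0,\ldots,\D^\bullet_n$ produced by \textbf{RC.Step~1}; the additional classical test on $\T^*$ in lines 5-7 correctly handles the case in which strict knowledge extracted during ranking already settles the query.

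Second, I would argue that the remaining code faithfully implements \textbf{RC.Step~2} using only \ELbot~subsumption tests. By Proposition~\ref{lemeq}, for every $i$ the equivalences
\begin{eqnarray*}
\T^*_{\delta_i}\entails C\dlAnd\delta_i\subs\bot & \text{iff} & \T^\bullet\models H_i\subs\neg C \\
\T^*_{\delta_i}\entails C\dlAnd\delta_i\subs D & \text{iff} & \T^\bullet\models C\dlAnd H_i\subs D
\end{eqnarray*}
hold. Thus the while loop in lines 11-14 terminates at precisely the smallest index $i$ such that $\T^\bullet\not\models H_i\subs\neg C$, which is exactly the $H_C$ of \textbf{RC.Step~2.2}, and the test on line~16 then decides $\T^\bullet\models C\dlAnd H_C\subs D$ as required. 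Termination of the loop is immediate, since $\D_\R$ strictly decreases at each iteration.

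The main obstacle is the rank-$\infty$ branch, where the loop exits because $\D_\R=\emptyset$ while the exceptionality check still succeeds. Here I would need to match the algorithm with the \alc~specification $H_C=\top$: because $\delta_i$ is fresh and $\D_\R$ is empty at that point, $\T^*_{\delta_i}=\T^*$ and both \ELbot~tests involving $\delta_i$ collapse to the classical checks $\T^*\entails C\subs\bot$ and $\T^*\entails C\subs D$; hence returning $CL$ (last assigned on line~5 to $\T^*\entails C\subs D$) yields the correct answer, with the trivial sub-case that $\T^*\entails C\subs\bot$ automatically makes $CL$ hold. Pinning down this collapse via the freshness of $\delta_i$ is the only delicate step; the remainder follows mechanically from Propositions~\ref{propcr}, \ref{lemeq}, and~\ref{mainratio}.
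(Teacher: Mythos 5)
Your proposal is correct and follows essentially the same route as the paper, which offers no separate proof but states that the result is immediate from Propositions~\ref{mainratio}, \ref{propcr}, and~\ref{lemeq} together with the construction of the $\mathtt{RationalClosure}$ procedure; your write-up is exactly that chain spelled out, including the one genuinely delicate point (the collapse of the $\delta_i$-tests to classical tests on $\T^*$ when $\D_\R=\emptyset$, matching the $H_C=\top$ case of \textbf{RC.Step~2.2}).
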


\subsection{Computational Complexity}\label{seccomplexity}

\nd Classical subsumption can be decided in polynomial time for  \ELbot~\cite{Baader05a}. We next show that our subsumption decision procedure under RC requires a polynomial number of classical subsumption test and, thus, is polynomial overall \wrt~the size of a KB.

As we have seen, the entire procedure can be reduced to a sequence of classical entailments tests, while all other operations are linearly bounded by the size of the KB. Therefore, in order to determine the computational complexity of our method, we have to check, given a KB $\KB=\tuple{\T,\D}$ as input, how many classical entailment tests are required in the worst case.

It is easily verified that   $\mathtt{Exceptional}(\T,\E)$ performs at most  $|\E| \in \mathcal{O}(|\D|)$ subsumption tests. 
Now, let us analyse
$\mathtt{ComputeRanking}(\KB)$. Line 7 requires $\mathcal{O}(|\D|)$ subsumption test. Lines 8 - 10 require at most $\mathcal{O}(|\D|^2)$ subsumption tests as at each round, $|\E_{i+1}|$ is $|\E_{i}| - 1$ in the worst case. At each \texttt{repeat} round $|\D^*|$ decreases in size (at line 12), and thus the \texttt{repeat} loop is iterated at most $\mathcal{O}(|\D|)$ times. Therefore, $\mathtt{ComputeRanking}$ requires at most $\mathcal{O}(|\D|^3)$ subsumption tests.
This gives us the following proposition:
%
\begin{proposition}\label{complexity1a}
Given a KB $\KB=\tuple{\T,\D}$,  procedure $\mathtt{ComputeRanking}$  runs in polynomial time \wrt~the size of $\KB$.
\qed
\end{proposition}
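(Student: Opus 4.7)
The plan is to bound, step by step, the number of classical \ELbot~subsumption tests performed by $\mathtt{ComputeRanking}$, and to observe that each such test is polynomial in the size of $\KB$ by the well-known result of Baader et al. on \ELbot. Since the bookkeeping operations (set constructions, axiom substitutions, partitioning) are clearly polynomial in $|\KB|$, showing a polynomial bound on the number of subsumption tests will establish the claim.

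First I would analyse $\mathtt{Exceptional}(\T,\E)$. The auxiliary TBox $\T_{\delta_\E}$ is built in linear time: its size is at most $|\T| + |\E|$. The \texttt{foreach} loop iterates once per axiom in $\E$, and each iteration performs exactly one \ELbot~subsumption test $\T_{\delta_\E}\entails E\dlAnd\delta_\E\subs\bot$ on an ontology whose size is $\mathcal{O}(|\T|+|\E|)\subseteq\mathcal{O}(|\KB|)$. Thus $\mathtt{Exceptional}$ performs $\mathcal{O}(|\D|)$ subsumption tests on ontologies of size $\mathcal{O}(|\KB|)$.

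Next I would analyse the two nested loops in $\mathtt{ComputeRanking}$. For the inner \texttt{while} loop at lines 8-10, note that exceptionality is monotone in the sense that $\E_{i+1}\subseteq\E_i$, so either the sequence stabilises, yielding a fixed point, or it strictly decreases. Since $|\E_0|\leq|\D|$ and the loop terminates at the first repetition, it is iterated at most $\mathcal{O}(|\D|)$ times, each invoking $\mathtt{Exceptional}$ for a total of $\mathcal{O}(|\D|^2)$ subsumption tests per pass of the outer \texttt{repeat}. For the outer \texttt{repeat} loop at lines 4-14, at each iteration either $\D_\infty\neq\emptyset$, so that line 12 strictly reduces $|\D^*|$, or $\D_\infty=\emptyset$ and the loop exits. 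Consequently it iterates at most $|\D|+1$ times. Multiplying the bounds, $\mathtt{ComputeRanking}$ performs at most $\mathcal{O}(|\D|^3)\subseteq\mathcal{O}(|\KB|^3)$ classical \ELbot~subsumption tests. The final \texttt{for} loop at lines 15-17 only rearranges the previously computed $\E_j$'s into the partition $\R$, contributing $\mathcal{O}(|\D|)$ set operations and no subsumption tests.

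Finally I would combine these bounds: each subsumption test is executed on an ontology of size $\mathcal{O}(|\KB|)$, so each is polynomial, and the total number of such tests is polynomial. All remaining operations are clearly polynomial in $|\KB|$. Hence $\mathtt{ComputeRanking}$ runs in polynomial time with respect to the size of $\KB$. The only subtle step is justifying the bound on the outer \texttt{repeat} loop, for which the key observation is that each pass through the loop that does not terminate must move at least one axiom out of $\D^*$ (turning it into a strict subsumption in $\T^*$), so the number of passes is bounded by $|\D|$.
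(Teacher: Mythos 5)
Your proposal is correct and follows essentially the same route as the paper: it bounds $\mathtt{Exceptional}$ by $\mathcal{O}(|\D|)$ tests, the inner \texttt{while} loop by $\mathcal{O}(|\D|^2)$ tests via the shrinking chain $\E_{i+1}\subseteq\E_i$, and the outer \texttt{repeat} loop by $\mathcal{O}(|\D|)$ iterations since each non-terminating pass removes at least one axiom from $\D^*$, yielding the same overall $\mathcal{O}(|\D|^3)$ bound on polynomial-time \ELbot~subsumption tests. No gaps.
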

\nd Note that if the KB remains unchanged in between several defeasible subsumption tests, then the ranking procedure needs to be executed  only once.

Now consider  $\mathtt{RationalClosure}(\KB,\alpha)$. Lines 1 - 3 require one subsumption test.
In line 4, the value of $n$ is bounded by $|\D|$ and line 4 requires at most $\mathcal{O}(|\D|^3)$ subsumption tests.
Lines 5 - 7 require one subsumption test.
The loop in lines 11 - 13 is executed at most $|\D|$ times (as at each loop $|\D_{\R}|$ decreases), at each iteration we execute one subsumption test only, and there are at most two subsumption tests between lines 15 - 18. Hence, $\mathtt{RationalClosure}(\KB,\alpha)$ requires  at most $\mathcal{O}(|\D|^3 + |\D|)$ subsumption tests. 
Therefore,
\begin{proposition}\label{complexity1}
Procedure $\mathtt{RationalClosure}$, that decides whether the defeasible inclusion axiom $C\usually D$ is in the RC of $\KB=\tuple{\T,\D}$, runs in polynomial time \wrt~the size of $\KB$.
\qed
\end{proposition}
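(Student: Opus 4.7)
The plan is to reduce the claim to two elementary observations: (i) every non-trivial step of $\mathtt{RationalClosure}$ can be charged to a classical \ELbot~subsumption test, and (ii) each such test is posed on a TBox of size polynomial in $|\KB|$. Since classical \ELbot~subsumption is decidable in polynomial time \cite{Baader05a}, combining (i) and (ii) yields the required polynomial bound. All the bookkeeping steps (set difference, union, construction of $\T_{\delta_i}$) are clearly linear in the size of the current data, so they do not affect the asymptotic analysis.

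The first step is to discharge line 4 by appealing directly to Proposition~\ref{complexity1a}: the call $\mathtt{ComputeRanking}(\KB)$ runs in polynomial time and, inspecting the procedure, returns a $\T^*$ of size at most linear in $|\KB|$ together with a partition $\{\D_0,\ldots,\D_n\}$ with $n \leq |\D|$. Lines 1--3 and 5--7 each perform a single classical subsumption test on $\T$ or $\T^*$, costing $\mathcal{O}(1)$ tests on TBoxes of size $\mathcal{O}(|\KB|)$.

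Next, I would analyse the while-loop of lines 11--13. The key observation is that $\D_\R$ shrinks strictly at every iteration, losing the non-empty layer $\D_i$, so the loop is executed at most $n+1 \leq |\D|+1$ times. Each iteration rebuilds $\T_{\delta_i}$ by adding one axiom $E \dlAnd \delta_i \subs F$ per remaining $E \usually F$, giving a TBox whose size is bounded by $|\KB|$ plus a constant overhead for the fresh atomic concept $\delta_i$; and it performs exactly one \ELbot~subsumption test. Hence the loop contributes $\mathcal{O}(|\D|)$ tests, each on a TBox of size $\mathcal{O}(|\KB|)$. Lines 15--18 add at most two further tests on $\T_{\delta_i}$, of the same size.

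Adding up, the total number of classical subsumption tests performed by $\mathtt{RationalClosure}(\KB,\alpha)$ is $\mathcal{O}(|\D|^3)$, each on a TBox of size $\mathcal{O}(|\KB|)$, and multiplying by the polynomial cost of a single \ELbot~subsumption test yields the desired polynomial bound. I do not expect a genuine obstacle here: the only point worth making carefully is that the introduction of the fresh symbol $\delta_i$ cannot cause any blow-up, since it occurs exactly once in each newly added axiom and does not interact with any previously computed bounds. The rest is routine accounting.
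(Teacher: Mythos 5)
Your proposal is correct and follows essentially the same accounting as the paper: one test for lines 1--3 and 5--7, the $\mathcal{O}(|\D|^3)$ tests charged to $\mathtt{ComputeRanking}$ via Proposition~\ref{complexity1a}, at most $|\D|$ iterations of the while-loop with one test each, and at most two final tests, all posed over TBoxes of size $\mathcal{O}(|\KB|)$ on which classical \ELbot~subsumption is polynomial. The only addition is your explicit remark that the fresh atoms $\delta_i$ cause no blow-up, which the paper leaves implicit but which is a harmless (and correct) clarification.
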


%
\subsection{Normal Form}\label{normalforms}
%
\nd Usually, a classical \ELbot~ontology is transformed into a normal form to which one then applies a subsumption decision procedure~\cite{Baader05a}.
In the following, we extend the notion of normal form to defeasible \ELbot~ontologies, and show that our subsumption decision procedure works fine for normalised ontologies as well.\footnote{KBs in normal form are also needed in Section~\ref{extend}.} That is, the normal form transformation of an ontology $\KB$ is a conservative extension of $\KB$ also \wrt~RC.
So, let us recap that a classical \ELbot~ontology is in \emph{normal form} if the axioms in it have the form:
\begin{itemize}
\item $C_1\subs D$
\item $C_1\dlAnd C_2\subs D$
\item $\exists r.C_1\subs D$
\item $C_1\subs \exists r.C_2$
\end{itemize}
\nd where $C_1,C_2\in \CN\cup\{\top\}$ and $D\in \CN\cup\{\top,\bot\}$. One may transform axioms into normal form by applying the following rules:
\begin{itemize}
\item[R1:] $C\dlAnd \hat{D}\subs E\mapsto  \hat{D}\subs A, C\dlAnd A\subs E$;
\item[R2:] $\exists r.\hat{C}\subs D\mapsto \hat{C}\subs A, \exists r.A\subs D$;
\item[R3:] $\bot\subs D\mapsto \emptyset$.
\end{itemize}
\begin{itemize}
\item[R4:] $\hat{C}\subs\hat{D}\mapsto \hat{C}\subs A, A\subs\hat{D}$;
\item[R5:] $B\subs\exists r.\hat{C}\mapsto B\subs \exists r.A, A\subs\hat{C}$;
\item[R6:] $B\subs C\dlAnd D\mapsto B\subs C,B\subs D$,
\end{itemize}
\nd where $\hat{C},\hat{D}\notin \CN\cup\{\top\}$, and $A$ is a new  atomic concept. Rules R1-R3 are applied first, then rules R4-R6 are applied, until no more rule can be applied. It is easily verified that the transformation is time polynomial and entailment preserving, \ie, given a TBox $\T$ and its normal form transformation $\T'$, then $\T\entails C\subs D$ iff $\T'\entails C\subs D$ for every $C,D$ not using any of the newly introduced atomic concepts $A$ by the rules $R1-R6$ above.

The above result is not sufficient to guarantee that we can apply this kind of transformation also to defeasible KBs. The problem is that the notion of \emph{logical equivalence} in a preferential setting, or \emph{rank equivalence} as we have called it up to now, follows rules that are slightly different from the ones characterising logical equivalence in classical reasoning. In particular, we are allowed to substitute a concept with a logically equivalent one on the left of a defeasible subsumption relation (LLE allows it) and on the right (a consequence of RW). However,  there are cases that are equivalent in their classical formulation but not equivalent in the preferential one.  For example, the two axioms  $C\subs D$ and $\top\subs\neg C\dlOr D$ are equivalent, while $C\usually D$ and $\top\usually\neg C\dlOr D$ are not rank equivalent (in fact, they convey different meanings, with the former indicating that all the most typical C's are D's, and the latter indicating that the most typical objects are either not C's or D's). Therefore, the normal form transformation rules have to preserve rank equivalence, which we are going to check next.

So, let $\KB=\tuple{\T,\D}$ be a defeasible ontology. We say that $\D$ is in \emph{normal form} if  each defeasible axiom in $\D$ is of the form
$A \usually B$, where $A,B \in \CN$. We say that $\KB$ is in \emph{normal form} if $\T$ and $\D$ are in normal form.
We next show how to transform  $\KB=\tuple{\T,\D}$ into normal form. First,  we replace every axiom $C\usually D\in\D$ with an axiom $A_C\usually A_D$ (with $A_C,A_D$ new atomic concepts), and add $A_C = C$ and $A_D = D$ to the TBox $\T$
(they are both  valid $\ELbot$ expressions); then we apply the classical $\ELbot$ normalisation steps to the axioms in $\T$.
In this way, we end up with a new knowledge base $\tuple{\T',\D'}$ that is in normal form. This transformation still remains, of course,  time polynomial.
\begin{example}\label{exnorm02}
A normal form of the KB in Example~\ref{ex02} is  $\KB=\tuple{\T,\D}$ with
\begin{eqnarray*}
\T  & = \{ &\dl{BRBC}\subs\dl{MRBC}, \\
&& \dl{ARBC}\subs\dl{VRBC}, \\
&& \dl{MRBC}\subs\dl{VRBC}, \\
&&  \exists\dl{hasN}.\top \subs\dl{A_1}, \\
&& \dl{A_1}\subs\exists\dl{hasN}.\top,\\
&&\exists\dl{hasCM}.\top  \subs \dl{A_2},\\
&& \dl{A_2}\subs\exists\dl{hasCM}.\top,\\
&&  \dl{A_1} \dlAnd \dl{NotN}\subs\bot \ \ \}  \\ \\
 \D & = \{ & \dl{VRBC}\usually\dl{A_2},\\
 &&  \dl{VRBC}\usually \dl{A_1},\\
 && \dl{MRBC}\usually \dl{NotN} \  \ \} \ .
\end{eqnarray*}
\qed
\end{example}
\nd We can prove that the ranking procedure gives back equivalent results whether we apply it to $\KB=\tuple{\T,\D}$ or to its normal form transformation  $\KB'=\tuple{\T',\D'}$. To this end, it suffices to show the following. 
\begin{proposition}\label{normalformrank}
Given an ontology $\KB=\tuple{\T,\D}$, $C \usually D \in \D$ and the corresponding ontology in normal form $\KB'=\tuple{\T',\D'}$, then $C$ is exceptional \wrt~$\KB$~iff $A_C$ is exceptional \wrt~$\KB'$, where $A_C$ is the new atomic concept introduced by the normalisation procedure to replace $C \usually D$ with $A_C \usually A_D$.
\qed
\end{proposition}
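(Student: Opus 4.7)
The plan is to exploit the fact that the normalisation procedure only adds definitional equivalences $A_C = C$ (and analogous ones for the fresh atoms introduced by the classical normalisation rules R1--R6), so the ranked models of $\KB'$ are in essence the ranked models of $\KB$ enriched with a canonical interpretation of the fresh symbols. I would therefore use the semantic characterisation of exceptionality (in terms of ranked entailment of $\top \usually \neg C$) rather than the syntactic one, since Proposition~\ref{exceptB} gives only one direction, and the ``if and only if'' version in Proposition~\ref{exceptB2} is established only \emph{after} the ranking procedure has been run.

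The first step would be to establish a correspondence between the ranked models of $\KB$ and $\KB'$. Given a ranked model $R = \tuple{\Delta^R,\cdot^R,\prec^R}$ of $\KB$, I define $R' = \tuple{\Delta^R,\cdot^{R'},\prec^R}$ by keeping the same domain and preference relation, letting $\cdot^{R'}$ agree with $\cdot^R$ on all symbols of the original vocabulary, and extending it to the fresh symbols as follows: for each defeasible axiom $C \usually D \in \D$ that is rewritten as $A_C \usually A_D$, set $A_C^{R'} = C^R$ and $A_D^{R'} = D^R$; for every fresh atomic concept $A$ introduced by one of the classical normalisation rules R1--R6 applied to a subconcept $\hat E$, set $A^{R'} = \hat E^R$. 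A straightforward case analysis on the rules R1--R6 shows that $R'$ satisfies every axiom in $\T'$ (this is exactly the conservativity argument that justifies classical $\ELbot$ normalisation), and by construction the added axioms $A_C \subs C$, $C \subs A_C$, $A_D \subs D$, $D \subs A_D$ are satisfied, and $R' \sat A_C \usually A_D$ because $\min_{\prec^R}(A_C^{R'}) = \min_{\prec^R}(C^R) \subseteq D^R = A_D^{R'}$. Hence $R'$ is a ranked model of $\KB'$. Conversely, for any ranked model $R'$ of $\KB'$, its restriction $R$ to the original vocabulary is a ranked model of $\KB$, and since $\T'$ contains $A_C = C$, in $R'$ we must have $A_C^{R'} = C^{R'} = C^R$.

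The conclusion is then immediate from Definition~\ref{exceptionality}. By that definition together with Definition~\ref{rankedent}, $C$ is exceptional \wrt~$\KB$ iff in every ranked model $R$ of $\KB$ we have $C^R \cap \min_{\prec^R}(\Delta^R) = \emptyset$, and likewise $A_C$ is exceptional \wrt~$\KB'$ iff in every ranked model $R'$ of $\KB'$ we have $A_C^{R'} \cap \min_{\prec^{R'}}(\Delta^{R'}) = \emptyset$. Because the correspondence above is a bijection (up to the interpretation of fresh symbols, which is forced by the added equivalences) and preserves both the domain and the preference order, the two conditions coincide, using $A_C^{R'} = C^R$ on corresponding models.

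The main obstacle I anticipate is the bookkeeping for the correspondence of models under the classical normalisation rules R1--R6, since these introduce fresh atomic concepts inside concept expressions that can nest arbitrarily and are applied iteratively. However, this is exactly the conservativity argument already known for classical $\ELbot$ normalisation; the only genuinely new ingredient is that the preference relation $\prec^R$ need not be modified when extending $R$ to $R'$, which is clear since the fresh atomic concepts occur only in classical GCIs of $\T'$ whose satisfaction does not depend on $\prec^R$, and the replaced defeasible axioms $A_C \usually A_D$ have the same satisfaction condition as the originals because $A_C^{R'} = C^R$ and $A_D^{R'} = D^R$.
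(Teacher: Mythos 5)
Your proof is correct, but it takes a genuinely different route from the paper's. The paper argues syntactically: it invokes Proposition~\ref{exceptB} to replace exceptionality by the classical test $\T_{\delta_\D}\entails C\dlAnd\delta_\D\subs\bot$, introduces an intermediate TBox $\T^+$ containing the definitional axioms $A_E = E$ and $A_F = F$, shows by a short model-extension argument that the test for $C$ over $\T_{\delta_\D}$ is equivalent to the test for $A_C$ over $\T^+_{\delta_\D}$, and then passes from $\T^+_{\delta_\D}$ to $\T'_{\delta_\D}$ by quoting, once and at the level of whole TBoxes, the satisfaction-preservation of classical \ELbot~normalisation. You instead work directly with Definition~\ref{exceptionality}, building a two-way correspondence between ranked models of $\KB$ and of $\KB'$ that keeps the domain and the preference order fixed and forces $A_C^{R'}=C^{R}$. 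Each approach buys something: your semantic route proves the literal statement of the proposition and, as you rightly flag, does not read Proposition~\ref{exceptB} as a biconditional --- which the paper's proof does, even though \ref{exceptB} is stated (and in general provable) in one direction only, the clean ``iff'' being available only after the ranking procedure (Proposition~\ref{exceptB2}); the price is that you must redo the R1--R6 conservativity bookkeeping model-theoretically, which the paper avoids. One small inaccuracy: the correspondence you describe is not a bijection, since the fresh atoms introduced by R4 and R5 are only constrained to lie between two concept extensions rather than being uniquely determined; but your argument only needs that every ranked model of $\KB$ extends to one of $\KB'$ and every ranked model of $\KB'$ restricts to one of $\KB$, in both cases preserving $\prec$ and the extension of $C$ (equivalently of $A_C$), so this does not affect correctness.
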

\nd Therefore, given an ontology $\KB=\tuple{\T,\D}$, $C \usually D \in \D$ and the corresponding ontology in normal form $\KB'=\tuple{\T',\D'}$,  by Proposition~\ref{normalformrank} and by construction of the  $\mathtt{ComputeRanking}$ procedure it follows that $C \usually D \in \D_i$ iff $A_C \usually A_D \in \D'_i$, where $\R =\{\D_0,\ldots,\D_n\}$ and $\R' =\{\D'_0,\ldots,\D'_n\}$ are the partitions computed by the ranking procedure applied to $\KB$ and $\KB'$, respectively. From this, we immediately get the following result.
\begin{proposition}\label{normalform}
Given an ontology $\KB=\tuple{\T,\D}$ and its normal form translation $\KB'=\tuple{\T',\D'}$, then for every pair of atomic \ELbot-concepts $A,B$ occurring in \KB, 
$\KB\ratent A\usually B$ iff $\KB'\ratent A\usually B$.
\qed
\end{proposition}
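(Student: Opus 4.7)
The plan is to leverage Proposition~\ref{normalformrank} together with the structure of the $\mathtt{ComputeRanking}$ and $\mathtt{RationalClosure}$ procedures. First I would prove, by induction on the iterations of the outer \texttt{repeat} loop in $\mathtt{ComputeRanking}$, that the rankings produced from $\KB$ and $\KB'$ are in bijective correspondence: for every $C \usually D \in \D$ with normalised counterpart $A_C \usually A_D \in \D'$, we have $C \usually D \in \D_i$ iff $A_C \usually A_D \in \D'_i$, and symmetrically $C \subs \bot$ is added to $\T^*$ iff $A_C \subs \bot$ is added to $(\T')^*$. Each inductive step is an application of Proposition~\ref{normalformrank} to the current intermediate ontology, noting that augmenting a normal-form ontology with axioms of the shape $A_C \subs \bot$ keeps it in normal form and keeps it as the normalised counterpart of the augmentation of the original ontology with $C \subs \bot$.

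Next, with the ranking correspondence in hand, I would reduce the equivalence $\KB \ratent A \usually B$ iff $\KB' \ratent A \usually B$ to the classical \ELbot~subsumption tests performed inside $\mathtt{RationalClosure}$. By Proposition~\ref{completeness} this amounts to showing that $\mathtt{RationalClosure}(\KB, A \usually B)$ and $\mathtt{RationalClosure}(\KB', A \usually B)$ return the same Boolean. The classical checks at lines~1 and~5 agree on $\KB$ and $\KB'$ because the standard \ELbot~normalisation is a conservative extension with respect to classical entailment over the original signature. The key remaining obligation is that, for each $i$, the exceptionality and consequence tests $\T^*_{\delta_i} \entails A \dlAnd \delta_i \subs \bot$ and $\T^*_{\delta_i} \entails A \dlAnd \delta_i \subs B$ succeed on $\KB$ exactly when the corresponding tests succeed on $\KB'$.

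For this last piece I would argue that $(\T')^*_{\delta_i}$ is (up to the fresh atoms introduced by normalisation) itself a normal-form translation of $\T^*_{\delta_i}$: indeed the extra axioms $E \dlAnd \delta_i \subs F$ added in~(\ref{Tdeltai}) for $\KB'$ are precisely the axioms $A_E \dlAnd \delta_i \subs A_F$ obtained by replacing each defeasible $E \usually F$ of rank $\geq i$ with its normalised counterpart and then normalising the conjunction on the left. Since $\delta_i$ occurs only in these axioms and nowhere in the rules R1-R6, the same conservative-extension argument that works for classical \ELbot~applies here to any subsumption whose signature is included in the signature of $\KB$ extended by $\delta_i$. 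Both entailment tests above are of this form, so they agree.

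The main obstacle will be the careful bookkeeping of the fresh atoms: the concepts $A_C$ introduced during normalisation and the auxiliary atoms $\delta_i$ introduced inside $\mathtt{RationalClosure}$ are different kinds of extensions, and one must verify that they do not interact. Once one checks that each $A_C$ behaves, in $\T'$, as a definitional abbreviation of $C$ and that normalisation never reaches inside the subformulas involving $\delta_i$, the argument reduces to a routine signature-restriction and conservative-extension check, and the proposition follows directly.
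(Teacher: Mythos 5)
Your proof is correct and follows essentially the same route as the paper: first the ranking correspondence, obtained by applying Proposition~\ref{normalformrank} inductively through the iterations of $\mathtt{ComputeRanking}$ (noting that the added $A_C\subs\bot$ axioms keep the intermediate ontologies in normalised correspondence), and then a conservative-extension/signature-restriction argument showing that the classical $\T^*_{\delta_i}$ subsumption tests of $\mathtt{RationalClosure}$ agree on $\KB$ and $\KB'$. The paper compresses the second half into ``From this, we immediately get the following result,'' so your explicit handling of the $\delta_i$-augmented TBoxes and of the non-interaction between the $A_C$ and $\delta_i$ atoms is an elaboration of the intended argument rather than a different one.
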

\nd Observe that, using a KB in normal form, all steps of our  RC decision procedure use classical \ELbot~TBoxes in normal form (as axioms $A_C\dlAnd\delta_i\subs A_D$ are  in normal form, too).

%
\section{Defeasible Inheritance-based Description Logics}\label{extend}
%
\nd So far, we have considered RC~\cite{KrausEtAl1990,Lehmann1995}: both the procedural and the semantic characterisations are well defined, and directly model a principle that is at the core of typicality reasoning: the \emph{presumption of typicality}. Moreover, RC is a \emph{syntactically independent} form of closure. That is, all the KBs that are rank equivalent generate the same RC (to the best of our knowledge, no other form of closure extending RC satisfies this property). 

\begin{proposition}\label{RC_syntactic_independent}
Let $\KB=\tuple{\T,\D}$ and $\KB'=\tuple{\T',\D'}$ be rank equivalent. For every $\ELbot$ defeasible GCI $C\usually D$, $C\usually D$ is in the RC of $\KB$ iff $C\usually D$ is in the RC of $\KB'$.
\end{proposition}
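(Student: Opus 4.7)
The plan is to exploit the semantic nature of rank equivalence: by assumption $\R^\KB = \R^{\KB'}$, so any notion built purely on top of the class of ranked models is invariant. I would show that RC is such a notion.

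The cleanest route goes through the semantic characterisation of RC via minimal ranked models that is referenced in the text (\cite{BritzEtAl2015a,Giordano15}). Inspecting Definition \ref{def_minimal_giordano}, the preference $<$ between ranked interpretations is defined purely in terms of the heights $h_R$ of domain elements on a common domain and refers to no syntactic data of the ontology. Consequently, whether a ranked interpretation $R$ is minimal \wrt~an ontology $\KB$ depends only on $\R^\KB$ and on $R$ itself. Since $\R^\KB = \R^{\KB'}$, the classes of minimal ranked models of $\KB$ and $\KB'$ coincide; and because the cited characterisations identify the RC with the set of defeasible GCIs satisfied in these minimal models, the RCs of $\KB$ and $\KB'$ coincide, which is exactly the statement of the proposition.

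For a direct proof from Definitions \ref{exceptionality}--\ref{RCdef} one would first note that ranked entailment (Definition \ref{rankedent}) and exceptionality in its model-theoretic form $C^R \cap \min_{\prec^R}(\Delta^R) = \emptyset$ for every $R \in \R^\KB$ (which avoids any appeal to negation and is therefore well-defined in \ELbot) are already invariants of $\R^\KB$; hence the rank-$0$ concepts agree in $\KB$ and $\KB'$. One would then induct on $i$, using the semantic rank $r_\KB(C)$ on the right-hand side of Definition \ref{RCdef}. The subtle point, and what I expect to be the main obstacle, is that the pruned intermediate ontologies $\tuple{\T, \D \setminus \bigcup_{j<i}\D_j}$ and $\tuple{\T', \D' \setminus \bigcup_{j<i}\D'_j}$ need not themselves be rank equivalent: their classes of ranked models are in general strict supersets of $\R^\KB = \R^{\KB'}$ and may diverge. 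The induction must therefore be formulated in terms of the semantic rank of concepts directly, rather than by re-applying the hypothesis ``same ranked models'' to the pruned KBs. This technical nuisance is precisely what the minimal-model route sidesteps, which is why I would present the minimal-model argument as the primary proof.
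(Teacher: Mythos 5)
Your overall strategy---reduce RC to a characterisation that depends only on the class $\R^\KB$ of ranked models, then observe that rank equivalence means $\R^\KB=\R^{\KB'}$---is exactly the idea behind the paper's own proof. The paper, however, routes the argument through the \emph{characteristic model} $\ru$ of~\ref{canmod}: the ranked merge of \emph{all} models in $\R^\KB_\Delta$ over a fixed countably infinite domain $\Delta$. Since $\ru$ is built from $\R^\KB_\Delta$ and nothing else, $\R^\KB_\Delta=\R^{\KB'}_\Delta$ immediately gives that $\ru$ coincides with $R^\cup_{\KB'}$, and Expression~\ref{RC_model} ($C\usually D$ is in the RC of $\KB$ iff $\ru\sat C\usually D$) finishes the proof with no minimality or canonicity considerations at all.

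Your primary route has a concrete gap where you invoke the minimal-model characterisation. It is true, and purely semantic, that minimality \wrt~$<$ is an invariant of $\R^\KB$; but the characterisation of RC recorded in the paper (Expression~\ref{giordano_eq}) quantifies over the minimal \emph{canonical} models $\min^{\mathfrak{c}}_{<}(\R^\KB)$, and canonicity (Definition~\ref{def_canonical_giordano}) is defined relative to the set $S$ of concepts \emph{appearing in} $\KB$---a syntactic datum that rank-equivalent ontologies need not share. Plain minimal models do not characterise RC (a minimal model lacking witnesses for some consistent concept combination can misplace the height of a concept), so the canonicity condition cannot simply be dropped; and once it is kept, the claim that the relevant classes of models coincide no longer follows from $\R^\KB=\R^{\KB'}$ alone. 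The gap is repairable---Proposition~\ref{rumin_prop} and Lemma~\ref{lemma_height_cup} show that every minimal canonical model assigns each concept the same height as $\ru$---but that repair is essentially a detour back to the $\ru$ characterisation the paper uses directly. Your diagnosis of the obstacle in the ``direct'' route (the pruned ontologies $\tuple{\T,\D\setminus\bigcup_{j<i}\D_j}$ need not be rank equivalent) is correct and is indeed why a purely procedural induction is the wrong tool here.
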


\nd Yet, it is well-known that the main limitation of RC, from an inferential point of view, is that an exceptional subclass cannot inherit \emph{any} of the typical properties of its superclass (also known as \emph{Drowning Effect})

%
\begin{example}\label{stopinherit}
Consider Examples~\ref{exranking} and \ref{exdecision}. The mammalian red blood cells are an exceptional subclass of the vertebrate red blood cells since they do not have a nucleus. So, the conflict determining the exceptionality of $\dl{MRBC}$ is determined by the axioms $\dl{VRBC}\usually\exists\dl{hasN}.\top$ and $\dl{MRBC}\usually \dl{NotN}$, while there is no conflict \wrt~the axiom $\dl{VRBC}\usually\exists\dl{hasCM}.\top$. Hence, it seems still reasonable to conclude that mammalian red blood cells have a cell membrane (\ie, $\dl{MRBC}\usually\exists\dl{hasCM}.\top$). However, as shown in Example~\ref{exranking}, $\dl{VRBC}\usually\exists\dl{hasCM}.\top\in\D_0$ and the rank of $\dl{MRBC}$ is $1$ and, thus, we cannot conclude $\dl{MRBC}\usually\exists\dl{hasCM}.\top$ under RC.
\qed
\end{example}
\nd In order to overcome such inferential limits, some closure operations extending RC have been proposed in DLs: for example, the defeasible Inheritance-based approach~\cite{CasiniStraccia13}, the Relevant Closure~\cite{CasiniEtAl2014}, and the Lexicographic Closure \cite{CasiniStraccia14}.  Here we consider a modification of RC based on the use of inheritance nets to identify the axioms taking part in each specific conflict~\cite{CasiniStraccia13}. As we are going to see, the Defeasible Inheritance-based approach is  interesting when considering the \ELbot~framework since both it allows to overcome the drowning effect and preserves computational tractability.\footnote{In~\cite[Appendix A]{CasiniStraccia13} it is also shown that the defeasible inheritance-based approach behaves well and better than RC \wrt~most of the ``benchmark'' examples illustrated there.}
\begin{remark}\label{remRelRC}
At the time of writing, we neither found  a tractable procedure to decide defeasible subsumption under  Relevant Closure nor for  
Lexicographic Closure~\cite{CasiniEtAl2014,CasiniStraccia14}. In fact, we conjecture that no tractable procedures exist for these cases.
\qed
\end{remark}


\newcommand{\strict}{\Rightarrow}
\newcommand{\defar}{\rightarrow}
\newcommand{\notdefar}{\not \rightarrow}
\newcommand{\negar}{\Leftrightarrow^{\neg}}
\newcommand{\andar}{\Leftrightarrow^{\wedge}}
\newcommand{\orar}{\Leftrightarrow^{\vee}}
\newcommand{\notLR}{\not\hspace*{-0.5mm}\Leftrightarrow}
\newcommand{\NK}{\N_\KB}
\newcommand{\Phik}{\Phi_\KB}
\newcommand{\Deltak}{\Delta_\KB}


\nd In the \emph{Defeasible Inheritance-based} approach~\cite{CasiniStraccia13} the axioms in the TBox and in the DBox are translated into an inheritance net~\cite{Horty94}. Such a construction allows us to apply the RC procedure locally, in such a way that if we want to decide whether $C\usually D$ holds, the exceptionality ranking and the RC are calculated considering only the information in the KB that has some connection to $C$ and $D$. For more details about the formalisation and inference properties of the approach, we refer the reader to~\cite{CasiniStraccia13}.

\paragraph{Basic notions on Inheritance Nets.} Nevertheless, for the sake of self-containedness, we briefly recap here some salient notions from~\cite{CasiniStraccia13}. 
In \emph{Defeasible Inheritance Nets}, or simply \emph{Inheritance Nets} (INs)~\cite{Horty94} there are classes (\emph{nodes}), a strict subsumption relation and a defeasible subsumption relation among such classes (\emph{links}). An IN is a pair $\N=\tuple{S,D}$, where $S$  is a set of \emph{strict links}, while $D$  is a set of \emph{defeasible links}. Every link in $\N$ is  a \emph{direct} link, and it can be strict or defeasible, positive or negative. Specifically,
\begin{enumerate}
\item  $C\strict D$: class $C$ is subsumed by class $Q$ [positive strict link];

\item $C\notLR D$: class $C$ and class $Q$ are disjoint [negative strict link];

\item $C\defar D$: an  element of the class $C$ is usually an  element of the class $D$ [positive defeasible link];

\item $C\notdefar D$: an  element of the class $C$ is usually not an  element of the class $Q$ [negative defeasible link].
\end{enumerate}


\begin{definition}[Course, Definition 3.1 in~\cite{CasiniStraccia13}]\label{defcourse}
\emph{Courses} are defined as follows (where $\star \in \{\strict,\notLR,\defar,\notdefar\}$):
\begin{enumerate}
\item every link $C\star D$ in $\N$ is a course $\pi=\tuple{C,D}$ in $\N$; and
\item if $\pi=\tuple{\sigma,C}$ is a course and  $C\star D$ is a link in $\N$ that does not already appear in $\pi$, then $\pi'=\tuple{\pi,D}$ is a course in $\N$.
\end{enumerate}
\end{definition}

\nd Roughly, courses are simply routes on the net following the direction of the arrows, without considering if each of them is a positive or a negative arrow.

In~\cite{CasiniStraccia13}, INs have been extended to, so-called \emph{Boolean Inheritance Nets} (BINs), that allow additionally to represent the negation, conjunction and disjunction of nodes. For what concerns us here, given nodes $C,D$ and $E$, a \emph{conjunction link} is of the form $C,D\andar E$ (read as the conjunction of $C$ and $D$ is equivalent to $E$). We will assume that inheritance nets containing such links are closed according to the following rule: if there is $C,D\andar E$ in a net, then there are also 
$E \strict C$ and $E \strict D$ in the net. Furthermore, the notion of courses is extended to BINs, calling them \emph{ducts}~\cite[Definition 3.2]{CasiniStraccia13}. That is, we consider not only `linear' routes from one node to another, but also `parallel' routes, in order to model the introduction of the conjunction. Roughly, 
\[
\pi=\tuple{C,\frac{\sigma}{\sigma'},D}
\]
\nd will indicate a duct $\pi$ that starts at node $C$ and develops through the ducts $\sigma$ and $\sigma'$, both reaching the node $D$.

\begin{definition}[Duct, Definition 3.2 in~\cite{CasiniStraccia13}]\label{duct}
\emph{Ducts} are defined as follows (where $\star \in \{\strict,\notLR,\defar,\notdefar\}$):
\begin{enumerate}
\item every link $C\star D$ in $\N$ is a duct $\pi=\tuple{C,D}$ in $\N$;
\item if $\pi=\tuple{C,\sigma,D}$ is a duct and  $D\star E$ is a link in $\N$ that does not already appear in $\sigma$, then $\pi'=\tuple{C,\sigma,D,E}$ is a duct in $\N$;
\item if $\pi=\tuple{C,\sigma,D}$ is a duct and  $E\star C$ is a link in $\N$ that does not already appear in $\sigma$, then $\pi'=\tuple{E,C,\sigma,D}$ is a duct in $\N$;
\item if $\tuple{C,\sigma,D}$ and $\tuple{C,\sigma',E}$ are ducts and  $D,E \andar F$ is a link in $\N$ that does not already appear in $\tuple{C,\sigma,D}$ and in $\tuple{C,\sigma',E}$, then $\tuple{C,\frac{\sigma,D}{\sigma',E},F}$ is a duct. \qed

\end{enumerate}
\end{definition}


\paragraph{A Decision procedure for INs-based \ELbot.} 
Now, the adaptation of the inheritance-based decision procedure~\cite[Section 5]{CasiniStraccia13} to $\ELbot$ can be formalised in the following way.
First of all, we assume that a KB $\tuple{\T,\D}$ has already been transformed into \emph{normal form}, as discussed in Section~\ref{normalforms}. Then, we create an inheritance net $\N_\KB$ representing the content of the KB. The procedure is essentially the one in~\cite{CasiniStraccia13}, just constrained to \ELbot. 

That is, given a KB $\tuple{\T,\D}$:
\begin{enumerate}
\item for every atomic concept appearing in the axioms in $\KB$, we create a corresponding node in the net;

\item for every axiom $A \usually B \in\D$, we add the defeasible link $A\defar B$ to the net;

\item for every axiom $A\dlAnd B\subs\bot\in\T$ we add the symmetric incompatibility link $A\notLR B$ to the net;

\item for all the remaining axioms (in normal form) $C\subs D\in\T$, we introduce (if not already present) two nodes in the net representing the concepts $C$ and $D$, respectively, and add the strict connection $C\strict D$ to the net;

\item we then complete the inheritance net $\N_\KB$ doing a total classification of the concepts appearing as nodes in the net. That is, for concepts $C,D$, where $C$ and concept $D$ are either atomic concepts or of the form $\exists r.F$, and $E$ (not being $\top)$, that have a node representation in $\N_\KB$, if $\T\entails C\dlAnd D\subs E$ holds then 
we add also a link 
$C,D \andar E$ to the net $\N_\KB$.
\end{enumerate}
\nd Note that nodes in $\N_\KB$ represent concepts that are either $\bot,\top$, atomic, of the form $\exists r.F$ or the conjunction of two atomic concepts.

%

Now, the  procedure for the closure of a KB $\tuple{\T,\D}$ using the Defeasible Inheritance-based approach, adapted to $\ELbot$, is as follows~\cite{CasiniStraccia13}:
\begin{description}
\item[Step 1.] Given $\KB=\tuple{\T,\D}$, check if $\KB$ has a ranked model  (see Corollary \ref{consistency} and Remark \ref{reelbotcla}). If yes, then define an inheritance net $\NK$ from $\KB$, as illustrated before.

\item[Step 2.] Set $\D_{in}=\D$.
For every pair of nodes $\tuple{C,D}$ such that $C$ and $D$ appear in the net $\NK$, do the following:

	\begin{itemize}
	
	\item determine the set $\Delta_{C,D}$ of defeasible links $E\defar F$ appearing in a duct from $C$ to $D$;
	
	\item consider the KB $\KB'=\tuple{\T,\D'}$, where $\D'\subseteq\D$ is the set of the defeasible axioms corresponding to the defeasible links in $\Delta_{C,D}$;
	
	\item if $C\usually D$ is in the RC of $\KB'$ (\ie~$\KB'\ratent C\usually D$), then add $C\usually D$ to $\D_{in}$.
	\end{itemize}

\item[Step 3.] Finally, let $\KB_{in}=\tuple{\T,\D_{in}}$. We can use the decision procedure for RC on $\KB_{in}$,  defining the non-monotonic inference relation $\vdash_{in}$
as
\[
\KB\vdash_{in}C\usually D\text{ iff } \KB_{in}\ratent C \usually D\ .
\]
\end{description}
\nd The above steps are implemented in procedure  $\mathtt{InheritanceBasedRationalClosure}$.

 \begin{procedure}[t]
\caption{InheritanceBasedRationalClosure($\KB,\alpha$) \label{INRCalg}}
{\small
\KwIn{Ontology $\KB$ and defeasible $\ELbot$ axiom $\alpha$}
\KwOut{$\mathtt{true}$ iff $\KB\vdash_{in} \alpha$}
$\tuple{\KB,\alpha}$:=$\mathtt{Normalise}(\KB,\alpha)$ //normalise both $\KB$ and $\alpha$\;
$CTD$ : = $\mathtt{RationalClosure}(\KB,\top \usually \bot)$ //Check if $\KB$ rank unsatisfiable\;
  \If{CTD}{\Return{$CTD$}}
$\NK$:= $\mathtt{BuildInheritanceNet}(\KB)$ //Build inheritance net $\NK$ from $\KB$\;
$\D_{in}$:= $\D$\;
 \ForEach{$C,D\in\NK$}{	
 $\Delta_{C,D}$:= $\{ E\star F \mid E\star F \mbox{ defeasible link occurring in a duct from $C$ to $D$}\}$\;
 $\D'$ := $\{E \usually F \in \D \mid E\defar F \in  \Delta_{C,D} \} \cup \{E \usually \neg F \in \D \mid E\notdefar F \in  \Delta_{C,D} \}$ \;
 $\KB'$ := $\tuple{\T,\D'}$\;
  \If{$\mathtt{RationalClosure}(\KB',C\usually D)$}{$\D_{in}$:= $\D_{in} \cup \{ C\usually D\}$}
 }
 // Check now if $\alpha$ holds under Inheritance-Based RC\;
 $\KB_{in}$ := $\tuple{\T,\D_{in}}$\;
 $IRC$ : = $\mathtt{RationalClosure}(\KB_{in},\alpha)$\;
\Return{$IRC$}\;
}
\end{procedure}

\begin{remark} \label{clar}
One may wonder whether the third item in Step 2 may be replaced with the simpler form
\begin{itemize}
\item[(*)] if $C\usually D$ is in $\D'$ then add $C\usually D$ to $\D_{in}$.
\end{itemize}

\nd Unfortunately, this choice does not provide the intended behaviour as the following variant of the ``penguin" example illustrates (see also~\cite[Example 3.1]{CasiniStraccia13}). Consider $\K = \tuple{\T,\D}$ with\footnote{$\dl{P,B,W,F}$ stand for Penguin, Bird, Wing and Fly, respectively.}
\begin{eqnarray*}
\T & = & \{\dl{F} \dlAnd \dl{NF} \subs \bot \\ 
\D & = & \{\dl{P} \usually \dl{B}, \dl{B} \usually  \dl{F}, \dl{P} \usually  \dl{NF}, \dl{P} \usually  \dl{W} \}
\end{eqnarray*}
\nd and corresponding inheritance net:

\begin{center}
\begin{tikzpicture}[transform shape]
\coordinate [circle, fill=black, label=above:$\dl{B}$] (b) at (1.5,1);
\coordinate [circle, fill=black, label=below:$\dl{P}$] (p) at (1.5,-1);
\coordinate [circle, fill=black, label=below:$\dl{NF}$] (n) at (3,-1);
\coordinate [circle, fill=black, label=above:$\dl{F}$] (f) at (3,0);
\coordinate [circle, fill=black, label=left:$\dl{W}$] (w) at (3,2);

\foreach \from/\to in {b/f, b/w, p/b, p/n}
\draw [->] (\from) -- (\to);
\draw[<->,strike through, double] ( n) -- ( f);
\end{tikzpicture}

\end{center}

%
%
%

\nd It then can be shown  (as desired) that
\[
\KB\vdash_{in} \dl{P}\usually \dl{W} \ ,
\]
\nd In fact, as $\Delta_{\dl{P} ,\dl{W}} = \{ \dl{P}\usually \dl{B}, \dl{B}\usually \dl{W}\}$, by item three in Step 2., we add $\dl{P}\usually \dl{W}$ to $\D_{in}$, which is not the case if $(*)$ is used instead. Therefore, as $\dl{P}$ is an exceptional subclass of $\dl{B}$, we have
\begin{eqnarray*}
\KB & \not\ratent  & \dl{P}\usually \dl{W} \\
\KB & \not \vdash_{in} &\dl{P}\usually \dl{W} \ , \mbox{ if using $(*)$ instead} \ .
\end{eqnarray*}
\end{remark}

\nd The correspondence of this procedure to the more general \ALC~procedure in~\cite{CasiniStraccia13} is guaranteed by Proposition~\ref{completeness}, proving that the procedure $\mathtt{RationalClosure}$ is correct and complete \wrt~RC, and the fact that the present definition of \emph{ducts} is just the \ELbot~restriction of the more general definition for \ALC~(\cite{CasiniStraccia13}, Definition 3.2).

\begin{example}[Example~\ref{exnorm02} cont.]\label{INex}
Consider Example \ref{exnorm02}. The  inheritance net $\NK$ built from $\KB$ is illustrated in Figure~\ref{net}.
\begin{figure}[h]
\begin{center}

\begin{tikzpicture}[transform shape]
\coordinate [circle, fill=black, label=above:$\dl{BRBC}$] (g) at (-3,0);
\coordinate [circle, fill=black, label=above:$\dl{MRBC}$] (a) at (-1,0);
\coordinate [circle, fill=black, label=below:$\dl{VRBC}$] (b) at (1,0);
\coordinate [circle, fill=black, label=above:$\dl{ARBC}$] (e) at (1,2);

\coordinate [circle, fill=black, label=above:$\exists\dl{hasCM}.\top$] (h) at (5.5,1.5);
\coordinate [circle, fill=black, label=below:$\exists\dl{hasN}.\top$] (i) at (5.5,-1.5);

\coordinate [circle, fill=black, label=above:$\dl{A_2}$] (d) at (3.5,1.5);
\coordinate [circle, fill=black, label=below:$\dl{A_1}$] (c) at (3.5,-1.5);

\coordinate [circle, fill=black, label=below:$\dl{NotN}$] (f) at (1.5,-1.5);

\foreach \from/\to in {b/c,b/d,a/f}
\draw [->] (\from) -- (\to);
\draw [<->, strike through,double] (f) -- (c);
\draw [<->,double] (d) -- (h);
\draw [<->,double] (c) -- (i);

\draw [->, double] (a) -- (b);
\draw [->, double] (e) -- (b);
\draw [->, double] (g) -- (a);
\end{tikzpicture}
\end{center}
\caption{Inheritance net built from Example~\ref{INex}.} \label{net}
\end{figure}
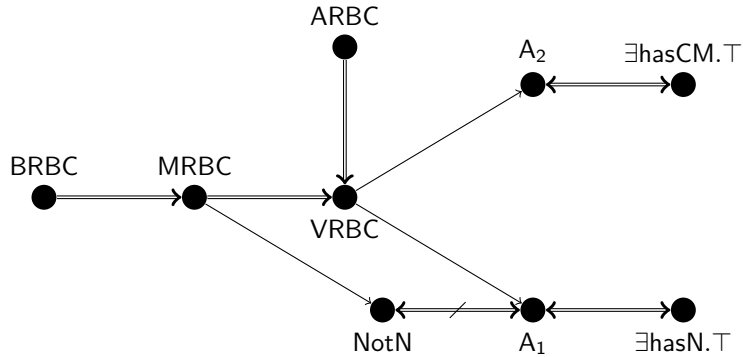
\nd We want to check whether $\KB\vdash_{in}\dl{MRBC}\usually\exists\dl{hasCM}.\top$ holds. To do so, we compute all  ducts between $\dl{MRBC}$ and $\dl{hasCM}.\top$.
There is only one and it includes the defeasible link representing $\dl{VRBC}\usually\dl{A_2}$.
Therefore, in  Step 10 of the $\mathtt{InheritanceBasedRationalClosure}$ procedure, we set
$\D'=\{ \dl{VRBC}\usually \dl{A_2}\}$.
It can be verified that $\dl{MRBC}\usually \dl{A_2}$ is in the RC of $\KB' = \tuple{\T,\D'}$ and, thus, by Step 12,
$\dl{MRBC}\usually \dl{A_2}$ is added  to $\D_{in}$, from which $\KB\vdash_{in} \dl{MRBC}\usually\exists\dl{hasCM}.\top$ follows.

Next, we want also to show that indeed $\KB \not \vdash_{in} \dl{MRBC}\usually\exists\dl{hasN}.\top$. Now, observe that for nodes
$C \in \{ \dl{MRBC} \}$ and $D \in\{\dl{A_1}, \dl{hasN}.\top\}$, $\D'$ in line 9 of the procedure is
\begin{eqnarray*}
\D' & = & \{\dl{MRBC} \usually \dl{hasN.\top}, \dl{MRBC} \usually  \dl{NotN} \} \ .
\end{eqnarray*}
\nd Therfore, for $D \in\{\dl{A_1}, \dl{hasN}.\top\}$
\begin{eqnarray*}
\KB' & \not \ratent & \dl{MRBC}\usually D \ ,
\end{eqnarray*}
\nd holds and, thus, $ \dl{MRBC}\usually D$ is not added to $\D_{in}$.  Consequently, for $D \in\{\dl{A_1}, \dl{hasN}.\top\}$
\begin{eqnarray*}
\KB & \not \vdash_{in} & \dl{MRBC}\usually D 
\end{eqnarray*}
\nd holds, as expected.
\qed
\end{example}
\paragraph{Computational Complexity.} Let us now address the computational complexity of the \\ $\mathtt{InheritanceBasedRationalClosure}$ procedure. To start with, note that the normalisation Step 1 can be done in linear time~\cite{Baader05a}, yielding a normalised KB whose size is linear in the size of $\KB$.
From the procedure described in Section~\ref{ratclosEL}, it follows that Step 2 of  $\mathtt{InheritanceBasedRationalClosure}$ requires at most $\mathcal{O}(|\D|^3 + |\D|)$ subsumption tests.

Let us now estimate the cost of Step 5.
The first step of the construction of $\NK$ creates as many nodes as there are atomic concepts in $\KB$ and, thus, both size and time bound are
$\mathcal{O}(|\KB|)$. The second step is obviously bounded both in time and size of number of edges by $\mathcal{O}(|\D|)$. Similarly, the third  and fourth step can be done together and are bounded both in time and size of number of new nodes and edges by $\mathcal{O}(|\T|)$. Therefore, Steps 1 - 4 can be done in time $\mathcal{O}(|\KB|)$, while for network $\NK$,  the number of both nodes and edges is bounded by $\mathcal{O}(|\KB|)$. Eventually, it can easily be verified that, including the fifth step of the construction of $\NK$, the overall time bound of  the construction of $\NK$ is time and size (number of nodes as well as edges) bounded by $\mathcal{O}(|\KB|^2)$.

The number of iterations of Steps 7 - 12 is bounded by $\mathcal{O}(|\KB|^4)$, as there are at most $\mathcal{O}(|\KB|^2)$ nodes in $\NK$. Step~8
is the same step as illustrated in~\cite{CasiniStraccia13},  in which it is shown that $\Delta_{C,D}$ can be computed in polynomial time \wrt~the size of $\KB$ 
\cite[Sections 3.1.6 and 3.2.5]{CasiniStraccia13}.
For ease of presentation, let us indicate with $\delta_\KB$ the time bound to compute $\Delta_{C,D}$.
Moreover, the size of both $\Delta_{C,D}$ and $\D'$ is bounded by $\mathcal{O}(|\D|)$ and, thus, the size of $\KB'$ is bounded by $\mathcal{O}(|\KB|)$. Therefore, Step 12 requires at most $\mathcal{O}(|\D|^3 + |\D|)$  subsumption tests. In summary, the computation time of iterations Steps 7 - 12 is bounded by $\mathcal{O}(\delta_\KB \cdot |\KB|^4)$ plus the time required to perform $\mathcal{O}(|\KB|^4| (|\D|^3 + |\D|))$ subsumption tests.

Finally, the size of $\D_{in}$ is bounded by $\mathcal{O}(|\KB|^2)$ and, thus, Step 15 may require at most $\mathcal{O}(|\KB|^6 + |\KB|^2)$ subsumption tests.
Therefore, we have the following result.
\begin{proposition}\label{complexityINRC}
Procedure $\mathtt{InheritanceBasedRationalClosure}$, that decides whether a defeasible inclusion axiom $C\usually D$ is in the Inheritance-based RC of $\KB=\tuple{\T,\D}$,
\ie~decides whether $\KB\vdash_{in} C \usually D$ holds, runs in polynomial time \wrt~the size of $\KB$.
\qed
\end{proposition}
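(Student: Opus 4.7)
The plan is to follow the block-structured cost analysis that has essentially been sketched in the paragraph preceding the statement, making it rigorous by appealing to (i) Proposition~\ref{complexity1}, which gives a polynomial bound on $\mathtt{RationalClosure}$; (ii) the fact that classical \ELbot~subsumption is polynomial~\cite{Baader05a}; and (iii) the polynomial-time duct-reachability result of~\cite{CasiniStraccia13}. The goal is to show that each line or block of $\mathtt{InheritanceBasedRationalClosure}$ contributes only a polynomial number of either elementary operations or classical \ELbot~subsumption tests, so that composition preserves polynomial time.

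I would proceed in four steps. First, I would handle the preprocessing on lines 1--3: normalisation is linear and produces a KB whose size is linear in $|\KB|$, and the rank-satisfiability test is a single invocation of $\mathtt{RationalClosure}(\KB,\top\usually\bot)$, polynomial by Proposition~\ref{complexity1}. Second, I would bound the construction of $\NK$ on line 5: the first four bullets of the construction introduce at most $\mathcal{O}(|\KB|)$ nodes and edges in time $\mathcal{O}(|\KB|)$, while the classification bullet performs one classical \ELbot~test per triple of nodes, giving $\mathcal{O}(|\KB|^3)$ tests and bounding the overall size of $\NK$ polynomially in $|\KB|$. Third, I would analyse the main loop on lines 6--12: the number of node pairs $(C,D)$ is polynomial in $|\NK|$, and for each iteration, $\Delta_{C,D}$ is computable in polynomial time (appealing to the duct-computation result from~\cite{CasiniStraccia13}), $\D'$ and hence $\KB'$ have size polynomial in $|\KB|$, and the call $\mathtt{RationalClosure}(\KB',C\usually D)$ is polynomial by Proposition~\ref{complexity1}. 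Fourth, I would bound line 15: $|\D_{in}| \leq |\D| + |\NK|^2$ is polynomial in $|\KB|$, so $\KB_{in}$ has polynomial size, and the final $\mathtt{RationalClosure}$ call remains polynomial by Proposition~\ref{complexity1} applied to $\KB_{in}$.

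Adding these polynomial bounds gives an overall polynomial bound on the number of classical \ELbot~subsumption tests performed by the procedure, and hence on the total running time. The main obstacle will be step three, specifically justifying that $\Delta_{C,D}$ can be computed in polynomial time: while the number of ducts from $C$ to $D$ is in general exponential (since ducts are defined recursively and can branch through conjunction links), one does not need to enumerate them to collect the set of defeasible links appearing in some duct. This is exactly the content of the duct-reachability construction in~\cite[Sections~3.1.6 and~3.2.5]{CasiniStraccia13}, which I would cite and adapt to the \ELbot~setting. All remaining steps are routine bookkeeping once this ingredient is in place.
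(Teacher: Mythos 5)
Your proposal is correct and follows essentially the same block-by-block cost analysis that the paper itself uses as the (inline) proof of this proposition: linear normalisation, polynomially many nodes/edges and classification tests for $\NK$, polynomially many loop iterations each invoking the polynomial $\mathtt{RationalClosure}$, a polynomial bound on $|\D_{in}|$, and the same appeal to~\cite[Sections~3.1.6 and~3.2.5]{CasiniStraccia13} for computing $\Delta_{C,D}$ without enumerating ducts. The only divergence is in the exact polynomial degrees (e.g.\ your $\mathcal{O}(|\KB|^3)$ classification tests versus the paper's $\mathcal{O}(|\KB|^2)$ bound on the net construction), which is immaterial to the claim.
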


\section{Rational Closure for Nominal Safe \ELObot}   \label{rcelo}
%
\nd In order to reason about individuals in a defeasible framework, more than one option is possible. Here we present an idea in which we use concept  nominals, \ie~concepts of the form $\{a\}$. There are other  options as well developed in a formal framework analogous to the one we are going to present here and compatible with it~\cite{BritzEtAl2015a,CasiniStraccia2010,CasiniStraccia13}. In~ \cite{Giordano13,Giordano15} other ways of reasoning about individuals in the preferential framework has been proposed, while in~\cite{Bonatti15} 
 a different kind of semantics is introduced.

%
Recall that by using nominals we can easily translate an assertion as a GCI as pointed out in Remark~\ref{aboxA}, and that the standard interpretation of nominals is a singleton from the domain.

Our approach to deal with nominals in defeasible \ELObot~starts with the possibility to express defeasible information about individuals by using axioms of the form $\{a\}\usually C$ (read as ``\emph{presumably,  individual $a$ is an instance of concept $C$}'') and $\{a\}\usually \exists r.\{b\}$ (``\emph{presumably, individual $a$ is connected via role $r$ to individual $b$}'').  
Since we are working in a defeasible framework, we would like to apply the \emph{presumption of typicality} (see Section \ref{alc}) also to  individuals. That is, we want to reason under the assumption that each individual behaves in the most typical possible way, compatibly with the information at our disposal. 

A main problem in enforcing the typicality of individuals is that role connections  may possibly create situations in which the defeasible information associated with an individual prevents it from associating some defeasible information to another individual, and vice-versa, as illustrated in the next example.
\begin{example}\label{multipleextensions}
Consider the following defeasible $\ELObot$ KB $\KB=\tuple{\T,\D}$ with 
\begin{eqnarray*}
\T & =  & \{\{a\}\subs \exists r.\{b\},C\dlAnd D\subs \bot\}\\
\D & = & \{\top\usually C,\exists r.C\usually D\} \ .
\end{eqnarray*}
\nd By applying the ranking procedure to $\KB$, it can be shown  that $\D$ is partitioned  into 
\begin{eqnarray*}
\D_0 & = & \{\top\usually C\} \\
\D_1 & = & \{\exists r.C\usually D\} \ . 
\end{eqnarray*}
\nd Now,  let's try to create a minimal model for this KB, in order to maximise the typicality also of the individuals (see Section \ref{alc}). If we position the interpretation of individual  $a$ in  layer $0$,\footnote{See the definition of \emph{layers} at page~\pageref{layers}.} then both $\top\usually C$ and $\exists r.C\usually D$ may be `applied' to $a$ and, thus, $\cass{a}{C}$ holds. But then we are forced to position $b$ on layer $1$ to impose that  $\cass{b}{C}$ does not hold, as otherwise we would have also $\cass{a}{\exists r.C}$ and $\cass{a}{D}$, and then we would be forced to conclude $\cass{a}{\bot}$ (since $C\dlAnd D\subs \bot$).  On the other hand, if we put $b$ on layer $0$ we would be forced to put $a$ on layer $1$ to impose that $\cass{a}{C}$ does not hold, again to avoid $\cass{a}{\bot}$ to hold. Therefore, we have at least two (minimal) ranked models.

Note that if we would apply  the so far developed RC procedure  to $\ELObot$ (see page~\pageref{algoRCalc}) we would infer instead that both 
$\{a\} \usually C$ and $\{b\} \usually C$ hold, and, thus, $\{a\} \usually D$ would follow, from which we would conclude $\{a\} \usually \bot$. I.e., we get an inconsistency. 

Eventually, note that we face with the same problem with the following KB $\KB'=\tuple{\T',\D'}$:
\begin{eqnarray*}
\T' & =  & \{A = \{a\}, B = \{b\}, A \subs \exists r.B, C\dlAnd D\subs \bot\}\\
\D' & = & \{\top\usually C,\exists r.C\usually D\} \ .
\end{eqnarray*}
\nd That is, not only individual $a$ (resp.~$b$) does not have the same rank among all   ranked models of $\KB'$, but the same applies to atomic concept $A$ (resp.~$B$).
\qed
\end{example}
\nd The multiplicity of incompatible minimal configurations  allows us to make a distinction between two distinct kinds of presumptive reasoning.
 Roughly, on one hand we have \emph{presumptive} reasoning, that can be modelled taking under consideration \emph{all} the most expected situations: an individual $a$ presumably satisfies a property $C$ iff $a^R\in C^R$ in \emph{all} the minimal models $R$ (of an ontology $\KB$).  On the other hand we have \emph{prototypical} reasoning, where we associate a property $C$ to an individual $a$ if all the preferred instantiations of $a$ fall under $C$; that is, 
a property $C$ belongs to the \emph{prototype} of an individual $a$ if $a^R\in C^R$ for every minimal model $R$ of $\KB$ in which $a^R$ is in the lowest possible rank.\footnote{A similar distinction was also made by Lehmann in the framework of propositional logic~\cite{Lehmann1995}. } 
We refer the reader to \ref{skept-cred} (Definitions~\ref{def7}, \ref{def8} and~\ref{def_nominalsafe_ent}) for a formal distinction among the two above mentioned approaches.
 
%
%
%
%


While the presumptive reasoning solution can be modelled following previous proposals~\cite{BritzEtAl2015a,CasiniStraccia13}, no polynomial time algorithm of its application to \ELObot~seems to exist so far (see also Section~\ref{related}).

In this work,  we rather investigate the second option. In order to model this kind of reasoning,  we propose to introduce a change in the interpretation of nominals. Our approach not only will allow us to model prototypical reasoning, but provides us also with a simple way to use the decision procedure presented so far. An important side effect is that we can preserve the polynomial time result for the subsumption problem. 

Specifically, our intuition is that  reasoning about individuals in a defeasible framework means to assign to them properties that we only \emph{presume} to hold. 
In order to model such a semantics, we interpret each individual (\ie, each nominal) not as a singleton, but as a \emph{set of objects}: each of them represents a possible instantiation of the individual, considering, on the  one hand, the information at our disposal, and, on the other, different possible levels of typicality. In this latter case, this corresponds to consider different rankings of the individual (kind of `set of prototypes of an individual).

\paragraph{The DL \defELObot.} Formally, we introduce a new kind of nominal, referred to as \emph{defeasible nominal}, and denoted  with $\defnom{a}$, where $a\in\ON$. The interpretation function $\cdot^\I$ of an interpretation $\I$ is extended to defeasible nominals by mapping them into sets, \ie
\begin{itemize}
\item $\defnom{a}^\I\subseteq\Delta^\I$ for every individual $a\in\ON$.
\end{itemize}
The defeasible version of $\ELObot$, referred to as $\defELObot$, is $\ELbot$ extended with defeasible nominals:
\[
C,D \rightarrow A\mid\top\mid\bot\mid C\dlAnd D\mid \exists  r.C \mid \defnom{a} \ .
\]

\nd Our reasoning mechanism for $\defELObot$ under RC remains  the same as \ELbot~under RC (and the extension of RC we have investigated in this work),  simply by \emph{treating defeasible nominals as new classical atomic concepts in our procedures}.\footnote{The interpretation of defeasible nominals as classical atomic concepts implies that some results required here and valid for \ELbot  extend to $\defELObot$ too: namely, the FMP and FCMP properties defined for defeasible \ALC; also, the procedures $\mathtt{Exceptional}$, 
$\mathtt{ComputeRanking}$,$\mathtt{RationalClosure}$ and $\mathtt{InheritanceBasedRationalClosure}$ remain the same. } 
Therefore, we will continue to use the entailment notation $\ratent$ also for the extension to defeasible nominals of our entailment relation for RC. 

The following example illustrates the behaviour of the RC procedure applied to \defELObot.

\begin{example}
Consider the ontology in Example \ref{ex02} extended with the axioms $\defnom{a}\subs \dl{BRBC}$ and $\defnom{b}\subs \dl{ARBC}$, dictating that we know that a specific cell $a$ is a red blood cell from a cow, while $b$ refers to a red blood cell from a bird. Now, it can be shown that, by applying the ranking procedure, we end up with the same partition of $\D$ as in Example \ref{exranking}. That is
\begin{eqnarray*}
 \T^* & = & \{ \ \   \defnom{a}\subs \dl{BRBC}, \defnom{b}\subs \dl{ARBC}, \dl{BRBC}\subs\dl{MRBC},\\
  &&\dl{ARBC}\subs\dl{VRBC}, \dl{MRBC}\subs\dl{VRBC}, \exists\dl{hasN}.\top\dlAnd \dl{NotN}\subs\bot \ \  \} \\
  \D_0  & = &  \{ \ \  \dl{VRBC}\usually\exists\dl{hasCM}.\top,  \dl{VRBC}\usually\exists\dl{hasN}.\top \ \   \}  \\
  \D_1  & = &  \{ \ \  \dl{MRBC}\usually \dl{NotN} \ \   \}  \ .
\end{eqnarray*}
Moreover, as for Example \ref{exrankingA}, we end up with the following entailments:
\begin{eqnarray*}
\T^*_{\delta_0}  & \entails  & \defnom{a}\dlAnd\delta_0\subs\bot\\
\T^*_{\delta_0} & \not\entails & \defnom{b}\dlAnd\delta_0\subs\bot \\
\T^*_{\delta_1} & \not\entails & \defnom{a}\dlAnd\delta_1\subs\bot \ .
\end{eqnarray*}
\nd Hence we have that defeasible nominal $\defnom{a}$ has rank 1 (the most prototypical instantiation of $a$ must be of rank 1), while $\defnom{b}$ has rank 0 (it is conceivably an instantiation of $b$ in the lowest layer). 

We can now check whether $\defnom{a}$ and $\defnom{b}$ have a nucleus, following the same procedure as in Example \ref{exdecision}. Since
\begin{eqnarray*}
\T^*_{\delta_1} &\not\entails & \defnom{a}\dlAnd\delta_1\subs\exists\dl{hasN}.\top\\
\T^*_{\delta_1} & \entails  & \defnom{a}\dlAnd\delta_1\subs\dl{notN} \\
\T^*_{\delta_0} & \entails  & \defnom{b}\dlAnd\delta_0\subs\exists\dl{hasN}.\top \ ,
\end{eqnarray*}
\nd we can conclude now that presumably the cell $a$ does not have a nucleus (\ie, $\defnom{a}\usually\dl{notN}$), while $b$ does (\ie, $\defnom{b}\usually\exists\dl{hasN}.\top$).
\qed
\end{example}

\nd Such an approach carries the risk to be incompatible with the classical part of our reasoning, where nominals are assumed to be interpreted as singletons. Yet, we will show that the results in~\cite{Kazakov14}, reported here as Propositions \ref{propelo} and \ref{propeloB}, justify our  reasoning about nominals by interpreting them as concepts, provided the \emph{safeness} condition is satisfied. That is, for nominal safe ontologies, the strict part of our knowledge behaves exactly as if we would have interpreted the nominals as individuals (see Proposition~\ref{use_concept} later on).  

To start with, given an $\defELObot$ concept (resp.~GCI, ontology) $\alpha$, we indicate with $\alpha_{\{\}}$ the $\ELObot$ concept (resp.~GCI, ontology) obtained by substituting every occurrence of each defeasible nominal $\defnom{a}$ with a nominal $\{a\}$. Vice-versa,  given an $\ELObot$ concept (resp.~GCI, ontology)
$\alpha$, we indicate with $\alpha_{\tiny\defnom{}}$ the $\mathcal{EL\defnom{O}}_\bot$ concept (resp.~GCI, ontology) obtained by substituting every nominal $\{a\}$ by $\defnom{a}$. Now, it is easy to see that one transformation is the inverse of the other. That is, for every $\defELObot$ expression $\alpha$, $\alpha=(\alpha_{\{\}})_{\tiny \defnom{}}$, and for every $\mathcal{EL{O}}_\bot$ expression $\alpha$, $\alpha=(\alpha_{\tiny \defnom{}})_{\{\}}$. 

The notion of \emph{safeness} (see Section \ref{EL}) is naturally extended to $\defELObot$:
\begin{itemize}
\item $C$ is \emph{safe} iff $C_{\{\}}$ is safe;
\item $C$ is \emph{n-safe} iff $C_{\{\}}$ is n-safe;
\item $C\subs D$ is \emph{safe} iff $(C\subs D)_{\{\}}$ is safe;
\item $C\usually D$ is \emph{safe} iff $C$ is n-safe and $D$ is safe. 
\end{itemize}
\nd Finally, a defeasible $\defELObot$ ontology $\KB=\tuple{\T,\D}$ is \emph{nominal safe} if it contains only safe axioms. 

Next, we address two aspects of using defeasible nominals that prove that using them instead of standard nominals does not change the classical part of our reasoning, provided that the ontology is nominal safe. 

The first observation is about the ranking procedure. With respect to this point, we can naturally extend the notion of exceptionality to  nominals as follows.
\begin{itemize}
\item A nominal $\{a\}$ is \emph{exceptional} \wrt~an $\mathcal{EL{O}}_\bot$ ontology $\KB=\tuple{\T,\D}$ iff, for every ranked model $R\in\R^\KB$, $\{a\}^{R} \cap\min_{\prec^R}(\Delta^{R}) = \emptyset$.

\item A defeasible nominal $\defnom{a}$ is \emph{exceptional} \wrt~an $\defELObot$ ontology $\KB=\tuple{\T,\D}$ iff, for every ranked model $R\in\R^\KB$, $\defnom{a}^{R}\cap\min_{\prec^R}(\Delta^{R})=\emptyset$.
\end{itemize}
\nd The respective notion of ranking $r_\KB$ follows the same definition from Section \ref{alc}. 

Now, we prove the correctness of the ranking procedure $\mathtt{ComputeRanking}$ extended to $\defELObot$ by treating defeasible nominals as atomic concepts. In particular, we prove that the axioms that have infinite rank \wrt~an $\defELObot$~ontology correspond to the axioms that have infinite rank \wrt~the correspondent $\mathcal{EL{O}}_\bot$~ontology. 
Specifically, the following propositions can be shown.\footnote{See the RC ranking procedure at page~\pageref{classicalRC} for the definition of the rank $r_{\KB}(C)$ of a concept $C$ \wrt~a knowledge base $\KB$, while see page~\pageref{defN} for the definition of $N(\cdot)$.}
\begin{proposition}\label{ranking_defnom}
For every  n-safe $\mathcal{EL{O}}_\bot$ concept $C$ and every nominal safe $\mathcal{EL{O}}_\bot$ ontology $\KB=\tuple{\T,\D}$, the following holds: $r_\KB(C)=i$ iff $r_{\KB_{\tiny \defnom{}}}(C_{\tiny \defnom{}})=i$. 
\qed
\end{proposition}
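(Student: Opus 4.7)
The plan is to reduce the claim to a single exceptionality-correspondence lemma and then induct on the iterations of the RC ranking procedure. First, observe that $\KB_{\tiny\defnom{}}$ is syntactically identical (modulo the renaming of $\defnom{a}$ as $N_a$) to the Kazakov $\ELbot$ ontology $N(\KB)$ from page~\pageref{defN}, since both constructions replace each nominal $\{a\}$ by a fresh atomic concept whose interpretation is left unconstrained. Because defeasible nominals are treated as atomic concepts, $r_{\KB_{\tiny\defnom{}}}(C_{\tiny\defnom{}})$ can be computed by $\mathtt{ComputeRanking}$ applied to $N(\KB)$ and, by Proposition~\ref{propcr}, coincides with the semantic rank $r_{N(\KB)}(N(C))$. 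Hence it suffices to prove $r_\KB(C) = r_{N(\KB)}(N(C))$.

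The key lemma I would establish is: for every nominal safe $\ELObot$ ontology $\KB=\tuple{\T,\D}$ and every n-safe $\ELObot$ concept $C$, $C$ is exceptional wrt $\KB$ iff $N(C)$ is exceptional wrt $N(\KB)$. The $(\Leftarrow)$ direction is immediate by contrapositive: given a ranked $\ELObot$ model $R$ of $\KB$ with $C^R\cap L^R_0\neq\emptyset$, the ranked interpretation $R'$ obtained by setting $N_a^{R'}:=\{a\}^R$ (a singleton) and leaving everything else as in $R$ is a ranked $\ELbot$ model of $N(\KB)$ in which $N(C)^{R'}=C^R$ still meets layer $0$.

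The $(\Rightarrow)$ direction is the technical core. Given a ranked $\ELbot$ model $R'$ of $N(\KB)$ with some $o\in N(C)^{R'}\cap L^{R'}_0$, I would build a ranked $\ELObot$ model $R$ of $\KB$ with a witness in $C^R\cap L^R_0$ by adapting the classical quotient in the proof of Proposition~\ref{propeloB} to the ranked setting. Concretely: let $\sim$ be the least equivalence relation on $\Delta^{R'}$ that, for each individual $a$, identifies all elements of $N_a^{R'}$; take $\Delta^R$ to be the quotient; interpret $\{a\}^R$ as the equivalence class of any element of $N_a^{R'}$; and assign to each equivalence class a rank equal to the minimum rank of its members, which yields a modular preference relation by construction. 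Nominal safeness of $\KB$ guarantees the quotient preserves every GCI (essentially the argument in \cite{Kazakov14}), n-safeness of $C$ guarantees $C^R$ is the $\sim$-image of $N(C)^{R'}$, and the min-rank choice ensures that the image of $o$ still lies in layer $0$ of $R$.

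With the lemma in place, the proposition follows by induction on $i$. The base case $i=0$ is exactly the lemma. For the inductive step, the next-stage ontology $\KB^1=\tuple{\T,\D\setminus\D_0}$ remains nominal safe (safeness is preserved by removing axioms), and by the lemma the set $\D_0$ computed from $\KB$ matches the one computed from $N(\KB)$, so $N(\KB^1)=N(\KB)^1$ up to renaming; iterating yields matching partitions $\D_0,\D_1,\dots,\D_\infty$ on both sides, and therefore equal ranks. The principal obstacle I expect is the $(\Rightarrow)$ direction of the lemma: lifting Kazakov's quotient from classical satisfiability to ranked satisfiability so that both the axioms of $\KB$ and the layer structure are preserved, which requires a careful case analysis on the shapes of safe axioms, now relativised to the rank function $\rk^{R'}$.
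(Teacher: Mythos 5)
Your overall architecture is sound and matches the paper's: you reduce the proposition to an exceptionality-correspondence lemma (which is, modulo the renaming of $\defnom{a}$ as $N_a$, exactly the paper's Lemma~\ref{exceptional_defnom}) and then induct on the stages of the RC ranking procedure; your $(\Leftarrow)$ direction via $N_a^{R'}:=\{a\}^R$ is also the paper's argument. The gap is in the $(\Rightarrow)$ direction, where the proposed quotient fails on two counts. First, identifying all elements of $N_a^{R'}$ and taking the image interpretation does not preserve GCIs, even under nominal safeness: take $\T=\{\{a\}\subs\top,\ A\dlAnd B\subs\bot\}$ (nominal safe) and the model of $N(\T)$ with $\Delta^{R'}=\{x,y\}$, $N_a^{R'}=\{x,y\}$, $A^{R'}=\{x\}$, $B^{R'}=\{y\}$; the merged class lies in $A\dlAnd B$ although no original element did, so the quotient violates $A\dlAnd B\subs\bot$. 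Nominal safeness restricts where nominals may occur, but puts no restriction on which other concepts the various members of $N_a^{R'}$ satisfy, so the completeness result in \cite{Kazakov14} cannot be obtained by quotienting an arbitrary model (and is not proved that way). Second, the min-rank assignment breaks defeasible axioms: with $\D=\{A\usually F\}$ and a model in which $N_a^{R'}=\{x,y\}$ with $\rk(x)=0$, $\rk(y)=1$, $y\in A^{R'}\setminus F^{R'}$, while every minimal $A$-element satisfies $F$, the merged class receives rank $0$, enters $\min_{\prec}(A^{R})$, and fails $F$. There is also the unaddressed case $N_a^{R'}=\emptyset$, where your definition of $\{a\}^R$ is vacuous.

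The paper avoids all of this by not quotienting the given model $R$ at all: it builds a fresh, canonical-style model $R'$ whose domain has one object $x_{\defnom{a}}$ per individual plus objects $x_{\typ{E}},x_E$ for each satisfiable safe concept $E$, with extensions defined by entailment in $R$ (e.g., $x_{\defnom{a}}\in A^{R'}$ iff $R\models\defnom{a}\usually A$, and role edges only where $R$ entails the corresponding existential) and heights inherited from the heights of the corresponding concepts in $R$. This guarantees that the representative of a nominal satisfies only the concepts it is forced to satisfy, so no left-hand side of a GCI is triggered spuriously, and that the layer structure validates the defeasible axioms. To repair your proof you would need to replace the quotient by such a construction; the renaming step to $N(\KB)$, the easy direction, and the induction over the ranking stages can stay as they are.
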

\begin{proposition}\label{ranking_defnomA}
For every  n-safe $\defELObot$ concept $C$ and every nominal safe $\defELObot$ ontology $\KB=\tuple{\T,\D}$, 
$r_{\KB}(C)=i$ iff $r_{N(\KB_{\tiny \{\}})}(N(C_{\tiny \{\}}))=i$. 
\qed
\end{proposition}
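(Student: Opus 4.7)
The plan is to decompose the target identity into two steps that chain the two transformations $\cdot_{\{\}}$ and $N(\cdot)$, and to handle each step in turn.

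First, I would apply Proposition~\ref{ranking_defnom} to bridge $\defELObot$ and $\ELObot$. Since $\KB$ is a nominal safe $\defELObot$~ontology and $C$ is an n-safe $\defELObot$~concept, the images $\KB_{\{\}}$ and $C_{\{\}}$ are a nominal safe $\ELObot$~ontology and an n-safe $\ELObot$~concept respectively (the notions of safeness and n-safeness for $\defELObot$ were defined precisely through $\cdot_{\{\}}$). Since $\cdot_{\{\}}$ and $\cdot_{\defnom{}}$ are mutual inverses on these classes, $(\KB_{\{\}})_{\defnom{}}=\KB$ and $(C_{\{\}})_{\defnom{}}=C$, so Proposition~\ref{ranking_defnom} yields $r_{\KB_{\{\}}}(C_{\{\}})=i$ iff $r_{\KB}(C)=i$. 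This reduces the theorem to proving $r_{\KB_{\{\}}}(C_{\{\}})=i$ iff $r_{N(\KB_{\{\}})}(N(C_{\{\}}))=i$ for the nominal safe $\ELObot$~ontology $\KB_{\{\}}$.

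Second, I would prove this remaining identity by induction on the exceptionality stratification underlying the rank. The inductive claim is: for every subset $\E\subseteq\D_{\{\}}$ of defeasible axioms and every n-safe $\ELObot$~concept $E$, the axiom $E$ is exceptional \wrt~$\tuple{\T_{\{\}},\E}$ iff $N(E)$ is exceptional \wrt~$\tuple{N(\T_{\{\}}),N(\E)}$. By the auxiliary construction of~\eqref{Tdelta}, exceptionality reduces to the classical subsumption test $\T_{\delta_{\E}}\entails E\dlAnd\delta_{\E}\subs\bot$ (Proposition~\ref{exceptB}, which applies to $\ELObot$ verbatim since its proof only uses the semantics of ranked interpretations). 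Noting that $N(\T_{\delta_{\E}})=N(\T_{\{\}})_{\delta_{N(\E)}}$ because $\delta_{\E}$ is a fresh atomic concept unaffected by $N(\cdot)$, Proposition~\ref{propeloB} transfers the test between the two sides, closing the induction; the sequence of sets $\D_0,\D_1,\ldots,\D_\infty$ produced by the ranking procedure from $\KB_{\{\}}$ then maps, axiom by axiom under $N(\cdot)$, to the corresponding sequence produced from $N(\KB_{\{\}})$.

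The main obstacle will be discharging the safeness hypothesis required by Proposition~\ref{propeloB}: I need to know that $\T_{\delta_{\E}}$ is nominal safe and that the tested axiom $E\dlAnd\delta_{\E}\subs\bot$ is safe. Safeness of the added axioms $E'\dlAnd\delta_{\E}\subs F'$ (for $E'\usually F'\in\E$) is immediate whenever $E'$ is safe, but the case where $E'$ is a top-level nominal $\{a\}$ — which is n-safe but not safe — would yield a non-safe GCI $\{a\}\dlAnd\delta_{\E}\subs F'$. I would address this either by first normalising the DBox so that all antecedents and the queried concept $C_{\{\}}$ are atomic (an operation that preserves both nominal safeness and rank, in the spirit of Section~\ref{normalforms}), or by bypassing the procedural intermediary and arguing directly on ranked models: given any ranked model of one side one can build a ranked model of the other whose layer structure on the interpreted domain coincides, using the fact that nominal safeness forbids the pathological gadgets of Example~\ref{nosafe} that would otherwise separate the two classes of models. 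Either route gives the desired rank preservation and, combined with the first step, establishes the proposition.
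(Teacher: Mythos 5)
Your decomposition is genuinely different from the paper's, and its second step contains a real gap. The paper proves the proposition directly, without ever passing through the classical-nominal ontology $\KB_{\{\}}$: since a defeasible nominal $\defnom{a}$ is interpreted as an \emph{arbitrary} subset of the domain, exactly like the fresh atomic concept $N_a$, the ranked models of $\KB$ and of $N(\KB_{\{\}})$ are in an exact, rank- and exceptionality-preserving bijection obtained by merely setting $N_a^{R'}=\defnom{a}^{R}$ and checking by induction that $o\in D^{R}$ iff $o\in (N(D_{\{\}}))^{R'}$. No safeness hypothesis and none of Kazakov's results are needed for this particular statement; safeness only becomes essential later, when the classical singleton semantics enters (Proposition~\ref{use_concept}). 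By routing through $\KB_{\{\}}$ you force yourself to cross the genuinely hard boundary --- singleton nominals versus set-valued symbols --- twice, once in each step, whereas the target statement never crosses it at all.

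The gap is in your second step, $r_{\KB_{\{\}}}(C_{\{\}})=r_{N(\KB_{\{\}})}(N(C_{\{\}}))$. You correctly identify that Proposition~\ref{propeloB} needs the ontology $\T_{\delta_{\E}}$ to be nominal safe and the queried axiom to be safe, but neither of your two repairs closes this. The normalisation route fails: to make the antecedent of an axiom $\{a\}\usually F$ atomic you must add $A_{\{a\}}=\{a\}$ to the TBox, and the inclusion $A_{\{a\}}\subs\{a\}$ is \emph{not} a safe GCI (a bare nominal on the right-hand side is exactly the pattern of Example~\ref{nosafe}), so nominal safeness is destroyed rather than preserved. Moreover, even setting the TBox aside, the tested axiom $\{a\}\dlAnd\delta_{\E}\subs\bot$ has a non-n-safe left-hand side, and $\ELObot$ lacks the negation needed to rewrite it into a safe form, so Proposition~\ref{propeloB} simply does not apply to the test you want to transfer. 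Your fallback --- building a ranked model of one side from a ranked model of the other --- is the right idea, but in the hard direction (from a model of $N(\KB_{\{\}})$, where $N_a$ may denote any set, back to a model of $\KB_{\{\}}$, where $\{a\}$ must denote a singleton) it is precisely the non-trivial construction the paper carries out in Lemma~\ref{exceptional_defnom}, and cannot be dismissed in a sentence. Also note that Proposition~\ref{exceptB} gives only a sufficient condition for exceptionality before the fixed-point preprocessing, so your inductive claim should be anchored to the procedural ranking via Propositions~\ref{ranking_correct} and~\ref{propcr} rather than to the raw subsumption test.
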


\nd From Propositions~\ref{ranking_defnom} and~\ref{ranking_defnomA} it follows that the information of infinite rank \wrt~an $\mathcal{EL{O}}_\bot$ defeasible ontology $\KB$ corresponds exactly to the information of infinite rank \wrt~$\KB_{\tiny \defnom{}}$. 
That is, 
\begin{proposition}\label{use_conceptA}
Let $\KB=\tuple{\T,\D}$ be a nominal safe $\defELObot$ ontology, with $C\usually D\in \D$.  Then, the following are equivalent:
\begin{enumerate}
\item  $r_\KB(C\usually D)=\infty$; 
\item $r_{\KB_{\{\}}}(C_{\{\}}\usually D_{\{\}})=\infty$; 
\item  $r_{N(\KB_{\{\}})}(N(C_{\{\}}\usually D_{\{\}}))=\infty$. \qed
\end{enumerate}
\end{proposition}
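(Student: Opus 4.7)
The plan is to reduce Proposition~\ref{use_conceptA} to the two immediately preceding propositions, namely Propositions~\ref{ranking_defnom} and~\ref{ranking_defnomA}. The first observation is that, by definition of the rank function on defeasible axioms (cf.\ the RC ranking procedure on page~\pageref{classicalRC}), the rank of a defeasible GCI equals the rank of its antecedent, so that
\[
r_\KB(C\usually D) = r_\KB(C), \quad r_{\KB_{\{\}}}(C_{\{\}}\usually D_{\{\}}) = r_{\KB_{\{\}}}(C_{\{\}}), \quad r_{N(\KB_{\{\}})}(N(C_{\{\}}\usually D_{\{\}})) = r_{N(\KB_{\{\}})}(N(C_{\{\}})).
\]
Hence the three-way equivalence in the statement is really an equivalence about the ranks of $C$, $C_{\{\}}$, and $N(C_{\{\}})$ in the respective ontologies.

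Next I would verify the hypotheses of the two previous propositions. Since $C\usually D \in \D$ and $\KB$ is a nominal safe $\defELObot$ ontology, by definition the axiom $C\usually D$ is safe, which in particular means that $C$ is n-safe. Consequently $C_{\{\}}$ is an n-safe $\mathcal{EL{O}}_\bot$ concept, and $\KB_{\{\}}$ is a nominal safe $\mathcal{EL{O}}_\bot$ ontology (the translation $(\cdot)_{\{\}}$ preserves the syntactic shape of each axiom). So both Proposition~\ref{ranking_defnom} (applied to $\KB_{\{\}}$ and $C_{\{\}}$) and Proposition~\ref{ranking_defnomA} (applied to $\KB$ and $C$) are applicable.

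For the equivalence (1)~$\iff$~(2), I would apply Proposition~\ref{ranking_defnom} to $\KB_{\{\}}$ and $C_{\{\}}$ and exploit the fact that $(\cdot)_{\{\}}$ and $(\cdot)_{\defnom{}}$ are mutually inverse, so $(\KB_{\{\}})_{\defnom{}} = \KB$ and $(C_{\{\}})_{\defnom{}} = C$; this yields $r_{\KB_{\{\}}}(C_{\{\}}) = i \iff r_\KB(C) = i$ for every $i \in \mathbb{N}$. The equivalence (1)~$\iff$~(3) follows likewise by a direct application of Proposition~\ref{ranking_defnomA}. Transitivity then closes the three-way equivalence, and combining with the reduction from the first paragraph gives the statement.

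The one delicate technical point is to lift the equivalences of finite ranks provided by the two previous propositions to the case $i=\infty$. I would handle this by a simple exhaustion argument: since the ranking procedure assigns to every concept in an ontology exactly one value in $\mathbb{N}\cup\{\infty\}$, a value of $\infty$ is simply the absence of any finite rank. Thus, if $r_\KB(C)\neq \infty$, then $r_\KB(C)=j$ for some $j\in\mathbb{N}$, and Proposition~\ref{ranking_defnom} forces $r_{\KB_{\{\}}}(C_{\{\}})=j\neq\infty$, and conversely. The same reasoning applies to Proposition~\ref{ranking_defnomA}. I expect no other obstacle: the substantive content is already absorbed by the two previous propositions, and what remains is bookkeeping about the definition of the rank of an axiom and the $\infty$-versus-finite dichotomy.
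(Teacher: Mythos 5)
Your proposal is correct and follows essentially the same route as the paper, which presents Proposition~\ref{use_conceptA} as an immediate consequence of Propositions~\ref{ranking_defnom} and~\ref{ranking_defnomA} (using that $(\cdot)_{\{\}}$ and $(\cdot)_{\defnom{}}$ are mutually inverse and that the rank of a defeasible GCI is the rank of its antecedent). Your explicit checks of the n-safety hypotheses and the lift from finite ranks to the $\infty$ case are just careful bookkeeping that the paper leaves implicit.
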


\nd An immediate consequence is that we can safely use 
$\mathtt{ComputeRanking}$ over $\mathcal{EL\defnom{O}_\bot}$ ontologies to move the axioms with infinite rank into the TBox.
\begin{corollary}\label{use_concept_coroll}
Let $\KB=\tuple{\T,\D}$ be a nominal safe $\defELObot$ ontology, with $C\usually D\in \D$. Let  $\T^\bullet$ be  the TBox 
 obtained from $\KB$ after applying {\bf RC.Step 1}, while let $(N(\T_{\{\}}))^*$ be the TBox obtained from $N(\KB_{\{\}})$ using the
$\mathtt{ComputeRanking}$ procedure. Then, the following are equivalent:
\begin{enumerate}
\item   $C\usually \bot\in \T^\bullet$; 
\item $N(C_{\{\}}\usually \bot)\in (N(\T_{\{\}}))^*$; 
\item $r_{\KB_{\{\}}}(C_{\{\}}\usually D_{\{\}})=\infty$. \qed
\end{enumerate}
\end{corollary}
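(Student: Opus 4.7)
The plan is to derive this corollary as a relatively direct consequence of Proposition~\ref{use_conceptA} together with the structural behaviour of $\mathtt{ComputeRanking}$ already established in Proposition~\ref{propcr}. The key observation underlying everything is that a defeasible axiom $E\usually F$ has infinite rank \wrt~an ontology $\KB'$ precisely when $\mathtt{ComputeRanking}(\KB')$ eventually pushes it into the TBox as the strict axiom $E\subs\bot$. This follows directly from the construction of {\bf RC.Step 1} (the sets $\D_\infty$ in each round are exactly those moved to $\T^*$), which by Proposition~\ref{propcr} is faithfully implemented by $\mathtt{ComputeRanking}$.

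First, I would record this observation as an auxiliary bridge: for any nominal-safe input $\KB' = \tuple{\T',\D'}$ and any $E\usually F \in \D'$, the axiom $E\subs\bot$ belongs to the TBox returned by $\mathtt{ComputeRanking}(\KB')$ iff $r_{\KB'}(E\usually F)=\infty$. Instantiating this with $\KB' = \KB$ (regarding $\defELObot$ defeasible nominals as fresh atomic concepts, as the procedures do) yields the equivalence (1) $\Leftrightarrow r_{\KB}(C\usually D) = \infty$. Instantiating the same observation with $\KB' = N(\KB_{\{\}})$ yields (2) $\Leftrightarrow r_{N(\KB_{\{\}})}(N(C_{\{\}}\usually D_{\{\}}))=\infty$.

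Now I would invoke Proposition~\ref{use_conceptA}, which gives precisely the three-way equivalence among the infinite-rank conditions: $r_{\KB}(C\usually D) = \infty$ iff $r_{\KB_{\{\}}}(C_{\{\}}\usually D_{\{\}})=\infty$ iff $r_{N(\KB_{\{\}})}(N(C_{\{\}}\usually D_{\{\}}))=\infty$. Chaining this with the two bridge equivalences from the previous paragraph, one obtains (1) $\Leftrightarrow$ (3) and (2) $\Leftrightarrow$ (3), from which (1) $\Leftrightarrow$ (2) $\Leftrightarrow$ (3) follows.

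The only subtle point---and thus the main obstacle---is ensuring that the bridge observation applies uniformly across the three formalisms: in particular, we must check that the nominal-safeness assumption on $\KB$ is preserved (or suitably translated) when passing to $\KB_{\{\}}$ and to $N(\KB_{\{\}})$, so that Proposition~\ref{use_conceptA} is indeed applicable. This is immediate by definition of the translations $(\cdot)_{\{\}}$ and $N(\cdot)$, which map safe axioms to safe axioms (respectively to $\ELbot$ axioms, where the safety condition is vacuous). Beyond that, the argument reduces entirely to rewriting the three items using the bridge observation and combining them through Proposition~\ref{use_conceptA}, so no additional technical machinery is required.
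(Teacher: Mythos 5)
Your proposal is correct and follows essentially the same route as the paper: the paper likewise extends Proposition~\ref{propcr} to $\defELObot$ by treating defeasible nominals as atomic concepts (your ``bridge'' observation) and then combines it with the rank-preservation results across the translations (you cite Proposition~\ref{use_conceptA}, the paper cites Proposition~\ref{ranking_defnomA}, of which the former is a direct consequence). The only cosmetic difference is that you make the TBox-membership/infinite-rank equivalence explicit as a separate step, which the paper leaves implicit.
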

\nd Propositions~\ref{ranking_defnom}-\ref{use_conceptA} guarantee that the ranking of concepts (and nominals) done using an $\defELObot$ ontology corresponds to the ranking over the correspondent 
$\mathcal{EL{O}}_\bot$ ontology, while Corollary~\ref{use_concept_coroll} guarantees that we are correct in using  the $\mathtt{ComputeRanking}$ procedure to distinguish the classical part of an ontology from the defeasible part (provided nominal safeness is assumed). 

The next step is now to prove that  the classical conclusions that we derive using defeasible nominals correspond to the conclusions we would derive using classical nominals. So, let $\KB_{\{\}}=\tuple{\T_{\{\}},\D_{\{\}}}$ be an $\mathcal{ELO}_\bot$ ontology and let $\D^{\subs}_{\{\},\infty}:=\{C\subs D\mid C\usually D\in\D_{\{\}} \text{ and } r_{\KB_{\{\}}}(C\usually D)=\infty\}$; hence ${\T}_{\{\}}\cup\D^{\subs}_{\{\},\infty}$ defines the classical part of the information contained in $\KB_{\{\}}$. The following holds.
\begin{proposition}\label{use_concept}
Let $\KB=\tuple{\T,\D}$ be a nominal safe $\defELObot$ ontology. 
For every \emph{safe}  GCI $C\subs D$, we have that the following are equivalent:
\begin{enumerate}
\item  $C\subs D$ is in the RC of $\KB$; 
\item $N(C_{\{\}} \subs D_{\{\}})$ is in the RC of $N(\KB_{\{\}})$;
\item ${\T}_{\{\}}\cup\D^{\subs}_{\{\},\infty}\entails C_{\{\}}\subs D_{\{\}}$. \qed
\end{enumerate}
\end{proposition}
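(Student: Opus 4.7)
My plan is to establish the chain $(1) \Leftrightarrow (2) \Leftrightarrow (3)$ using Proposition~\ref{classgci} and Proposition~\ref{propeloB} as the primary tools, together with Corollary~\ref{use_concept_coroll} to match up the rankings. Observe first that the $\mathtt{RationalClosure}$ procedure treats defeasible nominals $\defnom{a}$ as ordinary atomic concepts. Consequently, executing RC on the $\defELObot$ ontology $\KB$ is syntactically identical to executing RC on the $\ELbot$ ontology $N(\KB_{\{\}})$, once we identify $\defnom{a}$ with $N_a$. Proposition~\ref{classgci} applied on both sides reduces each statement to a classical entailment problem over the corresponding $\T^\bullet$, and by Corollary~\ref{use_concept_coroll} the infinite-rank axioms of $\KB$, $\KB_{\{\}}$, and $N(\KB_{\{\}})$ all coincide; this settles $(1) \Leftrightarrow (2)$.

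For $(2) \Leftrightarrow (3)$, I would first apply Proposition~\ref{classgci} to $N(\KB_{\{\}})$, reducing (2) to classical $\ELbot$ entailment from $N(\T_{\{\}}) \cup \{N(E)\subs\bot \mid r_{\KB_{\{\}}}(E\usually F)=\infty\}$. Since $\KB_{\{\}}$ is nominal safe and this augmented TBox remains nominal safe (all added axioms are of the form $E\subs\bot$ with $E$ n-safe), Proposition~\ref{propeloB} together with Proposition~\ref{propelo} lifts the $N$-translated entailment back to classical $\ELObot$ entailment from $\T_{\{\}} \cup \{E\subs\bot \mid r_{\KB_{\{\}}}(E\usually F)=\infty\}$. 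It then remains to show that this theory has the same classical consequences as $\T_{\{\}} \cup \D^\subs_{\{\},\infty}$.

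The main obstacle, and the remaining work, is establishing this last classical equivalence. One direction is trivial: $E\subs\bot$ entails $E\subs F$. For the converse, I would argue by induction on the rounds of the outer loop of $\mathtt{ComputeRanking}$. In any single round, if $\E$ is the stable exceptional set, then by Proposition~\ref{exceptB} each $E\usually F \in \E$ satisfies $\T_{\delta_\E} \entails E\dlAnd \delta_\E \subs \bot$; instantiating the fresh concept $\delta_\E$ as $\top$ yields the classical derivation $\T_{\{\}} \cup \{E'\subs F' \mid E'\usually F'\in\E\} \entails E\subs\bot$. Across subsequent rounds the TBox is augmented with $E\subs\bot$ axioms which are classically equivalent to the $E\subs F$ axioms already present in $\D^\subs_{\{\},\infty}$ (in light of the previous round), so the induction closes. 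Combining this equivalence with the $N$-transfer above yields $(3)$, completing the chain.
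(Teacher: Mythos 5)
Your proof is correct and follows essentially the same route as the paper's: the same reduction of (1) to (2) via treating defeasible nominals as atomic concepts together with the ranking correspondence (Proposition~\ref{ranking_defnomA} and Corollary~\ref{use_concept_coroll}), and the same passage from (2) to (3) via Proposition~\ref{classgci} followed by Propositions~\ref{propelo} and~\ref{propeloB}. The only point of divergence is that the classical equivalence you prove at the end is exactly the paper's Lemma~\ref{strict_nominals}, which the paper establishes through the materialisation test $\bigsqcap\{\neg E \dlOr F\}$ of Proposition~\ref{propcr}, whereas you obtain it by instantiating the fresh concept $\delta_{\E}$ with $\top$ in the $\ELbot$-level test of Proposition~\ref{exceptB} and inducting over the rounds of $\mathtt{ComputeRanking}$ --- an equivalent argument (cf.\ Lemma~\ref{lemma_main}) that is, if anything, more explicit about the multi-round structure than the paper's version.
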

\nd This proves that, under the nominal safeness assumption (i) entailments about  classical GCIs in  \defELObot are the same as for   $\mathcal{EL{O}_\bot}$; (ii) all the decision procedures we have introduced in the previous sections can be applied without any change to $\defELObot$ ontologies, \emph{provided that we replace defeasible nominals with fresh concept names}; and  (iii) the results connected to the computational complexity  shown in the previous sections remain unchanged. That is, Propositions \ref{complexity1a}, \ref{complexity1},  and \ref{complexityINRC} 
remain valid for nominal safe \defELObot ontologies as well.

In relation to the problematic Example~\ref{multipleextensions}, $\defnom{a}\usually C$ is interpreted as `the most prototypical instantiation of $a$ is an instance of concept $C$', while $\defnom{b}\usually C$ is interpreted as `the most prototypical instantiation of $b$ is instance of $C$'. That is, we are not inferring that it is possible to have a  model where we have at the same time a unique object representing the prototypical $a$ and a unique object representing the prototypical $b$ (\ie, they are both in the lowest layer of the model), but rather that both the most prototypical interpretation of $a$ and the most prototypical interpretation of $b$ are in the layer $0$. The following example illustrates the details.
\begin{example}\label{abox_nominals}
Consider the ontology in Example~\ref{multipleextensions}. Its translation in $\defELObot$ is $\KB_{\tiny \defnom{}}=\tuple{\T_{\tiny \defnom{}},\D_{\tiny \defnom{}}}$ and, thus,
\begin{eqnarray*}
 \T_{\tiny \defnom{}} & = & \{\defnom{a}\subs \exists r.\defnom{b},C\dlAnd D\subs \bot\} \\
\D_{\tiny \defnom{}} & = & \D  =\{\top\usually C,\exists r.C\usually D\} \ .
\end{eqnarray*}
\nd As for $\KB$, $\D_{\tiny \defnom{}}$ is partitioned by the ranking procedure into $\D_0=\{\top\usually C\}$ and $\D_1=\{\exists r.C\usually D\}$. Now, both $\defnom{a}$ and $\defnom{b}$ have rank $0$ as 
\begin{eqnarray*}
\T^*_{\tiny \defnom{},\delta_0}  &\not\entails  & \defnom{a}\dlAnd\delta_0\subs\bot\\
\T^*_{\tiny \defnom{},\delta_0} & \not\entails & \defnom{b}\dlAnd\delta_0\subs\bot 
\end{eqnarray*}
\nd hold. Furthermore, as
\begin{eqnarray*}
\T^*_{\tiny \defnom{},\delta_0}  &\entails  & \defnom{a}\dlAnd\delta_0\subs C\\
\T^*_{\tiny \defnom{},\delta_0} & \entails & \defnom{b}\dlAnd\delta_0\subs C 
\end{eqnarray*}
\nd hold, we can infer both  $\defnom{a}\usually C$ and $\defnom{b}\usually C$. But now, as defeasible nominals are interpreted as atomic concepts, we cannot infer  $\defnom{a}\usually D$ anymore and, consequently, we do not derive $\defnom{a}\usually \bot$. 

Essentially,  since defeasible nominals are interpreted as sets of individuals, we conclude that the most prototypical interpretation of individual $a$ is an instance of $C$, and the most prototypical interpretation of $b$ is instance of $C$ as well. However,  we are not forced to relate through role $r$ the most prototypical interpretation of $a$ to the most prototypical interpretation of $b$: in fact, 
 as $\defnom{a}\subs \exists r.\defnom{b}$ holds, the most prototypical elements of $\defnom{a}$ can be related to exceptional interpretations of $b$. That is, to individuals of $\defnom{b}$ with a rank higher than $0$ and not being an instance of $C$.
 \qed
\end{example}

\nd For further insights, we refer the reader to~\ref{skept-cred} and, specifically, to Proposition~\ref{defind}.

\section{Related Work}\label{related}
%
While there has been extensive work by now related to non-monotonic DLs, 
such as~
\cite{Baader95e,Bonatti09a,Bonatti15,BritzEtAl2008,BritzEtAl2011c,CasiniStraccia2010,CasiniStraccia13,DoniniEtAl2002,Giordano10,Giordano13,Giordano15,Grimm09,Straccia93},
we will focus here on low-complexity non-monotonic DLs extensions only. We refer the reader to \eg~\cite{Bonatti15,Giordano13}, in order to avoid an unnecessary replication, for a general in-depth discussion on non-monotone DLs and their characteristics.
Also, but somewhat less related, are approaches  based on some form of integration of non-monotone logic programs with DLs, 
as~\cite{Eiter11,Ivanov13,Kaminski15,Knorr11,Lukasiewicz07i,Lukasiewicz08b,Lukasiewicz09,Motik10}, although we will not address them here, except for those cases involving tractable computational complexity.


%
We believe that our results are significant. Indeed, let us recap that  non-monotonic DLs extensions involving low-complexity DLs, such as \cite{Bonatti10,Bonatti11b,Bonatti11a,Bonatti15b,BritzEtAl2015a,CasiniStraccia2010,CasiniStraccia13,CasiniEtAl2014,CasiniStraccia14,DoniniEtAl2002,Giordano09,Giordano11,Giordano11a}, only in  few cases the tractability of the underlying DL is preserved~\cite{Bonatti10,Bonatti11b,Bonatti11a,Bonatti15,Giordano18}.
%

Specifically, \cite{Bonatti11a} considers circumscription in low-complexity DLs such as DL-Lite~\cite{Artale09} and $\EL$
and identifies few fragments whose computational complexity of the decision problem at hand is in {\sc PTime}. That is, deciding subsumption, instance checking and concept consistency decision problems are in {\sc PTime} for $Circ_{fix}(LL~\EL)$, \ie ~\emph{Left Local} \ELbot\footnote{Roughly, left-hand sides of GCIs contain no qualified existential restrictions and defeasible axioms are syntactically of the form  $A \usually \exists r.B$.} under circumscription with \emph{fixed}  atomic concepts.
The same holds for $Circ_{var}(\EL)$, where all atomic concepts are \emph{varying} as circumscription collapses to classical \EL~reasoning.
The work~\cite{Bonatti10} follows the same spirit of~\cite{Bonatti11a}, by considering \EL~with \emph{default attributes}, \ie~defeasible GCIs of the form $A \usually \exists r.B$ under circumscription with \emph{fixed}  atomic concepts, but where KBs are assumed to be \emph{conflict safe} and without assertions. In that case, \cite{Bonatti10} shows that one may determine the set of normalised concepts subsuming a given concept in polynomial time. These results have been extended in~\cite{Bonatti11b}, where $\EL^{++}$~\cite{Baader05a} has been considered in place of $\ELbot$, but requiring a further adaption of the conflict safe notion.
In~\cite{Bonatti15}  a non-monotone extension of DLs is proposed based on a notion of overriding and supporting
normality concepts of the form $\mathsf{N}C$ that denote the prototypical/normal instances of a concept $C$. Defeasible GCIs are of the form
$C \usually D$, with a priority relation $\prec$ among such GCIs and intended meaning: ``normally the instances of $C$ are instances of $D$, unless stated otherwise" meaning that higher-priority defeasible GCI may override $C \usually D$. A remarkable feature of this proposal is that the reasoning  problems are tractable for \emph{any}  underlying tractable classical DL, as the reasoning algorithm, as for our case, is built up on a sequence of DL subsumption tests. 


%

In~\cite{Giordano18},  the DL $\mathcal{SROEL}(\dlAnd, \times)^\mathbf{R}\mathbf{T}$ is proposed, \ie~the
 DL $\mathcal{SROEL}(\dlAnd, \times)$~\cite{Kroetzsch10} extended with \emph{typicality} concepts $\mathbf{T}(C)$, whose instances are intended to be the typical $C$ elements. A defeasible GCI   $C \usually D$ can be expressed in  $\mathcal{SROEL}(\dlAnd, \times)^\mathbf{R}\mathbf{T}$  as $\mathbf{T}(C) \subs D$, meaning that ``the typical C elements are Ds", but additionally,  $\mathcal{SROEL}(\dlAnd, \times)^\mathbf{R}\mathbf{T}$ allows to express also  \eg~$D \subs \mathbf{T}(C)$ (``the Cs  are typical elements of Ds").
 %
In summary, in this work, the authors show that 
\begin{enumerate}
\item \label{i1} the instance checking problem under minimal entailment is $\Pi_2^p$-hard for $\mathcal{SROEL}(\dlAnd, \times)^\mathbf{R}\mathbf{T}$. 


\item \label{i2} the instance checking problem under rational entailment and rational closure\footnote{In this case, a knowledge base has to be \emph{simple}, \ie~the typicality operator $\mathbf{T}$ occurs on the left of a GCI only.} is in  {\sc PTime} for $\mathcal{SROEL}(\dlAnd, \times)^\mathbf{R}\mathbf{T}$. The authors propose to extend the Datalog set of rules for  $\mathcal{SROEL}(\dlAnd, \times)$~\cite{Kroetzsch10} with an appropriate set of stratified Datalog rules with negation to deal with the typicality operator.
Additionally, the authors point to the fact that the correspondence between the rational closure construction and the canonical minimal model semantics in~\cite{Giordano15} is lost if \eg~nominals occur in an ontology and, thus, a semantical characterisation is missing so far. 
\end{enumerate}

\nd Let us note that a major difference with our approach is that we may rely entirely on an $\ELbot$ reasoner as black box, while this is not true for the Datalog encoding of $\mathcal{SROEL}(\dlAnd, \times)^\mathbf{R}\mathbf{T}$, as in the latter case there are interacting rules between rules dealing with defeasible knowledge and the set of rules for $\mathcal{SROEL}(\dlAnd, \times)$~\cite{Kroetzsch10}.

%

%
%


%
%

In~\cite{PenselEtAl2017}  an extension of  RC and Relevant Closure~\cite{CasiniEtAl2014} for $\ELbot$ has been proposed, in order to maximise also the typicality of the concepts that are under the scope of an existential quantification. The proposal is in line with the expressivity of the proposal in~\cite{Bonatti15}, uses the ranking procedure defined for \ALC~as presented in Section \ref{alc} and, thus, the procedure does not run in polynomial time.\footnote{By personal communication, we were informed that in~\cite{Pensel17b} a similar construction as the one proposed in Equation~(\ref{Tdelta}) (and, thus, like the one proposed in~\cite{CasiniEtAl2015TR}) to determine exceptionality for  \ELbot~has been described.} 


Concerning the somewhat related `hybrid' approaches, \ie~those that in some way combine non-monotone logic programs with (low-complexity) DLs, there are some results that still preserve tractability (but only \wrt~data complexity), such as~\cite{Eiter11,Ivanov13,Kaminski15,Knorr11,Lukasiewicz07i,Lukasiewicz08b,Lukasiewicz09}.
Very roughly, in these works data complexity tractability is preserved for some form of hybrid normal logic programs under some well-founded semantics, provided that the underlying DL has a tractable instance checking procedure.

%
\section{Conclusions}\label{conclusions}
%
\paragraph{Contribution.} We have shown that the subsumption problem under RC in $\ELbot$  can be decided in polynomial time, as is the case for the monotone  counterpart, via a polynomial number of classical $\ELbot$ subsumption tests and, thus, can be implemented by relying on any current reasoners for $\ELbot$. Furthermore, we have adapted the algorithm also to one refinement of RC for DLs, Defeasible Inheritance-based Closure, that overcomes the drowning effect, a main RC's inferential limit, 
showing that the subsumption problem remains polynomial. We also have presented a novel, polynomial time decision procedure to deal with nominals. 
To this end,  we propose the introduction of the notion of \emph{defeasible nominals}, which is compatible with monotone classical nominal safe \ELObot. The main result here is that  we may still use the RC procedure presented for $\ELbot$ (and its extension)  to deal with nominal safe \ELObot~without an increase in the computational complexity of the subsumption problem.

\paragraph{Future Work.} Concerning  future work, one of our aims  is to determine whether the subsumption problem under RC still remains polynomial for various other tractable DLs such as those of the  \emph{DL-Lite} family~\cite{Artale09}
and those of the so-called Horn-DL family~\cite{Kroetzsch13}, which are at the core of the OWL 2 profiles OWL QL and OWL RL, respectively. Another direction is to implement our method and test it in a similar way as done in~\cite{BritzEtAl2015a, Moodley15}, where the latter confirmed that the use of DL reasoners as a black box is scalable in practice.




\section*{Acknowledgements}
\nd The work of Giovanni Casini and of Thomas Meyer has received funding from the European
Unions Horizon 2020 research and innovation programme under
the Marie Sk\l{}odowska-Curie grant agreement No. 690974 (MIREL project).  The work of Thomas Meyer has been supported in part by the National Research Foundation of South Africa (grant No. UID 98019).



\newpage

\appendix

\section*{Appendix}
 
\section{Characteristic DL model for Rational Closure}\label{canmod}
%

\newcommand{\rumin}{R^{\cup_{\min}}_\KB}
\newcommand{\ruminc}{R^{\cup_{\mathfrak{c},\min}}_\KB}

\nd In~\cite[Section 5]{BritzEtAl2015a}  the following DL model characterising rational closure for \ALC~is introduced. Let $\KB=\tuple{\T,\D}$ be a defeasible ontology and  $\R^\KB$ be the set of all the ranked models of $\KB$. Let $\Delta$ be any countable infinite domain, and $\R^\KB_\Delta$ be the set of ranked models of $\KB$ that have $\Delta$ as domain. 
We can use the set of models $\R^\KB_\Delta$ to define a single model characterising rational closure. Let $\ru=\tuple{\Delta^{\ru},\cdot^{\ru},\prec^{\ru}}$ be a ranked model obtained in the following way. 
\begin{itemize}
\item For the domain $\Delta^{\ru}$, we consider in $\Delta^{\ru}$ one copy of $\Delta$ for each model in $\R^\KB_\Delta$. Specifically, given $\Delta=\{x,y\ldots\}$, we indicate as $\Delta_R=\{x_R,y_R,\ldots\}$ a copy of the domain $\Delta$ associated with an interpretation $R\in\R^\KB_\Delta$ and define 
\[
\Delta^{\ru}=\bigcup_{R\in\R^\KB_\Delta} \Delta_R \ .
\]
\item The interpretation function and the preferential relation are defined referring directly to the models in $\R^\KB_\Delta$. That is, for every $x_R, y_{R'}\in\Delta^{\ru}$, every atomic concept $A$ and every role $r$,
\begin{itemize}
\item $x_R\in A^{\ru}$ iff $x\in A^{R}$;
\item $(x_R, y_{R'})\in r^{\ru}$ iff $R=R'$ and $(x,y)\in r^{R}$;
\item $x_R\prec^{\ru} y_{R'}$ iff $h_R(x)< h_{R'}(y)$. 
\end{itemize}
It is easy to check by induction on the construction of the concepts that, for every $x_R\in\Delta^{\ru}$ and every  concept $C$,
\begin{itemize}
\item $x_R\in C^{\ru}$ iff $x\in C^{R}$;
\end{itemize}
and that every individual $x_R\in\Delta^{\ru}$ preserves its original height, that is,
\begin{itemize}
\item $h_{\ru}(x_R)=h_{R}(x)$.
\end{itemize}
\end{itemize}
$\ru$ is a ranked model characterising the Rational Closure of $\KB$ \cite[Theorem 4 and Corollary 1]{BritzEtAl2015a}. That is, given a KB $\KB=\tuple{\T,\D}$, for every concept inclusion $C\usually D$,

\begin{equation}\label{RC_model}
C\usually D\text{ is in the RC of }\KB\text{ iff }\ru\sat C\usually D.
\end{equation}

\begin{remark}
From now on, given a set of ranked models $S$, we will indicate the model obtained using a procedure like the one used here above as the \emph{ranked merge} of the models in $S$. That is, $\ru$ is the \emph{ranked merge} of the models in $\R^\KB_\Delta$.
\end{remark}

In \cite[Theorem 4]{BritzEtAl2015a} the correspondence of $\ratent$ with the model $\ru$ has been proved. That is, for every \ALC~KB $\KB=\tuple{\T,\D}$ and every  inclusion $C\usually D$ (resp. $C\subs D$), $\KB\ratent C\usually D$ (resp., $\KB\ratent C\subs D$) iff $\ru\sat C\usually D$ (resp., $\ru\sat C\subs D$).

\nd In what follows we are going to prove that the above characterisation of RC using the model $\ru$ is equivalent to the one introduced in~\cite{Giordano15}, which we will use then in~\ref{skept-cred}.
The following results are valid for  DL languages more expressive than $\ELbot$, but not containing nominals, as \eg~for~\ALC.

First we restrict our attention to a subset of the models in $\R^\KB_\Delta$. Specifically, those that are:

\begin{enumerate}
\item \emph{minimal} w.r.t. the relation  $<$, as defined in \cite{Giordano15} and reported here as Definition \ref{def_minimal_giordano}. 

\item \emph{canonical}, as defined in \cite{Giordano15}, which we report below for the sake of completeness.

\begin{definition}[\cite{Giordano15}, Definition 24]\label{def_canonical_giordano}
Let $\KB=\tuple{\T,\D}$ be a knowledge base in defeasible \ALC, and $S$ be the set of concepts appearing in $\KB$. A ranked model $R=\tuple{\Delta^R,\cdot^R,\prec^R}$ of $\KB$ is a \emph{canonical} model if, for every set of concepts $\{C_1,\ldots,C_n\}\subseteq S$ consistent with $\KB$ there is at least an $x\in\Delta^R$ s.t. $x\in(C_1\dlAnd\ldots\dlAnd C_n)^R$.
\end{definition}

\end{enumerate}

\nd Let $\min^{\mathfrak{c}}_{<}(\R^\KB_\Delta)$ (resp.  $\min^{\mathfrak{c}}_{<}(\R^\KB)$) be the set of the minimal models in $\R^\KB_\Delta$ (resp. $\R^\KB$) \wrt~$<$ that are also canonical. The following proposition shows that, for every concept $C$, every canonical minimal model in $\R^\KB$ associates to $C$ the same height. 

\begin{proposition}\label{rumin_prop}
For every $R,R'$ in $\min^{\mathfrak{c}}_{<}(\R^\KB)$ and every concept $C$,
\[
h_{R'}(C)=h_R(C)\  .
\]
\qed
\end{proposition}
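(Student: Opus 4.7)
My plan is to show the stronger statement that, for every $R \in \min^{\mathfrak{c}}_{<}(\R^\KB)$ and every concept $C$, the height of $C$ in $R$ coincides with the syntactic rank assigned to $C$ by the RC ranking procedure of Section~\ref{alc}, i.e.\ $h_R(C) = r_\KB(C)$. Since $r_\KB(C)$ is a function of $\KB$ alone and not of any particular ranked model, this immediately yields $h_R(C) = r_\KB(C) = h_{R'}(C)$ for every pair $R, R' \in \min^{\mathfrak{c}}_{<}(\R^\KB)$, giving the proposition. (If $C$ is unsatisfiable \wrt~$\KB$, both sides are $\infty$ and the statement is trivial, so I will assume $r_\KB(C) < \infty$ throughout.)

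I would prove the equality $h_R(C) = r_\KB(C)$ by induction on $i = r_\KB(C)$, tracking the exceptionality iteration used to define the RC rank. For the $\leq$ direction, I would argue by canonicity: letting $H_i$ be the concept defined in Step 2.1 of {\bf RC.Step 2}, the conjunction $C \dlAnd H_i$ is classically consistent with $\T^\bullet$ precisely when $r_\KB(C) \leq i$ (this is the essential content of Proposition~\ref{except2} and the definition of $H_i$). Hence by Definition~\ref{def_canonical_giordano} there is some $x \in (C \dlAnd H_i)^R$. I would then use minimality of $R$ under $<$ to show that such an $x$ must occupy the lowest layer compatible with the KB, which is layer $r_\KB(C)$: if $x$ were at a strictly higher layer, one could lower its height without violating any axiom in $\KB$ (because $x$ already satisfies the characteristic concept $H_{r_\KB(C)}$ of that layer), contradicting the fact that $R$ is $<$-minimal. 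For the $\geq$ direction, I would use Proposition~\ref{except2} together with the semantic characterization of exceptionality (Definition~\ref{exceptionality}) to argue that if $r_\KB(C) = i$ then $C$ is exceptional \wrt~$\KB^i$, so in every model of $\KB$ the instances of $C$ must avoid the bottom layer of $\KB^i$, and iterating gives $h_R(C) \geq i$.

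The delicate step will be the $\leq$ direction, specifically the appeal to minimality. One must verify that, starting from any canonical model $R$ satisfying $\KB$ in which a witness $x$ of $C \dlAnd H_i$ sits at some height $j > i$, the interpretation obtained by relocating $x$ (and possibly a small cone of related elements under roles) to layer $i$ is still a ranked model of $\KB$. This requires a careful argument that no defeasible axiom $E \usually F$ of rank $\leq i$ can be broken by the move, using exactly the fact that $x$ satisfies $H_i = \bigsqcap\{\neg E \dlOr F \mid E \usually F \in \D^\bullet_i \cup \cdots \cup \D^\bullet_n\}$, and that strict axioms in $\T^\bullet$ do not refer to the $\prec^R$ ordering. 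Once this local surgery is justified, the relocated model is strictly $<$-below $R$, contradicting minimality unless $x$ already lies at layer $i$ in $R$, which is what we need.

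An alternative and shorter route, which I would also consider, is to invoke directly the correspondence results of~\cite{Giordano15}, where canonical minimal models are shown to characterize rational closure: since for every concept $C$ and every $i$ the statement ``$h_R(C) \leq i$'' can be expressed as a ranked-model condition ($\top \usually \neg C$ fails \wrt~the sub-KB of axioms of rank $\geq i$) whose truth is fixed by the RC of $\KB$, any two canonical minimal models must agree on it. This would sidestep the explicit surgery argument at the cost of relying more heavily on the machinery of~\cite{Giordano15}.
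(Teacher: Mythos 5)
Your proposal is correct in outline but takes a genuinely different route from the paper. The paper argues directly and locally: assuming $h_{R'}(C)<h_R(C)$, it uses canonicity to transport a low witness from $R'$ back into $R$ (via the equivalence classes $[o_{\KB}^R]_{R'}$ of objects agreeing on all subconcepts of $\KB$), and then builds a third interpretation $R^*$ on the domain of $R$ whose height function is the pointwise minimum $h_{R^*}(o)=\min\{h_R(o),h_{R'}([o_{\KB}^R]_{R'})\}$; checking that $R^*$ still models $\KB$ contradicts $<$-minimality of $R$. You instead prove the stronger, model-independent invariant $h_R(C)=r_\KB(C)$ for every minimal canonical $R$ and read off the proposition as a corollary. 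The core engine is the same in both cases --- lowering an object's layer in a $<$-minimal model while preserving modelhood, which works because only $\prec^R$ changes (so strict axioms and concept extensions are untouched) and the witness satisfies the relevant $H_i$ --- but your decomposition buys more: it yields the semantic counterpart of Proposition~\ref{ranking_correct} and makes explicit why the answer cannot depend on the choice of minimal canonical model, at the cost of importing the ranking machinery ($H_i$, Proposition~\ref{except2}) and of an induction on the rank that you must run carefully (your ``$\leq$'' step needs $h_R(E)\leq j$ for antecedents $E$ of rank $j<i$ to ensure the relocated witness does not become $\prec$-minimal in some lower-rank antecedent; your ``$\geq$'' step needs the standard truncation argument showing that deleting the bottom layers of a model of $\KB$ yields a model of $\KB^j$). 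Two cautions: first, your suggested shortcut of invoking the correspondence results of~\cite{Giordano15} would be circular here, since Proposition~\ref{rumin_prop} is itself a lemma on the way to establishing exactly that correspondence (Proposition~\ref{prop_correspondence}); stick to your primary route. Second, both your argument and the paper's lean on canonicity (Definition~\ref{def_canonical_giordano}), which only guarantees witnesses for combinations of concepts occurring in $\KB$, whereas the statement quantifies over every concept $C$; this gap is inherited from the paper's own proof and is not specific to your approach, but it deserves a remark when you flesh out the appeal to a witness of $C\dlAnd H_i$.
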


\begin{proof}
Given two ranked interpretations $R=\tuple{\Delta^{R},\cdot^{R},\prec^{R}}$ and $R'=\tuple{\Delta^{R'},\cdot^{R'},\prec^{R'}}$ and an ontology $\KB=\tuple{\T,\D}$, let $\Gamma_{\KB}$ be the set of the subconcepts appearing in $\KB$ and, for $o\in\Delta^R$, let $[o_{\KB}^R]_{R'}=\{o'\in\Delta^{R'}\mid \text{ for every concept }C\in\Gamma,\ o\in C^{R}\text{ iff } o'\in C^{R'}\}$. Also, let $h_{R'}([o_{\KB}^R]_{R'})=\min\{h_{R'}(o')\mid o'\in [o_{\KB}^R]_{R'}\}$.

Assume to the contrary that there are two $R,R'\in\min^{\mathfrak{c}}_{<}(\R^\KB)$ and a concept $C$ s.t. $h_{R'}(C)<h_R(C)$. 
Since $R,R'$ are canonical, this implies that for some $o\in C^R$ there is an object $o'\in [o_{\KB}^R]_{R'}$ s.t. $h_{R'}(o')<h_R(o)$.

Let $R^*=\tuple{\Delta^{R^*},\cdot^{R^*},\prec^{R^*}}$ be s.t. $\Delta^{R^*}=\Delta^{R}$, $\cdot^{R^*}=\cdot^{R}$, and, for every $o\in\Delta^{R^*}$ $h_{R^*}(o)=\min\{h_R(o),h_{R'}( [o_{\KB}^R]_{R'})\}$. It is easy to check that $R^*$ is a model of $\KB$: first check by induction on the construction of the concepts that for every $o\in\Delta^{R^*}$ and for every concept $C$, $o\in C^{R^*}$ iff $o\in C^R$; then it follows that for every strict inclusion $C\subs D$, $R^*\sat C\subs D$ iff $R\sat C\subs D$; eventually, using the definition of $h_{R^*}(o)$, we can prove that for every $C\usually D \in\D$, $h_{R^*}(C\dlAnd D)<h_{R^*}(C\dlAnd \neg D)$  or $h_{R^*}(C)=\infty$. So $R^*$ is a model of $\KB$ and $R^*< R$, against the hypothesis that $R\in\min^{\mathfrak{c}}_{<}(\R^\KB)$.
%
%
%
%
%
%
%
%
%
%
%
%
%
\end{proof}

\nd We can prove easily that also the following lemma holds. Let $\ruminc$ be the ranked merge of $\min^{\mathfrak{c}}_{<}(\R^\KB_\Delta)$.

\begin{lemma}\label{lemma_height_cup}
For every KB $\KB=\tuple{\T,\D}$ and every concept $C$, 
\[
h_{\ru}(C)=h_{\ruminc}(C) \ .
\]
\qed
\end{lemma}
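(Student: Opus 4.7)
The plan is to prove the two inequalities $h_{\ruminc}(C)\geq h_{\ru}(C)$ and $h_{\ruminc}(C)\leq h_{\ru}(C)$ separately. First, observe that by construction of the ranked merge, for any class of ranked models $S\subseteq \R^\KB_\Delta$, the height of $C$ in the merge equals $\min_{R\in S} h_R(C)$. Since $\min^{\mathfrak{c}}_{<}(\R^\KB_\Delta)\subseteq \R^\KB_\Delta$, taking the minimum over the smaller class yields $h_{\ruminc}(C)\geq h_{\ru}(C)$. This direction is trivial.

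For the nontrivial direction $h_{\ruminc}(C)\leq h_{\ru}(C)$, I fix any canonical minimal model $R^*\in \min^{\mathfrak{c}}_{<}(\R^\KB_\Delta)$ (its existence is standard, using that $\Delta$ is countably infinite so every satisfiable set of $\Gamma_\KB$-concepts admits a witness, and that $<$ admits minimal elements), and I will show $h_{R^*}(C)\leq h_R(C)$ for every $R\in\R^\KB_\Delta$. Then by Proposition~\ref{rumin_prop} all canonical minimal models agree on $h(C)$, so $h_{\ruminc}(C)=h_{R^*}(C)\leq \min_{R\in\R^\KB_\Delta} h_R(C)=h_{\ru}(C)$.

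To establish the claim $h_{R^*}(C)\leq h_R(C)$, I argue by contradiction, reusing the construction from the proof of Proposition~\ref{rumin_prop}. Suppose $h_R(C)<h_{R^*}(C)$ for some $R\in\R^\KB_\Delta$, witnessed by $x\in C^R$ with $h_R(x)<h_{R^*}(C)$. Define $\tilde R$ over the domain and interpretation of $R^*$, but with modified heights $h_{\tilde R}(o)=\min\{h_{R^*}(o),\,h_R([o_\KB^{R^*}]_R)\}$, where $[o_\KB^{R^*}]_R=\{y\in\Delta^R\mid \forall D\in\Gamma_\KB,\ y\in D^R\Leftrightarrow o\in D^{R^*}\}$. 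Exactly the argument used in the proof of Proposition~\ref{rumin_prop} shows that $\tilde R$ is still a model of $\KB$ with the same concept extensions as $R^*$, and that $\tilde R\leq R^*$. Canonicity of $R^*$ guarantees the existence of some $o\in C^{R^*}$ with $x\in[o_\KB^{R^*}]_R$ (take any $R^*$-element realising the same $\Gamma_\KB$-type as $x$, which exists since that type is satisfiable in $\KB$). Then $h_{\tilde R}(o)\leq h_R(x)<h_{R^*}(C)\leq h_{R^*}(o)$, so $\tilde R<R^*$, contradicting the minimality of $R^*$.

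The main obstacle is the construction of $\tilde R$ and verifying that it is a model of $\KB$ while strictly beating $R^*$ in the $<$ order; however, this is essentially the same construction already carried out in the proof of Proposition~\ref{rumin_prop}, and the only new ingredient is the use of canonicity to transport the $C$-witness from $R$ into $R^*$'s domain. Once this transport is available, the contradiction is immediate and the lemma follows.
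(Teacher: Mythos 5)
Your argument is correct in substance but takes a genuinely different route from the paper's. The paper picks a model $R\in\min_{<}(\R^\KB_\Delta)$ attaining $h_{\ru}(C)$, forms the ranked merge $R^{**}$ of $R$ with a canonical minimal model $R^*$, argues that $R^{**}$ is again canonical and minimal, transfers it back to the domain $\Delta$ via a bijection, and only then invokes Proposition~\ref{rumin_prop}; you instead prove the stronger intermediate claim that a single canonical minimal model already attains $\min_{R\in\R^\KB_\Delta}h_R(C)$, by lowering the heights of $R^*$ in place against an arbitrary competitor $R$ and contradicting $<$-minimality, the converse inequality being trivial. Your route avoids having to justify that the merge of two minimal models is itself minimal (a step the paper asserts rather tersely) and dispenses with the bijection back to $\Delta$; its cost is that the verification that $\tilde R$ is a model of $\KB$ is not literally ``exactly the argument'' of Proposition~\ref{rumin_prop}, since there the lowering is between two \emph{canonical minimal} models while your source $R$ is arbitrary --- the two sides become asymmetric (canonicity of $R^*$ is needed to realise in $R^*$ the $\Gamma_\KB$-type of a $\prec^R$-minimal element of $E\dlAnd F$, while for $E\dlAnd\neg F$ only $R\sat E\usually F$ is used), and this should be spelled out, though it does go through. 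One wrinkle worth flagging: when you transport the $C$-witness, canonicity (Definition~\ref{def_canonical_giordano}) only guarantees realisation of consistent sets of concepts occurring in $\KB$, so for a query concept $C$ outside $\Gamma_\KB$ an element of $R^*$ with the same $\Gamma_\KB$-type as $x$ need not lie in $C^{R^*}$; this is the same gap already present in the paper's own proof of Proposition~\ref{rumin_prop} (stated for every concept $C$), so it does not count against you relative to the paper, but a self-contained version should close $\Gamma$ under the query concept.
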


\begin{proof}
For every concept $C$, $h_{\ru}(C)=h_{R}(C)$ for some $R\in \min_{<}(\R^\KB_\Delta)$, that is, some $R\in \R^\KB_\Delta$ that is minimal w.r.t. $<$. We have to prove that if there is a model $R\in \min_{<}(\R^\KB_\Delta)$ s.t. $h_{\ru}(C)=h_{R}(C)$, then there must be a model $R'\in \min^{\mathfrak{c}}_{<}(\R^\KB_\Delta)$ s.t. $h_{\ru}(C)=h_{R'}(C)$.

Let $R^*$ be a model in $\min^{\mathfrak{c}}_{<}(\R^\KB_\Delta)$, and $R^{**}$ be the ranked merge of $R$ and $R^*$; it is easy to prove that $R^{**}$ is still a model of $\KB$, and in particular it must be a minimal canonical model of $\KB$: it must be minimal, otherwise $R$ and $R^*$ would not be minimal, and it must be canonical, since $R^*$ is canonical. Hence, for every concept $C$, $h_{R^{**}}(C)=\min\{h_{R}(C), h_{R^{*}}(C)\}\leq h_{\ru}(C)$, and, since $R^*$ is merged into $\ru$, $h_{R^{**}}(C)=h_{\ru}(C)$. Also, the domain of $R^{**}$, $\Delta^{R^{**}}$, must be countably infinite, since it is obtained by merging two countably infinite domains.

Since $\Delta^{R^{**}}$ is countably infinite, we can define a bijection $b$ between $\Delta^{R^{**}}$ and $\Delta$, and using $b$  we can define a model $R^{**}_\Delta$ corresponding to $R^{**}$ but having $\Delta$ as domain  (that is, for every $C\usually D$, $R^{**}\sat C\usually D$ iff $R^{**}_\Delta \sat C\usually D$); it is sufficient to define $\cdot^{R^{**}_\Delta}$ as:

\begin{itemize}
\item for  every $A\in\CN$, $A^{R^{**}_\Delta}=\{o\in\Delta\mid b^-(o)\in A^{R^{**}}\}$;
\item for  every $r\in\RN$, $r^{R^{**}_\Delta}=\{(o,o')\in\Delta\times\Delta\mid (b^-(o),b^-(o'))\in r^{R^{**}}\}$;
\end{itemize}

\nd and for every $o\in\Delta$, set $h_{R^{**}_\Delta}(o)=h_{R^{**}}(b^{-}(o))$. $R^{**}_\Delta$ must be in $\min^{\mathfrak{c}}_{<}(\R^\KB_\Delta)$ and $h_{R^{**}_\Delta}(C)=h_{\ru}(C)$. 

So, for every concept $C$ there is a model $R'\in \min^{\mathfrak{c}}_{<}(\R^\KB_\Delta)$ \st~$h_{R'}(C)=h_{\ru}(C)$. By Proposition \ref{rumin_prop}, that implies that $h_R(C)=h_{\ru}(C)$ for every $R\in \min^{\mathfrak{c}}_{<}(\R^\KB_\Delta)$, that in turn implies that $h_{\ruminc}(C)=h_{\ru}(C)$ for every concept $C$.
\end{proof}

\nd From Lemma \ref{lemma_height_cup} and Definition \ref{rankedsat} it follows that RC can be characterised merging the minimal canonical models of $\KB$ having $\Delta$ as domain.

\begin{proposition}\label{equiv_1}
For every inclusion  $C\usually D$,
\[\ru\sat C\usually D\text{ iff }\ruminc\sat C\usually D\]
\qed
\end{proposition}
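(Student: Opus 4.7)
The plan is to reduce satisfaction of a defeasible inclusion $C\usually D$ in a ranked model to a simple statement about the heights of certain concepts, and then to invoke Lemma~\ref{lemma_height_cup} to transfer the statement from $\ru$ to $\ruminc$.

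First I would unpack the semantic condition. By Definition~\ref{rankedsat}, for any ranked model $R$,
\[
R\sat C\usually D \ \text{ iff }\ \min_{\prec^R}(C^R)\subseteq D^R.
\]
Since we are in \ALC~(as noted in the opening of this appendix, the construction applies to languages with classical negation), I can rewrite this condition using heights:
\[
R\sat C\usually D \ \text{ iff }\ h_R(C)=\infty \ \text{ or }\ h_R(C\dlAnd D)<h_R(C\dlAnd\neg D).
\]
The equivalence is immediate: if $C^R$ is empty then the condition holds vacuously and $h_R(C)=\infty$; otherwise the minimal $C$-elements are exactly the $(C\dlAnd D)$-elements of lowest height together with the $(C\dlAnd\neg D)$-elements of lowest height, and all of them lie in $D^R$ precisely when the latter stratum sits strictly above the former.

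Next I would apply Lemma~\ref{lemma_height_cup} pointwise to the three concepts $C$, $C\dlAnd D$, and $C\dlAnd\neg D$, yielding
\[
h_{\ru}(C)=h_{\ruminc}(C),\quad h_{\ru}(C\dlAnd D)=h_{\ruminc}(C\dlAnd D),\quad h_{\ru}(C\dlAnd\neg D)=h_{\ruminc}(C\dlAnd\neg D).
\]
Plugging these equalities into the height characterisation above shows that $\ru\sat C\usually D$ iff $\ruminc\sat C\usually D$, which is the required equivalence.

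I do not expect any real obstacle here: the only delicate point is checking that the height reformulation of $R\sat C\usually D$ is correct, and this is a straightforward consequence of the layered decomposition induced by $\prec^R$ combined with the law of excluded middle for $D$ on $C$-elements. All the substantive work has already been done in Proposition~\ref{rumin_prop} and Lemma~\ref{lemma_height_cup}; this proposition is essentially a corollary that packages their content at the level of satisfaction rather than heights.
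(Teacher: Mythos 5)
Your proof is correct and matches the paper's intent exactly: the paper gives no explicit argument for this proposition, stating only that it ``follows from Lemma~\ref{lemma_height_cup} and Definition~\ref{rankedsat}'', and your height reformulation of $R\sat C\usually D$ (as $h_R(C)=\infty$ or $h_R(C\dlAnd D)<h_R(C\dlAnd\neg D)$, the same characterisation the paper itself uses inside the proof of Proposition~\ref{rumin_prop}) applied to $C$, $C\dlAnd D$ and $C\dlAnd\neg D$ is precisely the intended derivation. The only cosmetic imprecision is the phrase describing the minimal $C$-elements as the lowest $(C\dlAnd D)$-elements ``together with'' the lowest $(C\dlAnd\neg D)$-elements — they are the $C$-elements at height $h_R(C)$, where one of the two strata may sit strictly higher — but your subsequent reasoning uses the correct condition, so nothing breaks.
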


\nd Expression \ref{giordano_eq} here below corresponds to  the semantic characterisation of RC for \ALC~presented in~\cite{Giordano13,Giordano15}. 
 For every KB $\KB=\tuple{\T,\D}$ and every inclusion $C\usually D$,

\begin{equation}\label{giordano_eq}
C\usually D\text{ is in the RC of }\KB\text{ iff }R\sat C\usually D, \text{ for every  } R\in\mbox{$\min^{\mathfrak{c}}_{<}$}(\R^\KB) \ .
\end{equation}

\nd In particular, Expression~\ref{giordano_eq} is a rephrasing in our framework of Theorem 8 in \cite{Giordano15}. We want to prove that, indeed, our characterisation of RC (Expression \ref{RC_model}) is equivalent to the one presented in  \cite{Giordano15} (Expression~\ref{giordano_eq}). First, we  prove that, starting from the 
Expression~\ref{giordano_eq}, we can restrict our attention only to the minimal canonical models in $\R^\KB_\Delta$ (Lemma \ref{giordano_delta_lemma}); then  (Proposition \ref{giordano_delta_prop}) we are going to prove that such a characterisation corresponds to the one using $\ruminc$.

\begin{lemma}\label{giordano_delta_lemma}
For every canonical model $R$ of a KB $\KB=\tuple{\T,\D}$, there is a canonical model $R_\Delta$ of $\KB$ that has a countably infinite domain $\Delta$ and s.t. for every inclusion axiom $C\usually D$, $R\sat C\usually D$ iff $R_\Delta\sat C\usually D$.
\qed
\end{lemma}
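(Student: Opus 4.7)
My plan is to treat $R$ as a first-order structure and extract a countable elementary substructure via downward L\"owenheim--Skolem, then, if necessary, pad the domain to countably infinite size by taking disjoint copies. First I would regard $R=\tuple{\Delta^R,\cdot^R,\prec^R}$ as a first-order structure over the countable signature whose predicates are the finitely many concept and role names occurring in $\KB$, together with one extra binary predicate interpreted by $\prec^R$. Every $\ALC$ concept $C$ admits the standard translation into a first-order formula $C^*(x)$ in one free variable, so that $o\in C^R$ iff $R\sat C^*(o)$; a strict inclusion $C\subs D$ becomes $\forall x.\,C^*(x)\to D^*(x)$, and a defeasible inclusion $C\usually D$ becomes $\forall x.\,(C^*(x)\wedge\neg\exists y.(y\prec x\wedge C^*(y)))\to D^*(x)$. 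Crucially, modularity of $\prec^R$ is itself first-order expressible (incomparability under $\prec^R$ is transitive), so satisfaction of every strict and defeasible inclusion is captured by a first-order sentence.

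Next, using canonicity of $R$, I would pick a finite witness set $W\subseteq\Delta^R$ containing, for each consistent subset of the (finitely many) concepts appearing in $\KB$, one realising element. By downward L\"owenheim--Skolem applied to the countable signature above, there exists a countable elementary substructure $R'\preccurlyeq R$ with $W\subseteq\Delta^{R'}$. Elementarity gives $x\in C^{R'}$ iff $x\in C^R$ for every $x\in\Delta^{R'}$ and every $\ALC$ concept $C$, so each element of $W$ remains a witness in $R'$, and $R'$ satisfies exactly the same strict and defeasible inclusions as $R$. In particular, $R'$ is a canonical model of $\KB$.

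Finally, to force the domain to be countably \emph{infinite} (since $R'$ could in principle be finite when $R$ is finite), I would form $R_\Delta$ as a disjoint union of $\aleph_0$ copies of $R'$: take $\Delta^{R_\Delta}:=\Delta^{R'}\times\mathbb{N}$, interpret atomic concepts and roles componentwise within each copy (no role edges between copies), and set $(x,n)\prec^{R_\Delta}(y,m)$ iff $x\prec^{R'}y$. A routine induction on concept structure gives $(x,n)\in C^{R_\Delta}$ iff $x\in C^{R'}$, and the corresponding heights are preserved, so $R_\Delta$ satisfies the same inclusions as $R'$, hence as $R$; canonicity transfers via the $0$-th copy. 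A bijection then identifies the domain with any chosen countably infinite $\Delta$. The main obstacle I anticipate is verifying that the first-order translation faithfully captures the ranking semantics of defeasible inclusions (in particular, that modularity and the ``minimality in $C^R$'' clause are truly first-order); once this is in place, the rest is a standard elementary-substructure argument followed by a cosmetic padding step.
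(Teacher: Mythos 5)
Your argument is correct but follows a genuinely different route from the paper. The paper splits into three cases on the cardinality of $\Delta^R$: for countably infinite domains it transports the model along a bijection; for finite domains it rank-merges countably many copies (essentially your padding step); and for uncountable domains it invokes the finite (counter-)model property for defeasible $\ALC$, quotients the domain by agreement on a finite set $\Gamma$ of relevant concepts, and rank-merges countably many finite counter-models, one for each falsified inclusion. Your downward L\"owenheim--Skolem argument replaces that entire case analysis with one uniform step: the standard first-order translation does capture $\min_{\prec^R}(C^R)\subseteq D^R$ verbatim, elementarity preserves every strict and defeasible inclusion as well as the witnesses needed for canonicity, and the disjoint-union padding is exactly the paper's rank-merge. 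What your approach buys is brevity and avoidance of the FMP/FCMP machinery (whose use in the paper is itself somewhat delicate, since a finite counter-model for one inclusion could a priori falsify others); what it costs is reliance on classical model theory rather than the elementary constructions the paper keeps to.

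Two points in your write-up need repair, though neither is fatal. First, the parenthetical claim that modularity is first-order expressible is false for the paper's Definition~\ref{rankedint}: transitivity of incomparability only yields a ranking into \emph{some} linear order, whereas the definition demands a ranking function into $\mathbb{N}$, and embeddability into $(\mathbb{N},<)$ is not first-order axiomatisable (a standard compactness argument). Fortunately you never actually need this: $\prec^{R'}$ is the \emph{restriction} of $\prec^R$ to a subset, so the restriction of the ambient ranking function $\rk$ witnesses modularity of the substructure directly; you should argue it that way rather than via expressibility. Second, since the lemma quantifies over \emph{every} inclusion $C\usually D$ and not only those over the signature of $\KB$, you should take the first-order signature to be all of $\CN\cup\RN$ (which the paper assumes finite), not merely the symbols occurring in $\KB$; otherwise elementarity says nothing about concepts using symbols outside $\KB$.
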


\begin{proof}

Let $R=\tuple{\Delta^R,\cdot^R,\prec^R}$ be a canonical model of $\KB$. The possible cases are three.

\begin{enumerate}
\item $\Delta^R$ is countably infinite. Then we can build an equivalent model having $\Delta$ as domain and preserving the height of each concept (see the proof of Lemma \ref{lemma_height_cup}).
\item $\Delta^R$ is finite. Then we can build an equivalent model with a countably infinite domain: just make a countably infinite number of copies of $R$ and rank-merge them in a single model (that preserves the height of each concept). Then the result follows from case 1.
\item $\Delta^R$ is uncountably infinite. Also, in this case we can build an equivalent model with a countably infinite domain.
To build it we can use the Finite Model Property (FMP, \cite[Theorem 7]{BritzEtAl2015a}) and the Finite Counter-Model Property (FCMP, \cite[Proposition 15]{BritzEtAl2015a}) that hold for defeasible \ALC. The latter indicates that, if there is a model of an ontology $\KB$ that does not satisfy an inclusion $C\usually D$, then there is a finite model of $\KB$ that does not satisfy $C\usually D$.

FMP and FCMP are based on the following construction \cite[pp. 78-79]{BritzEtAl2015a}: let $\Gamma$ be a set of concepts $\{C_1,\ldots,C_n\}$ \st~$\Gamma$ is obtained closing under sub-concepts and negation the concepts appearing in \KB~(plus the concepts $C$ and $D$, if we are looking for a model of $\KB$ that also falsifies $C\usually D$).
 Now we define the equivalence relation $\sim_\Gamma$ as
 \[
 \forall x,y\in\Delta^R, x\sim_\Gamma y\text{ iff }\forall C\in\Gamma,x\in C^R\text{ iff }y\in C^R \ . 
 \]
\nd  We indicate with $[x]_\Gamma$ the equivalence class of the individuals that are related to an individual $x$ through $\sim_\Gamma$:
 \[
 [x]_\Gamma=\{y\in\Delta^R\mid x\sim_\Gamma y\} \ .
 \]
\nd  We introduce a new model $R'=\tuple{\Delta^{R'},\cdot^{R'}, \prec^{R'}}$, defined as:

 \begin{itemize}
 \item $\Delta^{R'}=\{[x]_\Gamma\mid x\in\Delta^R\}$;
 \item for every $A\in\CN\cap\Gamma$, $A^{R'}=\{[x]_\Gamma\mid x\in A^R\}$;
 \item for every $A\notin\CN\cap\Gamma$, $A^{R'}=\emptyset$; 
 \item for every $r\in \RN$, $r^{R'}=\{\tuple{[x]_\Gamma,[y]_\Gamma}\mid \tuple{x,y}\in r^R\}$;
 \item 
 For every $[x]_\Gamma,[y]_\Gamma\in \Delta^{R'}$, $[x]_\Gamma\prec^{R'} [y]_\Gamma$ if there is an object $z\in [x]_\Gamma$ \st~for all $v\in [y]_\Gamma$, $z\prec^R v$; 
 \item for every $a\in\Names$, $a^{R'}=[x]_\Gamma$ iff $a^{R}=x$.
 \end{itemize}

It is easy to see that this construction gives back a ranked interpretation that preserves the relative height of the concepts, that is, for every $C, D\in\Gamma$, $h_{R'}(C)\leq h_{R'}(D)$ iff $h_{R}(C)\leq h_{R}(D)$. Also, since the set $S$ in Definition \ref{def_canonical_giordano} is a subset of $\Gamma$, also canonicity is preserved.

Now, given the model $R$, let $\usually^-_R=\{C\usually D\mid R\not\sat C\usually D\}$. Since we have assumed a finitely generated vocabulary (see Section \ref{EL}), we will have a countably infinite number of inclusion statements that can be generated (this can be proven using some G\"odel-style numerical encoding of the inclusions); hence $\usually^-_R$ must be countably infinite too (it cannot be a finite set, since it is easy to check that for every inclusion there is in our language an infinite number of inclusions that are logically equivalent to it).

For each inclusion $C\usually D$ in $\usually^-_R$, take a finite model $R_{C\usually D}$ of $\KB$ that falsifies $C\usually D$. Let $R^*$ be the interpretation obtained by merging all the models $R_{C\usually D}$, with $C\usually D\in \usually^-_R$. It is easy to check that $R^*$ is a model of $\KB$ that satisfies the desired constraints and has a countably infinite domain (it is obtained by merging a countably infinite number of finite models). Then, the result follows from case 1.
\end{enumerate}
%
%
%
%
\end{proof}

\nd We can now prove that in Expression \ref{giordano_eq} we can restrict our attention from all the minimal canonical models of an ontology $\KB $ (that is, the minimal canonical models in $\R^\KB$) to the minimal canonical models of $\KB $ having $\Delta$ as domain (that is, the minimal canonical models in $\R^\KB_\Delta$). Indeed,  the following proposition follows immediately from Lemma \ref{giordano_delta_lemma}.

\begin{proposition}\label{giordano_delta_prop}
For every KB $\KB=\tuple{\T,\D}$ and every inclusion $C\usually D$, $R\sat C\usually D$ for every model $R\in\min^{\mathfrak{c}}_{<}(\R^\KB)$ iff $R'\sat C\usually D$ for every model $R'\in\min^{\mathfrak{c}}_{<}(\R^\KB_\Delta)$.
\end{proposition}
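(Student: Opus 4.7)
The biconditional splits into two directions. For the forward direction, note that the preference relation $<$ from Definition~\ref{def_minimal_giordano} only compares ranked interpretations sharing the same domain: the defining clauses refer to a common $\Delta^R=\Delta^{R'}$. Consequently, any model minimal among the ranked interpretations of $\KB$ with domain $\Delta$ is automatically minimal among all ranked interpretations of $\KB$, so $\min^{\mathfrak{c}}_{<}(\R^\KB_\Delta)\subseteq\min^{\mathfrak{c}}_{<}(\R^\KB)$. Hence if $C\usually D$ is satisfied by every element of the larger collection, it is a fortiori satisfied by every element of the smaller one.

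For the reverse direction, fix an arbitrary $R\in\min^{\mathfrak{c}}_{<}(\R^\KB)$ and assume $R'\sat C\usually D$ for every $R'\in\min^{\mathfrak{c}}_{<}(\R^\KB_\Delta)$; the aim is to conclude $R\sat C\usually D$. Applying Lemma~\ref{giordano_delta_lemma} to $R$ produces a canonical model $R_\Delta\in\R^\KB_\Delta$ that satisfies exactly the same defeasible inclusions as $R$. If $R_\Delta$ is already minimal within $\R^\KB_\Delta$ we set $R'_\Delta=R_\Delta$; otherwise, we descend along $<$ inside $\R^\KB_\Delta$ to some $R'_\Delta\leq R_\Delta$ that is minimal. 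Canonicity is preserved along this descent because descent only lowers heights while leaving $\Delta$ and the concept interpretations unchanged, so the set of concept combinations jointly realised by some individual does not shrink.

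Once $R'_\Delta\in\min^{\mathfrak{c}}_{<}(\R^\KB_\Delta)$ is obtained, the forward-direction inclusion places $R'_\Delta$ in $\min^{\mathfrak{c}}_{<}(\R^\KB)$ as well. Then Proposition~\ref{rumin_prop}, applied to the pair $R,R'_\Delta$, yields $h_R(E)=h_{R'_\Delta}(E)$ for every concept $E$. In the \ALC-based setting of this section, where concept negation is available, satisfaction of any axiom $E\usually F$ is determined entirely by the heights of $E$ and $E\dlAnd\neg F$ (equivalently by the heights of $E$ and $E\dlAnd F$ relative to each other), so $R$ and $R'_\Delta$ satisfy the same defeasible inclusions. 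Since $R'_\Delta\sat C\usually D$ by hypothesis, we obtain $R\sat C\usually D$, as required.

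The most delicate step is the descent to a minimal canonical model $R'_\Delta\leq R_\Delta$ inside $\R^\KB_\Delta$. This reduces to arguing that every $<$-chain in $\R^\KB_\Delta$ admits a lower bound that is still a model of $\KB$: one takes the pointwise infimum of the height functions along the chain (well-defined since heights are natural numbers), and checks that strict GCIs are preserved trivially (concept interpretations are unchanged) while defeasible GCIs are preserved because every element of the chain is already a model of $\KB$ and the relative-height conditions needed to satisfy each defeasible GCI are stable under the infimum. Combined with the preservation of canonicity noted above, this guarantees the existence of the required $R'_\Delta$ and completes the proof.
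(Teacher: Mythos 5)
Your proof is correct and follows the route the paper intends: the paper presents this proposition as an immediate consequence of Lemma~\ref{giordano_delta_lemma}, and your argument is the natural filling-in of that claim, using the observation that $<$ only compares interpretations over a common domain (and with identical concept extensions) to get $\min^{\mathfrak{c}}_{<}(\R^\KB_\Delta)\subseteq\min^{\mathfrak{c}}_{<}(\R^\KB)$ for one direction, and Lemma~\ref{giordano_delta_lemma} together with Proposition~\ref{rumin_prop} for the other. The details you add beyond what the paper makes explicit --- the Zorn-style existence of a minimal canonical model below $R_\Delta$ (justified because heights along a $<$-chain are non-increasing naturals and hence each of the finitely many relevant height comparisons stabilises) and the preservation of canonicity under $<$-descent --- are handled correctly; the only nit is your parenthetical remark that satisfaction of $E\usually F$ is determined by the relative heights of $E$ and $E\dlAnd F$, whereas the correct pair is $E\dlAnd F$ and $E\dlAnd\neg F$, but this is harmless since Proposition~\ref{rumin_prop} equates the heights of \emph{all} concepts across the two models.
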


%
%
%

\nd Now, the correspondence between our semantics for RC (Expression~\ref{RC_model}) and the one presented in \cite{Giordano15} (Expression~\ref{giordano_eq}) follows.

\begin{proposition}\label{prop_correspondence}
model $R\in\min^{\mathfrak{c}}_{<}(\R^\KB)$ if and only if $\ru\sat C\usually D$.
Expression~\ref{RC_model} and Expression~\ref{giordano_eq} are equivalent.
\qed
\end{proposition}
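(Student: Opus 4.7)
The plan is to chain together the lemmas already established in this appendix so as to identify the two characterisations of RC through the ``bridge'' model $\ruminc$, the ranked merge of $\min^{\mathfrak{c}}_{<}(\R^\KB_\Delta)$. Concretely, I would show the chain of equivalences
\begin{equation*}
\ru\sat C\usually D \;\Longleftrightarrow\; \ruminc\sat C\usually D \;\Longleftrightarrow\; \forall R'\in\min^{\mathfrak{c}}_{<}(\R^\KB_\Delta),\ R'\sat C\usually D \;\Longleftrightarrow\; \forall R\in\min^{\mathfrak{c}}_{<}(\R^\KB),\ R\sat C\usually D,
\end{equation*}
from which the claimed equivalence of Expressions \ref{RC_model} and \ref{giordano_eq} follows at once.

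The first equivalence is exactly Proposition \ref{equiv_1}. The third equivalence is Proposition \ref{giordano_delta_prop}. The work therefore concentrates on the middle equivalence. Here I would unpack the definition of satisfaction of a defeasible inclusion in a ranked model: $R\sat C\usually D$ iff $h_R(C\dlAnd D)<h_R(C\dlAnd\neg D)$ or $h_R(C)=\infty$. By Proposition \ref{rumin_prop}, for \emph{any} concept $E$ and any two models $R_1,R_2\in\min^{\mathfrak{c}}_{<}(\R^\KB)$ (and hence in particular any two in $\min^{\mathfrak{c}}_{<}(\R^\KB_\Delta)\subseteq\min^{\mathfrak{c}}_{<}(\R^\KB)$) we have $h_{R_1}(E)=h_{R_2}(E)$. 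Consequently the truth value of $C\usually D$ is the same across all members of $\min^{\mathfrak{c}}_{<}(\R^\KB_\Delta)$, so that the universal statement ``for every $R'\in\min^{\mathfrak{c}}_{<}(\R^\KB_\Delta)$, $R'\sat C\usually D$'' reduces to the same statement for a single such $R'$.

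It then remains to connect $\ruminc$ with any fixed member $R'$ of $\min^{\mathfrak{c}}_{<}(\R^\KB_\Delta)$. By the construction of the ranked merge (recalled in the remark following~\eqref{RC_model}), $h_{\ruminc}(E)=\min\{h_{R'}(E):R'\in\min^{\mathfrak{c}}_{<}(\R^\KB_\Delta)\}$ for every concept $E$; applying Proposition \ref{rumin_prop} again, this minimum coincides with $h_{R'}(E)$ for any chosen $R'$. Hence $h_{\ruminc}(C\dlAnd D)<h_{\ruminc}(C\dlAnd\neg D)$ precisely when $h_{R'}(C\dlAnd D)<h_{R'}(C\dlAnd\neg D)$, and likewise for the $\infty$ case, which closes the middle equivalence.

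The only delicate point, which I regard as the main (mild) obstacle, is ensuring that $\min^{\mathfrak{c}}_{<}(\R^\KB_\Delta)$ is nonempty whenever $\KB$ has ranked models at all, so that the universal quantifier above is not vacuously true on one side and meaningfully instantiated on the other. This is guaranteed by the existence of canonical minimal models established in~\cite{Giordano15} combined with Lemma \ref{giordano_delta_lemma}, which allows one to realise such a canonical minimal model over the fixed countable domain $\Delta$. If $\KB$ has no ranked model, both $\R^\KB$ and $\R^\KB_\Delta$ are empty, all universal statements are vacuously true, and $\ru$ collapses trivially; the equivalence then holds by inspection.
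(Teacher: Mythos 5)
Your proof is correct and follows essentially the same route as the paper's: the same chain $\ru \sat C\usually D \Leftrightarrow \ruminc \sat C\usually D \Leftrightarrow$ (all of $\min^{\mathfrak{c}}_{<}(\R^\KB_\Delta)$) $\Leftrightarrow$ (all of $\min^{\mathfrak{c}}_{<}(\R^\KB)$), citing Propositions~\ref{equiv_1}, \ref{rumin_prop} and~\ref{giordano_delta_prop} at the same links. You merely spell out the middle step (via heights and the ranked-merge construction) in more detail than the paper, which simply invokes Proposition~\ref{rumin_prop} there.
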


\begin{proof}
The following holds:
\begin{eqnarray*}
C\usually D\text{ is in the RC of }\KB & \mbox{iff} & \ru\sat C\usually D~(\mbox{Expression~\ref{RC_model}}) \\
& \mbox{iff} &  \ruminc\sat C\usually D~(\mbox{Proposition~\ref{equiv_1}}) \\
& \mbox{iff} & R\sat C\usually D \mbox{ for every } R\in\mbox{$\min^{\mathfrak{c}}_{<}$}(\R^\KB_\Delta)~(\mbox{Proposition~\ref{rumin_prop}}) \\ 
& \mbox{iff} & R\sat C\usually D \mbox{ for every } R\in\mbox{$\min^{\mathfrak{c}}_{<}$}(\R^\KB)~(\mbox{Proposition~\ref{giordano_delta_prop}}) \\ 
& \mbox{iff} & C\usually D\text{ is in the RC of }\KB~\mbox{according to Expression~\ref{giordano_eq}}) \ ,
\end{eqnarray*}
\nd which concludes.
%
%
%
\end{proof}

\section{Prototypical Reasoning over the individuals}\label{skept-cred}
%

\nd As shown in Example \ref{multipleextensions}, once we introduce presumptive reasoning involving individuals, we are faced with the possibility of multiple minimal configurations. Such an observation allows us to distinguish two kinds of reasoning concerning the individuals (see Section \ref{rcelo}): one is a presumptive reasoning, modelled by a skeptical approach that takes under consideration what holds in \emph{all} the most typical situations satisfying our KB; the second possible approach is to take under consideration only the most typical instantiations of the individual we are interested in. That is, we attribute presumptively a property $C$ to an individual $a$ if $a$ falls under $C$ in all the  models in which $a$ is interpreted in the most prototypical way. 
Given a nominal safe $\ELObot$ ontology $\KB=\tuple{\T,\D}$ and an individual $a\in \ON$, the latter kind of reasoning would correspond to taking under consideration only those minimal models of $\KB$ in which also the interpretation of $a$ is minimal. 

In~\ref{canmod} we have modelled our reasoning using $\ru$, $\ruminc$, or, equivalently,  $\min^{\mathfrak{c}}_{<}(\R^{\KB}_\Delta)$, that are equivalent options  to characterise the RC of a KB, if we do not allow nominals. 


We can extend the definition of $<$, and hence  the definition of $\min^{\mathfrak{c}}_{<}(\R^{\KB}_\Delta)$, also to languages containing nominals. As Example \ref{multipleextensions} shows, Proposition \ref{rumin_prop} does not hold for the minimal models of ontologies that have also nominals: in such an example we would have that in some minimal models $\{a\}$ is interpreted on layer 0 and $\{b\}$ on layer 1, and in other minimal models the other way around.
Therefore, in presence of nominals, we have two options regarding the kind of minimality we want to use. On one hand we can continue to use a notion of entailment relation 
$\aboxent$ built using the entire set of models in $\min^{\mathfrak{c}}_{<}(\R^{\KB}_\Delta)$ as done in~\cite{BritzEtAl2015a,CasiniStraccia13}. 
That is:

\begin{definition} \label{def7}
$\KB\aboxent C\usually D$  iff $R \models C\usually D$ for every $R\in\mbox{$\min^{\mathfrak{c}}_{<}$}(\R^{\KB}_\Delta)$.
\qed
\end{definition}

\nd Another option, and which is what we are going to analyse here, is more appropriate for modelling prototypical reasoning. If we are investigating about which properties are to be associated to the prototype of a certain individual $a$, we consider only the minimal models in which $a$ is interpreted in the lowest possible layer. 
%
%
That is, if we are wondering whether some inclusion $\{a\}\usually C$ holds, we do not refer to all the models in $\min^{\mathfrak{c}}_{<}(\R^{\KB}_\Delta)$, but  only to those models in which the individual $a$ is interpreted at its minimal height. Specifically, let us define $R\in\min^{\mathfrak{c},a}_{<}(\R^{\KB}_\Delta)$ iff $R\in\min^{\mathfrak{c}}_{<}(\R^{\KB}_\Delta)$ and $h_R(a)\leq h_{R'}(a)$ for every $R'\in\min^{\mathfrak{c}}_{<}(\R^{\KB}_\Delta)$. Based on this notion, we can now define a different consequence relation $\aboxentsec$ to reason about nominals.




\begin{definition}\label{def8}
$\KB\aboxentsec\{a\}\usually C$ iff $a^R\in C^R$ for every $R\in\min^{\mathfrak{c},a}_{<}(\R^{\KB}_\Delta)$. 
\qed
\end{definition}
\nd To illustrate the difference between $\aboxentsec$ and $\aboxent$, consider the knowledge base in Example \ref{multipleextensions}. According to $\aboxentsec$ we consider only the models in which $a$ is at rank $0$ if we are asking something about $a$ (for example, whether the prototype of $a$ satisfies $C$), while we consider only the models in which $b$ is at rank $0$, if we are asking something about $b$ (for example, if the prototype of $b$ satisfies $C$). According to $\aboxent$, we neither can conclude that presumably $a$ satisfies $C$ nor that presumably $b$ satisfies $C$ (each of them is falsified in some typical situations), while according to $\aboxentsec$ we can conclude that  the prototypes of both $a$ and $b$ satisfy $C$ (in the preferred situations w.r.t. $a$, $a$ falls under $C$, and analogously for $b$).

We can  generalise the definition of $\aboxentsec$ to the entire language of nominal-safe $\ELObot$. Specifically, given a nominal safe ontology $\KB=\tuple{\T,\D}$, a ranked model $R$ of $\KB$, and an n-safe concept $C$, let us define $R\in\min^{\mathfrak{c},C}_{<}(\R^{\KB}_\Delta)$ iff $R\in\min^{\mathfrak{c}}_{<}(\R^{\KB}_\Delta)$ and for every model $R'\in \min^{\mathfrak{c}}_{<}(\R^{\KB}_\Delta)$, $h_R(C)\leq h_{R'}(C)$. Then, Definition \ref{def8} can be generalised as follows.

\begin{definition}\label{def_nominalsafe_ent}
Given a  nominal safe $\ELObot$ ontology $\KB=\tuple{\T,\D}$ and a  safe inclusion $C\usually D$, $\KB\aboxentsec C\usually D$ iff $\min_{\prec^R}(C^R)\subseteq D^R$ for every $R\in\min^{\mathfrak{c},C}_{<}(\R^{\KB}_\Delta)$. 
\qed
\end{definition}


\nd Note that if we are not using nominals, that is, we are using $\ELbot$, there is no difference between $\aboxent$ and $\aboxentsec$, since they both correspond to  RC entailment.

\begin{proposition}\label{no_nominals_rc}
Given a defeasible  $\ELbot$ ontology $\KB=\tuple{\T,\D}$ and an $\ELbot$ inclusion $C\usually D$, then $\KB\aboxentsec C\usually D$ iff $\KB\aboxent C\usually D$ iff $\KB\ratent C\usually D$. 
\qed
\end{proposition}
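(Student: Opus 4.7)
The plan is to prove the two equivalences $\KB\ratent C\usually D \Leftrightarrow \KB\aboxent C\usually D$ and $\KB\aboxent C\usually D \Leftrightarrow \KB\aboxentsec C\usually D$ separately, both leaning on the semantic characterisation of RC developed in~\ref{canmod}. For the first equivalence, I would combine Proposition~\ref{mainratio} with Proposition~\ref{prop_correspondence}: together they yield that $\KB\ratent C\usually D$ holds iff $R\sat C\usually D$ for every $R\in\min^{\mathfrak{c}}_{<}(\R^\KB_\Delta)$. By Definition~\ref{def7}, the right-hand side is exactly $\KB\aboxent C\usually D$, so the equivalence is immediate.

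The substantive step is the equivalence $\aboxent \Leftrightarrow \aboxentsec$, where the absence of nominals in $\ELbot$ is crucial. The key ingredient is Proposition~\ref{rumin_prop}, which states that any two minimal canonical models of $\KB$ assign the same height to every concept; this restricts trivially to the subclass $\min^{\mathfrak{c}}_{<}(\R^\KB_\Delta)$. Hence, for every $\ELbot$ concept $C$, the value $h_R(C)$ is constant as $R$ ranges over $\min^{\mathfrak{c}}_{<}(\R^\KB_\Delta)$, which means that the extra minimality clause built into the definition of $\min^{\mathfrak{c},C}_{<}(\R^\KB_\Delta)$ is automatically satisfied by every such $R$. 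Consequently $\min^{\mathfrak{c},C}_{<}(\R^\KB_\Delta)=\min^{\mathfrak{c}}_{<}(\R^\KB_\Delta)$, and combining this with the identity $R\sat C\usually D$ iff $\min_{\prec^R}(C^R)\subseteq D^R$ (Definition~\ref{rankedsat}) gives $\aboxentsec \Leftrightarrow \aboxent$.

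The main point requiring care is that Proposition~\ref{rumin_prop} is formulated for DL languages at least as expressive as $\ALC$. To transfer it to $\ELbot$ I would either rerun the argument of that proposition directly — the pointwise-minimum construction of the auxiliary model $R^{*}$ in its proof evidently preserves positive $\ELbot$ axioms, since lowering ranks cannot destroy satisfaction of strict subsumptions and does not change which layer first hosts the antecedent of a defeasible axiom — or, equivalently, regard the given $\ELbot$ KB as an $\ALC$ KB and invoke the result off the shelf, observing that the classes $\min^{\mathfrak{c}}_{<}(\R^\KB_\Delta)$ coincide under the two readings because canonicity and minimality are defined purely in terms of the (unchanged) concepts occurring in $\KB$.
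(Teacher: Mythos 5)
Your proof is correct and follows essentially the same route as the paper: the $\ratent\Leftrightarrow\aboxent$ direction via the chain Proposition~\ref{mainratio} / Expression~\ref{RC_model} / Proposition~\ref{prop_correspondence} / Definition~\ref{def7}, and the $\aboxent\Leftrightarrow\aboxentsec$ direction as an immediate consequence of Proposition~\ref{rumin_prop}. Your extra remark on why Proposition~\ref{rumin_prop} applies to $\ELbot$ (viewing the KB as an \ALC~KB, with canonicity and minimality unaffected) is a point the paper leaves implicit but does no harm to make explicit.
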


\begin{proof}
The correspondence between $\aboxentsec$ and $\aboxent$ in $\ELbot$ is an immediate consequence of Proposition \ref{rumin_prop}. The correspondence between $\ratent$ and Expression \ref{RC_model} has been proved in \cite[Theorem 4]{BritzEtAl2015a}, Proposition \ref{prop_correspondence} proves the correspondence between Expression \ref{RC_model} and Expression \ref{giordano_eq}, and Expression \ref{giordano_eq} corresponds to $\aboxent$ by Definition \ref{def7}.

\end{proof}

\nd We can now prove that the decision procedure defined in Section \ref{rcelo} using defeasible nominals corresponds to prototypical reasoning over the individuals, that is, it corresponds exactly to $\aboxentsec$.

\begin{proposition} \label{defind}
Let $\KB=\tuple{\T,\D}$ be a nominal safe $\ELObot$ ontology. For every safe $C\usually D$, 
$\KB\aboxentsec C\usually D$  iff $\KB_{\tiny \defnom{}}\ratent C_{\tiny \defnom{}}\usually D_{\tiny \defnom{}}$.

\qed
\end{proposition}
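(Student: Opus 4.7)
The plan is to relate the two sides through the ranked-merge characterisation of RC recalled in~\ref{canmod}, using the set-valued interpretation of defeasible nominals as the bridge. Both sides should turn out to be equivalent to the single condition that, in every canonical minimal model of $\KB$ over $\Delta$ in which the height of $C$ is minimal among all such models, every $\prec$-minimal element of $C$ lies within $D$.

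First, I would rephrase the right-hand side semantically. Because $\mathtt{RationalClosure}$ is sound and complete for $\ratent$ (Proposition~\ref{completeness}) and treats each defeasible nominal $\defnom{a}$ as a fresh atomic concept, we may view $\KB_{\tiny\defnom{}}$ as an $\ELbot$ ontology after moving infinite-rank axioms to the TBox (Corollary~\ref{use_concept_coroll}). Applying Expression~(\ref{RC_model}) together with Proposition~\ref{prop_correspondence} to this view gives that $\KB_{\tiny\defnom{}}\ratent C_{\tiny\defnom{}}\usually D_{\tiny\defnom{}}$ holds iff, for every $R^{*}\in\min^{\mathfrak{c}}_{<}(\R^{\KB_{\tiny\defnom{}}}_{\Delta})$, one has $\min_{\prec^{R^{*}}}(C_{\tiny\defnom{}}^{R^{*}})\subseteq D_{\tiny\defnom{}}^{R^{*}}$.

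Second, I would build a bidirectional correspondence between such $R^{*}$ and families of models drawn from $\min^{\mathfrak{c}}_{<}(\R^{\KB}_{\Delta})$. Given a family $\{R_j\}$ of canonical minimal models of $\KB$, the ranked merge, with $\defnom{a}^{R^{*}} := \{a^{R_j} : j\}$, yields a canonical minimal model of $\KB_{\tiny\defnom{}}$. Conversely, given $R^{*}$, for each $e\in\defnom{a}^{R^{*}}$ one extracts a model $R_e$ of $\KB$ with $a^{R_e}=e$, using nominal safeness (Proposition~\ref{use_concept}) to guarantee that the strict part of the ontology is preserved and that $R_e$ remains canonical minimal. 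By Propositions~\ref{ranking_defnom}, \ref{ranking_defnomA}, and~\ref{use_conceptA}, the extensions and heights of all n-safe and safe concepts are preserved under both operations. In particular, the minimum rank of $C_{\tiny\defnom{}}^{R^{*}}$ equals $\min\{h_R(C) : R\in\min^{\mathfrak{c}}_{<}(\R^{\KB}_{\Delta})\}$, and the witnessing $R$ are precisely the elements of $\min^{\mathfrak{c},C}_{<}(\R^{\KB}_{\Delta})$. Combining this with the rephrasing above delivers the equivalence with Definition~\ref{def_nominalsafe_ent}.

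\textbf{Main obstacle.} The hardest step will be the bidirectional model correspondence itself, and in particular the preservation of the heights of $C$ and $D$ under merge and projection. The subtlety is that $\defnom{a}^{R^{*}}$ may contain elements at several different ranks simultaneously, so while merging a family of $\KB$-models into one $\KB_{\tiny\defnom{}}$-model is direct, the reverse direction requires carefully splitting $R^{*}$ by the rank strata of each defeasible nominal, picking a singleton representative per stratum, and checking that canonicity and minimality both survive. Nominal safeness is indispensable here: as Example~\ref{nosafe} illustrates, without it the classical analogue of substituting nominals by concept names already fails, and the entire reduction would collapse.
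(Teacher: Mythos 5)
Your overall strategy (reduce the right-hand side to a semantic condition on minimal canonical models via Proposition~\ref{no_nominals_rc}/Proposition~\ref{prop_correspondence}, then transfer between $\KB$-models and $\KB_{\tiny\defnom{}}$-models) matches the paper's in outline, and your ``merge'' direction is essentially the paper's $(\Leftarrow)$ argument in a slightly more elaborate form (the paper only needs the one-model case: extend each $R\in\min^{\mathfrak{c},C}_{<}(\R^{\KB})$ by $\defnom{a}^{R}:=\{a\}^{R}$ and check that minimality is preserved). The problem is the reverse direction, which you correctly identify as the hard step but then resolve with a recipe that does not work as stated. Given a minimal canonical model $R^{*}$ of $\KB_{\tiny\defnom{}}$ in which $\defnom{a}^{R^{*}}$ is a genuine set, ``splitting by rank strata and picking a singleton representative per stratum'' does not yield a model of $\KB$: the representatives for \emph{different} individuals must be chosen jointly so that inter-individual axioms such as $\{a\}\subs\exists r.\{b\}$ remain satisfied, and a choice that is consistent with one such axiom may be inconsistent with another (or may force a representative of $\defnom{b}$ at a higher rank, destroying the height-preservation you need). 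This is precisely the phenomenon of Example~\ref{multipleextensions}, where no single model of $\KB$ realises all nominals at the ranks they attain in $R^{*}$. You give no mechanism for making these joint choices, nor for verifying that the resulting interpretation is still canonical and minimal, so the claimed identity between $h_{R^{*}}(C_{\tiny\defnom{}})$ and $\min\{h_{R}(C):R\in\min^{\mathfrak{c}}_{<}(\R^{\KB}_{\Delta})\}$, and hence the final equivalence with Definition~\ref{def_nominalsafe_ent}, is not established.

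The paper avoids extraction altogether. For $(\Rightarrow)$ it takes the countermodel $R\in\min^{\mathfrak{c},C_{\tiny\defnom{}}}_{<}(\R^{\KB_{\tiny\defnom{}}})$ and builds from its \emph{theory} a fresh term model $R'$ (the construction of Lemma~\ref{exceptional_defnom}), whose domain consists of representatives $x_{\typ{E}},x_{E}$ for safe concepts and a single object $x_{\defnom{a}}$ per individual, with extensions and role edges dictated by the subsumptions $R$ satisfies. In $R'$ every defeasible nominal is a singleton by construction, so the very same interpretation is simultaneously a countermodel over $\KB_{\tiny\defnom{}}$ and, after setting $\{a\}^{R'}=\defnom{a}^{R'}$, over $\KB$; the heights of all safe concepts are preserved, and minimality is recovered at the end by passing to some $R^{*}<R'$ and invoking the height invariance of canonical minimal models (Proposition~\ref{rumin_prop}). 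If you want to salvage your plan, you would need to replace the ``pick representatives from $R^{*}$'' step by such a term-model construction; as written, the proposal has a genuine gap at exactly the point where nominal safeness and the set-valued semantics have to do their work.
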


\begin{proof}

Since $\KB_{\tiny \defnom{}}$ and $ C_{\tiny \defnom{}}\usually D_{\tiny \defnom{}}$ are \ELbot~expressions, by Proposition \ref{no_nominals_rc}, $\KB_{\tiny \defnom{}}\ratent C_{\tiny \defnom{}}\usually D_{\tiny \defnom{}}$ iff $\KB_{\tiny \defnom{}}\aboxentsec C_{\tiny \defnom{}}\usually D_{\tiny \defnom{}}$. 
So, it is sufficient to prove that for every safe $C\usually D$, 
$\KB\aboxentsec C\usually D$  iff $\KB_{\tiny \defnom{}}\aboxentsec C_{\tiny \defnom{}}\usually D_{\tiny \defnom{}}$.

$(\Leftarrow)$. Assume $\KB_{\tiny \defnom{}}\aboxentsec C_{\tiny \defnom{}}\usually D_{\tiny \defnom{}}$. Consider each model $R$ in $\min^{\mathfrak{c},C}_{<}(\R^{\KB})$ and extend its interpretation function $\cdot^R$ by imposing that, for every $a\in\ON$,  $\defnom{a}^R=\{a\}^R$: $R$ becomes also a  model of $\KB_{\tiny \defnom{}}$ and it must hold that $R\in \min^{\mathfrak{c},C_{\tiny \defnom{}}}_{<}(\R^{\KB_{\tiny \defnom{}}})$. Assume  the latter is not the case: then there is an $R^*\in \min^{\mathfrak{c}, C_{\tiny \defnom{}}}_{<}(\R^{\KB_{\tiny \defnom{}}})$ s.t. $R^* < R$; extend $\cdot^{R^*}$ \st, for every $a\in\ON$,  $\{a\}^{R^*} = \defnom{a}^{R^*}$ (Since $R^*<R$, it must be $\defnom{a}^{R^*}=\defnom{a}^{R}$, hence also in $R^*$ every defeasible nominal is interpreted into a single object), and then we have 
$R^*\in \R^{\KB}$ with $R^* < R$, contrary to the assumption $R$ in $\min^{\mathfrak{c},C}_{<}(\R^{\KB})$.


Therefore, since every  $R$ in $\min^{\mathfrak{c},C}_{<}(\R^{\KB})$ is also in $ \min^{\mathfrak{c},C_{\tiny \defnom{}}}_{<}(\R^{\KB_{\tiny \defnom{}}})$, $\KB_{\tiny \defnom{}}\aboxentsec C_{\tiny \defnom{}}\usually D_{\tiny \defnom{}}$ implies that $R \models C_{\tiny \defnom{}}\usually D_{\tiny \defnom{}}$  and, since in every such $R$, for every $a\in\ON$,  $\defnom{a}^R=\{a\}^R$, $R \models C\usually D$ follows and we can conclude $\KB\aboxentsec C\usually D$.


$(\Rightarrow)$. Assume $\KB_{\tiny \defnom{}}\not\aboxentsec C_{\tiny \defnom{}}\usually D_{\tiny \defnom{}}$. We need to prove that $\KB\not\aboxentsec C\usually D$.

$\KB_{\tiny \defnom{}}\not\aboxentsec C_{\tiny \defnom{}}\usually D_{\tiny \defnom{}}$ implies that  there  is a  model $R=\tuple{\Delta^R,\cdot^R,\prec^R}$ in $\min^{\mathfrak{c},C_{\tiny \defnom{}}}_{<}(\R^{\KB_{\tiny \defnom{}}})$ s.t. $R \not \models C_{\tiny \defnom{}}\usually D_{\tiny \defnom{}}$, and we need to prove that there  is a  model $R^*$ in $\min^{\mathfrak{c},C}_{<}(\R^{\KB})$ s.t. $R^* \not \models C\usually D$.

From $R$ we define a ranked interpretation $R'=\tuple{\Delta^{R'}, \cdot^{R'},\prec^{R'}}$ using the procedure explained in the proof of Lemma~\ref{exceptional_defnom}.  Specifically,\footnote{ Recall that we introduce $x_{\typ{E}}$  as an object that represents a typical instance of the concept $E$, while  $x_{{E}}$ represents an atypical instance of the concept $E$.}

\begin{itemize}
\item $\Delta^{R'}=\{x_{\typ{E}},x_E\mid R\not\models E\subs\bot, \mbox{ where $E$ is a safe $\mathcal{EL\defnom{O}}_\bot$ concept}\}\cup\{x_{\defnom{a}}\mid a\in \ON\}$;
\item $\defnom{a}^{R'}=\{x_{\defnom{a}}\in \Delta^{R'}\}$, for every defeasible nominal $\defnom{a}$; 
\item $A^{R'}=\{x_E\in \Delta^{R'}\mid R\models E\subs A\}\cup\{x_{\typ{E}}\in \Delta^{R'}\mid R\models E\usually A\}\cup\{x_{\defnom{a}}\in \Delta^{R'}\mid R\models \defnom{a}\usually A\}$, for every  safe $\mathcal{EL\defnom{O}}_\bot$ concept $E$, every defeasible nominal $\defnom{a}$, and every atomic $\mathcal{EL_\bot}$ concept $A$; 
\item $r^{R'}=\{\tuple{x_E,x_F}\in \Delta^{R'}\times\Delta^{R'}\mid R\models E\subs \exists r. F\}\cup\{\tuple{x_{\typ{E}},x_F}\in \Delta^{R'}\times\Delta^{R'}\mid R\models E\usually \exists r. F\}\cup \{\tuple{x_{\defnom{a}},x_F}\in \Delta^{R'}\times\Delta^{R'}\mid R\models \defnom{a}\usually \exists r. F\}$, for every  safe $\mathcal{EL\defnom{O}}_\bot$ concepts $E$ and $F$, and every defeasible nominal $\defnom{a}$;
\item $h_{R'}(x_{\typ{E}})=h_R(E)$, for every $x_{\typ{E}}\in\Delta^{R'}$;
\item $h_{R'}(x_{\defnom{a}})=h_R(\defnom{a})$, for every defeasible nominal $\defnom{a}$; 
\item $x_{\typ{F}}\prec^{R'}x_{E}$ for every $x_{\typ{F}},x_E\in\Delta^{R'}$.
\end{itemize}
\nd We have proved in  the proof of Lemma~\ref{exceptional_defnom} that $R'$ is still a model of  $\KB_{\tiny \defnom{}}$. Specifically, for every pair of safe $\mathcal{EL\defnom{O}}_\bot$ concepts $E,F$, we have 
\begin{enumerate}
\item $R\models E\subs F$  and $R \not \models E \subs \bot$ iff $x_E\in F^{R'}$;
\item $R\models E\usually F$ and $R \not \models E \subs \bot$ iff $x_{\typ{E}}\in F^{R'}$.
\end{enumerate}
\nd And, for every pair  $\defnom{a},E$ where $a\in\ON$  and $E$ is \emph{safe}, we have, 
\begin{enumerate}
\item[3.] $R\models \defnom{a}\usually E$ iff $x_{\defnom{a}}\in E^{R'}$.
\end{enumerate}

This implies that for every safe concept $E$ such that $R \not \models E \subs \bot$, $x_{\typ E},x_{E}\in E^{R'}$, and for every $\defnom{a}$, $ \defnom{a}^{R'}=\{x_{\defnom{a}}\}$.
Moreover, by construction of $R'$,  for every safe concept $E_{\tiny \defnom{}}$, $x_E\in\min_\prec^{R'}(E_{\tiny \defnom{}}^{R'})$ (otherwise it is easy to obtain a contradiction using 2. and 3.) and, consequently, $h_{R'}(E_{\tiny \defnom{}}) = h_R(E_{\tiny \defnom{}})$. We can conclude that  
$R'$  is a model of $\KB_{\tiny \defnom{}}$ s.t. $R'\not\sat C_{\tiny \defnom{}}\usually D_{\tiny \defnom{}}$. Since in $R'$ every defeasible nominal $\defnom{a}$ is interpreted into a single object, extending $\cdot^{R'}$ by imposing that, for every $a\in\ON$,  $\{a\}^{R'}=\defnom{a}^{R'}=\{x_{\defnom{a}}\}$, we can conclude that $R'$  is a model of $\KB$ s.t. $R'\not\sat C\usually D$, that is, there is an object $o$ s.t. $o\in\min_{\prec^{R'}}(C^{R'})$ and $o\notin D^{R'}$. 

If $R'$  is in $\min^{\mathfrak{c},C}_{<}(\R^{\KB})$, we conclude our proof. It could be that $R'\notin\min^{\mathfrak{c},C}_{<}(\R^{\KB})$, and, in case,  we need to prove that the existence of $R'$ implies the existence of another model  $R^*$ s.t. $R^* \not \models C\usually D$ with $R^*$ in $\min^{\mathfrak{c},C}_{<}(\R^{\KB})$.

So, let $R^*$ be a model in $\min^{\mathfrak{c},C}_{<}(\R^{\KB})$ and such that $R^*< R'$; we have to prove that $R^* \not \models C\usually D$.

As above, extend $\cdot^{R^*}$ for defeasible nominals by imposing that, for every $a\in\ON$,  $\{a\}^{R'}=\defnom{a}^{R'}=\{x_{\defnom{a}}\}=\{x_{\defnom{a}}\}$. $R^*$ is a model of $\KB_{\tiny \defnom{}}$ and  $R^*\in \min^{\mathfrak{c},C}_{<}(\R^{\KB})$ implies $R^*\in \min^{\mathfrak{c},C_{\tiny \defnom{}}}_{<}(\R^{\KB_{\tiny \defnom{}}})$ (as shown in the $(\Leftarrow)$-part of this proof).

By Proposition~\ref{rumin_prop}, since $R$ and $R^*$ are both in $\min^{\mathfrak{c},C_{\tiny \defnom{}}}_{<}(\R^{\KB_{\tiny \defnom{}}})$, for every safe concept $E_{\tiny \defnom{}}$ it holds  $h_{R^*}(E_{\tiny \defnom{}}) = h_R(E_{\tiny \defnom{}})$. We have already seen that for every safe concept $E_{\tiny \defnom{}}$, $h_{R'}(E_{\tiny \defnom{}}) = h_R(E_{\tiny \defnom{}})$.

Hence we can conclude that, for every safe concept $E_{\tiny \defnom{}}$,  $h_{R^*}(E_{\tiny \defnom{}}) = h_{R'}(E_{\tiny \defnom{}})$, that is, for every safe concept $E$,  $h_{R^*}(E) = h_{R'}(E)$. Remember that object $o$, introduced above, is an object s.t. $o\in\min_{\prec^{R'}}(C^{R'})$ (that is, $h_{R'}(o)=h_{R'}(C)$) and $o\notin D^{R'}$. We have the following:

\begin{itemize}
\item $h_{R^*}(C) = h_{R'}(C)$;
\item since $R^*<R'$, $o\in C^{R^*}$, $o\notin D^{R^*}$, and $h_{R^*}(o) \leq h_{R'}(o)$.
\end{itemize}

So, it must be $h_{R^*}(o) = h_{R'}(o)=h_{R'}(C)=h_{R^*}(C)$, that is, also in $R^*$ it is the case that $o\in\min_{\prec^{R^*}}(C^{R^*})$ and $o\notin D^{R^*}$.

Therefore, we can conclude that $R^*\in\min^{\mathfrak{c},C}_{<}(\R^{\KB})$ with $R^* \not \models C\usually D$, and, thus, $\KB\not\aboxentsec C\usually D$.

\end{proof}

\section{Various Proofs}\label{proofs} 

\nd Proposition \ref{except} and \ref{except2} can be derived from the result~\cite[Proposition 7]{BritzEtAl2015a}. However, since there are no proper statements and proofs of them, we add them here for convenience.

\vspace{0.2cm}

 \nd\textbf{Proposition \ref{except}}
\emph{For every concept $C$ and every ontology $\KB=\tuple{\T,\D}$,~if
\begin{equation}
\T \models \bigsqcap \{ \neg E \dlOr F \mid E \usually F \in \D\} \subs \neg C \ 
\end{equation}
then $C$ is exceptional \wrt~$\KB$. 
}\qed
%
\begin{proof}
Let $\KB=\tuple{\T,\D}$ be an ontology, and assume that $\T \models \bigsqcap \{ \neg E \dlOr F \mid E \usually F \in \D\} \subs \neg C$, but  $C$ is not exceptional \wrt~$\KB$. The latter means that there is a ranked interpretation $R$ such that it is a model of $\KB$ and there is an object $o$ in the lower layer $L^R_0$ of $R$ \st~$o\in C^R$. However, since $R$ is a model of $\KB$ and $o$ is in the lower layer, for every axiom $E\usually F\in\D$ it must be that either $o\notin E^R$ or $o\in F^R$. That is , $o\in (\bigsqcap \{ \neg E \dlOr F \mid E \usually F \in \D\}\dlAnd C)$, against $\T \models \bigsqcap \{ \neg E \dlOr F \mid E \usually F \in \D\} \subs \neg C$.
\end{proof}
\nd\textbf{Proposition \ref{except2}}
\emph{Given an ontology $\KB^\bullet=\tuple{\T^\bullet,\D^\bullet}$, 
obtained from the application of {\bf RC.Step 1}  to an ontology $\KB=\tuple{\T,\D}$, for every concept $C$, then
\[\T^\bullet \models \bigsqcap \{ \neg E \dlOr F \mid E \usually F \in \D^\bullet\} \subs \neg C \ \]
if and only if  $C$ is exceptional \wrt~$\KB^\bullet$.
}\qed
\begin{proof}
Let $\KB^\bullet=\tuple{\T^\bullet,\D^\bullet}$ be  specified as in the statement. One half of the statement is valid due to Proposition \ref{except}. We have to prove the other half.

Assume $C$ is exceptional w.r.t. $\KB^\bullet$, but $\T^\bullet \not\models \bigsqcap \{ \neg E \dlOr F \mid E \usually F \in \D^\bullet\} \subs \neg C$. Then there is a classic DL interpretation $M=\tuple{\Delta^M,\cdot^M}$ that is a model of $\T^\bullet$ and such that there is an object $o\in ( \bigsqcap \{ \neg E \dlOr F \mid E \usually F \in \D^\bullet\} \dlAnd  C)^M$.

Now consider the canonical model of the RC of $\KB^\bullet$, $\rub=\tuple{\Delta^{\rub},\cdot^{\rub},\prec^{\rub}}$, built as described in~\ref{canmod} (see also~\cite{BritzEtAl2015a}). 
Define a ranked interpretation $R'=\tuple{\Delta^{R'},\cdot^{R'},\prec^{R'}}$ in the following way:

\begin{itemize}
\item $\Delta^{R'}=\Delta^{\rub}\cup \Delta^M$;
\item for every atomic concept $A$, $A^{R'}=A^{\rub}\cup A^M$;
\item for every role $r$, $r^{R'}=r^{\rub}\cup r^M$;
\item for every $x\in\Delta^{\rub}$, $h_{R'}(x)=h_{\rub}(x)$;
\item for every $x\in\Delta^M$, $h_{R'}(x)=i$, if $i$ is the lowest value s.t. $x\in (H_i)^M$,
\end{itemize}
\nd where $\D^\bullet_0,\ldots,\D^\bullet_n$ is the partition of $\D^\bullet$ obtained by  {\bf RC.Step 1}  and $H_i  =  \bigsqcap \{\neg E \dlOr F \mid E \usually F \in  \D^\bullet_i \cup \ldots \cup \D^\bullet_n \}$, as defined in {\bf RC.Step 2}.

By induction on the construction of the concepts, it is easy to see that for every concept $C$ and for every $x\in \Delta^{R'}$, $x\in C^{R'}$ iff either $x\in \Delta^{M}$ and $x\in C^{M}$ or $x\in \Delta^{\rub}$ and $x\in C^{\rub}$. Hence $R'$ is a model of $\T^\bullet$.
Now we show  that it is also a model of $\D^\bullet$.
Assume that there is an axiom $E\usually F\in \D^\bullet$ s.t. $R'\not\models E\usually F$; in particular, let $E\usually F\in \D^\bullet_i$ for some $i$. That is, there is an object $p\in\Delta^{R'}$ s.t. $p\in\min_{\prec_{R'}}(E^{R'})$ and $p\in (E\dlAnd \neg F)^{R'}$. As $\rub$ is a model of $\KB^\bullet$, it must be that $p\in\Delta^M$. Due to our construction of $R'$, it must be $h_{R'}(p)>i$.
As $\rub$ is a minimal canonical model of the RC of $\KB^\bullet$ and $E\usually F\in \D^\bullet_i$, it must be the case that $h_{\rub}(E)=i$ and $h_{\rub}(E\dlAnd F)=i$ (it follows immediately from \cite[Proposition 7]{BritzEtAl2015a}, which by construction of $R'$ implies $h_{R'}(E)=i$ and $h_{R'}(E\dlAnd F)=i$. Hence it cannot be the case that $p\in\min_{\prec_{R'}}(E^{R'})$. Therefore, $R'$ must be a model of $\KB^\bullet$.

As a consequence, we end up with an interpretation $R'$ that is a ranked model of $\KB^\bullet$, and such that there is an object $o\in ( \bigsqcap \{ \neg E \dlOr F \mid E \usually F \in \D^\bullet\} \dlAnd  C)^{R'}$. Due to our construction of $R'$, the fact that $o$ satisfies $\bigsqcap \{ \neg E \dlOr F \mid E \usually F \in \D^\bullet\}$, that is, $H_0$, implies that $h_{R'}(o)=0$. Therefore, $R'$ is a ranked model of $\KB^\bullet$ that has an object satisfying $C$ in the lower layer. Hence $C$ cannot be exceptional w.r.t. $\KB^\bullet$, and we have a contradiction.
\end{proof}
\nd\textbf{Proposition \ref{classgci}}
\emph{A classical GCI $C\subs D$ is in the RC of $\KB= \tuple{\T,\D}$ iff $\T^\bullet \models C \subs D$, where $\T^\bullet$ has been computed using 
{\bf RC.Step~1}.}\qed
\begin{proof}
$C\subs D$ is in the RC of $\KB= \tuple{\T,\D}$ iff also $ C\dlAnd \neg D\subs\bot$ is in it, that is, iff $C\dlAnd \neg D\usually\bot$ is in the RC of $\KB= \tuple{\T,\D}$.

By {\bf RC.Step~2},  $C\dlAnd \neg D\usually\bot$ is in the RC of $\KB= \tuple{\T,\D}$ iff one of two following conditions are satisfied:

\begin{itemize}
\item there is a concept $H_i$, as defined in {\bf RC.Step 2} in Section \ref{alc}, such that
\begin{eqnarray*}
\T^\bullet  & \not \models & H_i \subs \neg (C\dlAnd \neg D)\text{ and }\\
\T^\bullet & \models & (C \dlAnd \neg D) \dlAnd H_i \subs \bot,
\end{eqnarray*}
that is an impossible occurrence;
\item $H_{C\dlAnd \neg D} = \top$ and, thus, $\T^\bullet \models (C\dlAnd \neg D) \subs \bot$, and this must be the case.
\end{itemize} 
\end{proof}
\nd The proofs of Propositions~\ref{exceptB}, \ref{propcr}, \ref{exceptB2}, and~\ref{lemeq} are immediate once we prove the following lemma.

\begin{lemma}\label{lemma_main}
Consider any DL that is closed under boolean operators. Given a finite set of defeasible axioms $\D$ and a concept $C$, the following two subsumption tests are equivalent:
\begin{eqnarray}
&& \T  \entails  \bigsqcap \{ \neg E \dlOr F \mid E \usually F \in \D\} \subs \neg C \label{b1} \\
&& \T \cup \{E \dlAnd \delta_\D \subs F \mid E \usually F \in\D\}  \entails   C \dlAnd \delta_\D \subs\bot \ . \label{b2}
\end{eqnarray}
where $\delta_\D$ is a new atomic concept.
\qed
\end{lemma}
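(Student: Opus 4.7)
The plan is to exploit the fact that $\delta_\D$ is a fresh atomic concept occurring in neither $\T$ nor $C$, so that we may choose its interpretation freely. The guiding intuition is that adding the axioms $\{E \dlAnd \delta_\D \subs F \mid E \usually F \in \D\}$ to $\T$ is equivalent to asserting $\delta_\D \subs \bigsqcap\{\neg E \dlOr F \mid E \usually F \in \D\}$, \emph{but} only constrained to those models in which $\delta_\D$ is interpreted; since $\delta_\D$ is fresh, this assertion can be made hold or fail on any chosen subdomain.

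For the direction $(\ref{b1}) \Rightarrow (\ref{b2})$, I would take an arbitrary model $\I$ of $\T \cup \{E \dlAnd \delta_\D \subs F \mid E \usually F \in\D\}$ and pick any $o \in (C \dlAnd \delta_\D)^\I$. From $o \in \delta_\D^\I$ together with each axiom $E \dlAnd \delta_\D \subs F$, one gets $o \in (\neg E \dlOr F)^\I$ for every $E \usually F \in \D$, hence $o \in \bigl(\bigsqcap\{\neg E \dlOr F\}\bigr)^\I$. Since $\I$ is also a model of $\T$, (\ref{b1}) forces $o \in (\neg C)^\I$, contradicting $o \in C^\I$. Thus $(C\dlAnd\delta_\D)^\I = \emptyset$.

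For the converse $(\ref{b2}) \Rightarrow (\ref{b1})$, I would take any model $\I$ of $\T$ and any $o \in \bigl(\bigsqcap\{\neg E \dlOr F \mid E \usually F \in \D\}\bigr)^\I$, and then construct a model $\I'$ of $\T \cup \{E \dlAnd \delta_\D \subs F \mid E \usually F \in \D\}$ by keeping every interpretation as in $\I$ and setting $\delta_\D^{\I'} := \bigl(\bigsqcap\{\neg E \dlOr F \mid E \usually F \in \D\}\bigr)^\I$ (alternatively just $\{o\}$). Since $\delta_\D$ is fresh and does not appear in $\T$ or $C$, $\I'$ remains a model of $\T$, and for every $E \usually F \in \D$ and every $p \in \delta_\D^{\I'}$ one has $p \in (\neg E \dlOr F)^{\I'}$, so the added axioms are satisfied. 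By hypothesis, $(C \dlAnd \delta_\D)^{\I'} = \emptyset$; together with $o \in \delta_\D^{\I'}$ this yields $o \notin C^{\I'} = C^\I$, \ie\ $o \in (\neg C)^\I$, as required.

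The step likely to require a little care is the model extension in the converse direction: one must explicitly verify that freshness of $\delta_\D$ guarantees both that $\I'$ still satisfies $\T$ (no axiom of $\T$ mentions $\delta_\D$) and that $C^{\I'} = C^\I$ (so the desired conclusion transfers back to $\I$). Everything else is routine propositional reasoning over complex-concept memberships, and the argument works uniformly for any DL closed under boolean operators, as stated.
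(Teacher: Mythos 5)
Your proposal is correct and follows essentially the same argument as the paper: one direction is the same propositional reasoning on an element of $(C \dlAnd \delta_\D)^\I$, and the other extends a model of $\T$ by interpreting the fresh concept $\delta_\D$ as the extension of $\bigsqcap\{\neg E \dlOr F \mid E \usually F \in \D\}$, exactly as in the paper's proof. The only cosmetic difference is that the paper first rewrites (\ref{b1}) as $\T \entails C \dlAnd \bigsqcap(\neg E \dlOr F) \subs \bot$ and argues both directions by contradiction, whereas you argue them directly; the content is identical.
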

\begin{proof}
\nd It is easily verified that (\ref{b1}) is equivalent to 
\begin{eqnarray}
&& \T  \entails C \dlAnd  \bigsqcap_{E \usually F \in \D} ( \neg E \dlOr F  ) \subs \bot \ . \label{bb1}
\end{eqnarray}
\nd Therefore,  it suffices to show the equivalence among (\ref{bb1}) and (\ref{b2}).

So, assume (\ref{bb1}) holds and assume to the contrary that (\ref{b2}) does not hold. Then there is an interpretation $\I$ such that
$\I \models \T$, $\I \models E \dlAnd \delta_\D \subs F$, for all $E \usually F \in \D$ and $o \in (C \dlAnd \delta_\D)^\I$ for some $o \in \Delta^\I$, \ie, $o \in C^\I$ and $o \in \delta_\D^\I$. But, 
$\I \models E \dlAnd \delta_\D \subs F$ means that  $\Delta^\I = (\neg E \dlOr \neg \delta_\D \dlOr F)^\I$. As $o \in \delta_\D^\I$, $o \in (\neg E \dlOr  F)^\I$ follows. It follows that 
\begin{equation} \label{eqo}
o \in (C \dlAnd  \bigsqcap_{E \usually F \in \D} ( \neg E \dlOr F  ))^\I \ ,
\end{equation} 
\nd which is absurd as (\ref{bb1}) holds by assumption

Conversely, assume that (\ref{b2}) holds and assume to the contrary that (\ref{bb1}) does not hold. Therefore, there is an interpretation $\I$ such that
$\I \models \T$ and (\ref{eqo}) holds for some $o \in \Delta^\I$. Now, extend $\I$ to $\delta_\D$ by setting
\[
\delta^\I_\D =  (\bigsqcap_{E \usually F \in \D} ( \neg E \dlOr F  ))^\I \ .
\]
\nd Note that $o \in \delta^\I_\D$. Then, by construction $\I \models E \dlAnd \delta_\D \subs F$, for all $E \usually F \in \D$ holds. 
Therefore, $\I$ is a model of the antecedent in (\ref{b2}) with $o \in (C \dlAnd \delta_\D)^\I$, which is absurd as (\ref{b2}) holds by assumption.
\end{proof}
 
 \nd\textbf{Proposition \ref{exceptB}}
\emph{For every concept $C$ and every ontology $\KB=\tuple{\T,\D}$,~if
\begin{equation}
\T_{\delta_\D}\entails C\dlAnd \delta_\D \subs\bot \ ,
\end{equation}
\nd where $\delta_\D$ is a new atomic concept, then $C$ is exceptional \wrt~$\KB$.
}\qed
%
\begin{proof}
It is an immediate consequence of Lemma \ref{lemma_main} and Proposition \ref{except}. 
\end{proof}
\nd\textbf{Proposition \ref{propcr}}
\emph{Consider an ontology $\KB=\tuple{\T,\D}$. Then $\mathtt{ComputeRanking}(\KB)$ returns the ontology $\tuple{\T^{*}, \D^{*}}$, where $\D^*$ is partitioned into a sequence $\D_0,\ldots,\D_n$, where $\T^*$, $\D^*$ and all $\D_i$ are equal to the sets $\T^\bullet$, $\D^\bullet$ and $\D^\bullet_0,\ldots,\D^\bullet_n$ obtained via {\bf RC.Step 1}.}\qed
\begin{proof}
By Lemma \ref{lemma_main}, $e(\T',\E')$ is the same set retuned by $\mathtt{Exceptional}(\T',\E')$, for any $\T'$ and $\E'$.
Now, the only difference between  {\bf RC.Step 1} and  $\mathtt{ComputeRanking}$ procedures is in the way to compute the exceptional concepts. But, since these are the same, 
$\mathtt{ComputeRanking}(\KB)$ returns the ontology $\tuple{\T^{*}, \D^{*}}$, where $\D^*$ is partitioned into a sequence $\D_0,\ldots,\D_n$, where $\T^*$, $\D^*$ and all $\D_i$ are equal to the sets $\T^\bullet$, $\D^\bullet$ and $\D^\bullet_0,\ldots,\D^\bullet_n$ obtained via {\bf RC.Step 1}. 
\end{proof}
\nd\textbf{Proposition \ref{exceptB2}}
\emph{Given an ontology $\KB^*=\tuple{\T^*,\D^*}$, obtained from the application of the procedure $\mathtt{ComputeRanking}$ to an ontology $\KB=\tuple{\T,\D}$, for every concept $C$,
\[\T^*_{\delta_{\D^*}}\entails C\dlAnd \delta_{\D^*} \subs\bot \ , \]
if and only if  $C$ is exceptional \wrt~$\KB^*$.
}\qed
\begin{proof}
It is an immediate consequence of Lemma \ref{lemma_main}, Proposition \ref{except2} and Proposition \ref{propcr}. 
\end{proof}

\nd\textbf{Proposition \ref{lemeq}}
\emph{
By referring to Remark~\ref{entrc}, the subsumption test~(\ref{rct1}) (resp.~\ref{rct2}) is equivalent to the subsumption test~(\ref{rct3}) (resp.~\ref{rct4}).}
\qed
\begin{proof}
As in Sections \ref{alc} and \ref{ratclosEL}, $\E_i  =  \D_i \cup \ldots \cup \D_n$ and $H_i=\bigsqcap\{\neg E\dlOr F\mid E\usually F\in \E_i\}$. We have to show that both the two subsumption tests
\begin{eqnarray}
&& \T^* \not \models  H_i 
\subs \neg C \label{rct1A} \\
&& \T^* \cup\{E \dlAnd\delta_i\subs F \mid E\usually F \in \E_i \} \not\entails  C \dlAnd\delta_i\subs \bot   \label{rct3A}
\end{eqnarray}
\nd are equivalent, and that
\begin{eqnarray}
&&\T^*  \models  C \dlAnd  H_i
\subs D  \label{rct2A} \\
&& \T^* \cup\{E \dlAnd\delta_i\subs F \mid E \usually F \in \E_i  \} \entails  C\dlAnd\delta_i\subs D  \label{rct4A}
\end{eqnarray}
\nd are equivalent. The proof is essentially the same as for Lemma~\ref{lemma_main}. For illustrative purpose,
we show that (\ref{rct1A}) and (\ref{rct3A}) are equivalent. The other case can be shown in a similar way.  

Assume (\ref{rct1A}) holds. Let us show that (\ref{rct3A}) holds as well. By (\ref{rct1A}), there is a model $\I$ of $\T^*$ with 
\begin{equation} \label{eqoA}
o \in (C \dlAnd  \bigsqcap_{E {\tiny \usually} F \in \E_i} ( \neg E \dlOr F  ))^\I \ ,
\end{equation} 
\nd for some  $o \in \Delta^\I$. Now, let us extend $\I$ to $\delta_i$ by defining
\[
\delta_i^\I =  (\bigsqcap_{E {\tiny \usually} F \in \E_i} ( \neg E \dlOr F  ))^\I \ .
\]
\nd Note that by definition $o \in C^\I$ and $o \in \delta_i^\I$.
By construction, $\I \models E \dlAnd \delta_i \subs F$, for all $E \usually F \in \E_i$.
Therefore, $\I$ is a model of the antecedent in (\ref{rct3A}) with $o \in (C \dlAnd \delta_i)^\I$ and, thus, (\ref{rct3A}) holds.

Vice-versa, assume (\ref{rct3A}) holds. Let us show that (\ref{rct1A}) holds as well. By (\ref{rct3A}), there is a model $\I$ of $\T^*$ such that
$\I \models E \dlAnd \delta_i \subs F$, for all $E \usually F \in \E_i$  and $o \in (C \dlAnd \delta_i)^\I$ for some $o \in \Delta^\I$, \ie, $o \in C^\I$ and $o \in \delta_i^\I$. 
But,  $\I \models E \dlAnd \delta_i \subs F$ means that  $\Delta^\I = (\neg E \dlOr \neg \delta_i \dlOr F)^\I$. As $o \in \delta_i^\I$, $o \in (\neg E \dlOr  F)^\I$ follows. Therefore,
\begin{equation} \label{eqoB}
o \in (\bigsqcap_{E \usually F \in \E_i} ( \neg E \dlOr F  ))^\I 
\end{equation} 
\nd with $o \in C^\I$ and, thus,  (\ref{rct1A}) holds. 
\end{proof}
%
%
 \nd\textbf{Proposition \ref{normalformrank}}
\emph{
Given an ontology $\KB=\tuple{\T,\D}$, $C \usually D \in \D$ and the corresponding ontology in normal form $\KB'=\tuple{\T',\D'}$,  then $C$ is exceptional \wrt~$\KB$~iff $A_C$ is exceptional \wrt~$\KB'$, where $A_C$ is the new atomic concept introduced by the normalisation procedure to replace $C \usually D$ with $A_C \usually A_D$.}\qed
 %
 \begin{proof}
By Proposition~\ref{exceptB} we have that $C$ is exceptional \wrt~$\KB$~iff
$\T_{\delta_\D}\entails C\dlAnd \delta_\D \subs\bot$, where $\delta_\D$ is a new atomic concept. From this it suffices to show that 
$\T_{\delta_\D}\entails C\dlAnd \delta_\D \subs\bot$ iff $\T'_{\delta_\D}\entails A_C\dlAnd \delta_\D \subs\bot$.

So, let $\tuple{\T^+,\D^+}$ the KB obtained adding to $\T$ the axioms $\A_E= E$   and $\A_F= F$ for every axiom $E\usually F\in\D$ ($A_E, A_F$ being new atomic concepts), while $\D^+$ is obtained by replacing in $\D$ every axiom $E\usually F$ with $A_E\usually A_F$. Now, let us prove that $\T_{\delta_\D}\entails C\dlAnd\delta_\D\subs\bot$ iff $\T^+_{\delta_\D} \entails A_C\dlAnd\delta_\D\subs\bot$. 

The proof from left to right is immediate: assume $\I \models \T^+_{\delta_\D}$. Then $\I\models A_C  = C$, $\I \models \T_{\delta_\D}$ and, thus, $\I \models C\dlAnd\delta_\D\subs\bot$ and $\I \models A_C\dlAnd\delta_\D\subs\bot$.

From right to left, if $\T_{\delta_\D}\not\entails C\dlAnd\delta_\D\subs\bot$ then there is a model $\I$ of $\T_{\delta_\D}$ (that interprets only the  concepts appearing in $\T_{\delta_\D}$) \st~$(C\dlAnd\delta_0)^\I\neq\emptyset$. Now, extend the interpretation function $\cdot^\I$ of $\I$ in such a way that $A_E^\I=E^\I$ for all $E \usually F \in \D$; Then $\I\models \T^+_{\delta_\D}$ with $(A_C \dlAnd\delta_\D)^\I\neq\emptyset$ and, thus,
$\T^+_{\delta_\D}\not\entails A_C \dlAnd\delta_\D\subs\bot$.

Since $\T'_{\delta_\D}$ is the normal form of $\T^+_{\delta_\D}$, that preserves satisfaction, it follows that for every $C\usually D\in \D$, $\T_{\delta_\D} \entails C\dlAnd\delta_0\subs\bot$ iff $\T'_{\delta_\D} \entails A_C\dlAnd\delta_0\subs\bot$, which concludes the proof.
\end{proof}

 \nd\textbf{Proposition \ref{RC_syntactic_independent}}
\emph{Let $\KB=\tuple{\T,\D}$ and $\KB'=\tuple{\T',\D'}$ be rank equivalent. for every $\ELbot$ defeasible GCI $C\usually D$, $C\usually D$ is in the RC of $\KB$ iff $C\usually D$ is in the RC of $\KB'$.} \qed

\begin{proof}
In the following, we refer to the semantic construction in \ref{canmod}. Now, $\KB=\tuple{\T,\D}$ and $\KB'=\tuple{\T',\D'}$ being rank equivalent means that, for every ranked interpretation $R$, $R$ is a model of $\KB$ iff it is a model of $\KB'$ (see Section \ref{prel}). In turn, this implies that, given a countably infinite domain $\Delta$, the set of models of $\KB$ having $\Delta$ as domain corresponds to the  set of models of $\KB'$ having $\Delta$ as domain ($\R^\KB_\Delta=\R^{\KB'}_\Delta$). The latter implies that the characteristic model of the RC of $\KB$, $\ru$, is also a characteristic model of the RC of $\KB'$, which suffices to conclude. 
\end{proof}


\nd In order to prove Proposition \ref{ranking_defnom}, we need the following lemma.
\begin{lemma}\label{exceptional_defnom}
An n-safe $\mathcal{EL{O}}_\bot$ concept $C$ is exceptional \wrt~a safe $\mathcal{EL{O}}_\bot$ ontology $\KB=\tuple{\T,\D}$ iff the n-safe concept $C_{\tiny \defnom{}}$ is exceptional \wrt~the nominal safe $\mathcal{EL\defnom{O}}_\bot$ ontology $\KB_{\tiny \defnom{}}=\tuple{\T_{\tiny \defnom{}},\D_{\tiny \defnom{}}}$.
\qed
\end{lemma}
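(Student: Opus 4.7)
The plan is to prove both directions of the biconditional separately, with the easy direction first.

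For the $(\Leftarrow)$ direction, I would argue by contraposition. Suppose $C$ is not exceptional w.r.t.\ $\KB$, so there is a ranked $\mathcal{EL{O}}_\bot$-model $R$ of $\KB$ and some $o \in C^R \cap L_0^R$. I extend $R$ to an $\mathcal{EL\defnom{O}}_\bot$-interpretation $R^{\star}$ by setting $\defnom{a}^{R^{\star}} = \{a^R\}$ for each $a \in \ON$, and leaving everything else unchanged. A routine induction on concept structure shows that for every $\mathcal{EL{O}}_\bot$-concept $E$, $E^R = E_{\tiny\defnom{}}^{R^{\star}}$. Consequently $R^{\star}$ satisfies $\alpha_{\tiny\defnom{}}$ iff $R$ satisfies $\alpha$ for every axiom $\alpha$, so $R^{\star}$ is a ranked model of $\KB_{\tiny\defnom{}}$. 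Since $\prec^{R^{\star}} = \prec^R$, we get $o \in C_{\tiny\defnom{}}^{R^{\star}} \cap L_0^{R^{\star}}$, contradicting the exceptionality of $C_{\tiny\defnom{}}$.

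For the $(\Rightarrow)$ direction, again by contraposition: assume there is a ranked $\mathcal{EL\defnom{O}}_\bot$-model $R$ of $\KB_{\tiny\defnom{}}$ and some $o \in C_{\tiny\defnom{}}^R \cap L_0^R$, and construct a ranked $\mathcal{EL{O}}_\bot$-model $R'$ of $\KB$ with $C^{R'} \cap L_0^{R'} \neq \emptyset$. The naive strategy of setting $a^{R'}$ to be an arbitrary element of $\defnom{a}^R$ fails, because axioms of the form $\defnom{a} \subs \exists r.\defnom{b}$ only guarantee an $r$-successor in $\defnom{b}^R$ (which can be a strictly larger set than $\{b^{R'}\}$). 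Instead I would adapt the canonical construction from the proof of Proposition~\ref{defind}: take $\Delta^{R'} = \{x_{\typ{E}}, x_E \mid R \not\models E_{\tiny\defnom{}} \subs \bot\} \cup \{x_{\defnom{a}} \mid a \in \ON\}$, interpret $\{a\}^{R'} = \{x_{\defnom{a}}\}$, and define the atomic concept and role interpretations directly from the $\mathcal{EL\defnom{O}}_\bot$-subsumptions and role-relationships holding in $R$; assign heights by $h_{R'}(x_{\typ{E}}) = h_R(E_{\tiny\defnom{}})$, $h_{R'}(x_{\defnom{a}}) = h_R(\defnom{a})$, and place the atypical witnesses $x_E$ strictly above all $x_{\typ{F}}$.

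The main obstacle is verifying that $R'$ is indeed a ranked model of $\KB$ and that the $\typ{C}$-witness lies in $L_0^{R'}$. For this, I would prove by induction on n-safe concept structure the two auxiliary properties (i) $x_E \in F^{R'}$ iff $R \models E_{\tiny\defnom{}} \subs F_{\tiny\defnom{}}$ (when defined), and (ii) $x_{\typ{E}} \in F^{R'}$ iff $R \models E_{\tiny\defnom{}} \usually F_{\tiny\defnom{}}$. Nominal safeness of $\KB$ is decisive here: since defeasible nominals (resp.\ nominals) occur in axioms only in positions of the form $\exists r.\defnom{a}$ on right-hand sides or as n-safe left-hand sides, the collapse $\defnom{a}^R \mapsto \{x_{\defnom{a}}\}$ does not violate any GCI --- every existential successor obligation in $R$ can be realised in $R'$ by routing through the appropriate $x_{\typ{E}}$. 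With (i)--(ii) in hand, $h_R(o) = 0$ forces $h_R(C_{\tiny\defnom{}}) = 0$, whence $h_{R'}(x_{\typ{C}}) = 0$, so $x_{\typ{C}} \in C^{R'} \cap L_0^{R'}$, establishing that $C$ is not exceptional w.r.t.\ $\KB$ and completing the proof.
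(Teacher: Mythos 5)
Your proposal is correct and follows essentially the same route as the paper: the easy direction extends a model of $\KB$ by interpreting each $\defnom{a}$ as the singleton $\{a\}^R$, and the hard direction uses exactly the paper's canonical construction with typical witnesses $x_{\typ{E}}$, atypical witnesses $x_E$, and nominal witnesses $x_{\defnom{a}}$, the same height assignments, and the same two (three, counting defeasible nominals) auxiliary induction claims, with nominal safeness playing the same decisive role. The only cosmetic difference is that you cite the construction as coming from Proposition~\ref{defind}, whereas in the paper it originates here and is reused there.
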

\begin{proof}
Assume $C$ is not exceptional \wrt~$\KB$. So, there is a ranked model $R$ of $\KB$ \st~$C^{R}\cap\min_{\prec^R}(\Delta^{R}) \neq \emptyset$. Now, extend the interpretation function $\cdot^R$ of $R$ imposing that for every $a\in\ON$, $\defnom{a}^R=\{a\}^R$; then $R$ is model of $\KB_{\tiny \defnom{}}$ and $C_{\tiny \defnom{}}$ is not exceptional, \ie~$C_{\tiny \defnom{}}^{R}\cap\min_{\prec^R}(\Delta^{R}) \neq \emptyset$.

For the other direction, assume $C_{\tiny \defnom{}}$ is not exceptional \wrt~$\KB_{\tiny \defnom{}}$. So,  there is a model $R$ of $\KB_{\tiny \defnom{}}$ 
\st~$C_{\tiny \defnom{}}^{R}\cap\min_{\prec^R}(\Delta^{R}) \neq\emptyset$. Now, we build a ranked interpretation $R'=\tuple{\Delta^{R'}, \cdot^{R'},\prec^{R'}}$ in the following way:

\begin{itemize}
\item $\Delta^{R'}=\{x_{\typ{D}},x_D\mid R\not\models D\subs\bot, \mbox{ where $D$ is a safe $\mathcal{EL\defnom{O}}_\bot$ concept}\}\cup\{x_{\defnom{a}}\mid a\in \ON\}$;
\item $\defnom{a}^{R'}=\{x_{\defnom{a}}\in \Delta^{R'}\}$, for every defeasible nominal $\defnom{a}$; 
\item $A^{R'}=\{x_D\in \Delta^{R'}\mid R\models D\subs A\}\cup\{x_{\typ{D}}\in \Delta^{R'}\mid R\models D\usually A\}\cup\{x_{\defnom{a}}\in \Delta^{R'}\mid R\models \defnom{a}\usually A\}$, for every  safe $\mathcal{EL\defnom{O}}_\bot$ concept $D$, every defeasible nominal $\defnom{a}$, and every atomic $\mathcal{EL_\bot}$ concept $A$; 
\item $r^{R'}=\{\tuple{x_D,x_E}\in \Delta^{R'}\times\Delta^{R'}\mid R\models D\subs \exists r. E\}\cup\{\tuple{x_{\typ{D}},x_E}\in \Delta^{R'}\times\Delta^{R'}\mid R\models D\usually \exists r. E\}\cup \{\tuple{x_{\defnom{a}},x_E}\in \Delta^{R'}\times\Delta^{R'}\mid R\models \defnom{a}\usually \exists r. E\}$, for every  safe $\mathcal{EL\defnom{O}}_\bot$ concepts $D$ and $E$, and every defeasible nominal $\defnom{a}$;
\item $h_{R'}(x_{\typ{D}})=h_R(D)$, for every $x_{\typ{D}}\in\Delta^{R'}$;
\item $h_{R'}(x_{\defnom{a}})=h_R(\defnom{a})$, for every defeasible nominal $\defnom{a}$; 
\item $x_{\typ{D}}\prec^{R'}x_{E}$ for every $x_{\typ{D}},x_E\in\Delta^{R'}$.
\end{itemize}

\emph{Note:} using $x_{\typ{D}}$ we want to introduce an object that represents a typical instance of a concept $D$, while with $x_{{D}}$ we want to represent an atypical instance of a concept $D$. Only for defeasible nominals we introduce a single object representing them. What we need to impose in the interpretation $R'$ is that  every object $x_{\typ{D}}$ is more typical (is positioned in a lower layer) than any atypical objects $x_{{E}}$ satisfying the concept $D$; such a condition is easily satisfied if we impose the last constraint in the above definition of $R'$: for every pair of concepts $D$ and $E$, $x_{\typ{D}}\prec^{R'}x_{E}$; that is, we move all the objects $x_{E}$, representing an atypical occurrence of some concept $E$, to the upper layer. 


As next, we prove that $R'$ is still a model of $\KB_{\tiny \defnom{}}$. The first step is to prove that for every pair of concepts $D,E$ where $D$  and $E$ are \emph{safe}, we have, 
\begin{enumerate}
\item $R\models D\subs E$  and $R \not \models D \subs \bot$ iff $x_D\in E^{R'}$;
\item $R\models D\usually E$ and $R \not \models D \subs \bot$ iff $x_{\typ{D}}\in E^{R'}$.
\end{enumerate}
\nd And, for every pair  $\defnom{a},E$ where $a\in\ON$  and $E$ is \emph{safe}, we have, 
\begin{enumerate}
\item[3.] $R\models \defnom{a}\usually E$ iff $x_{\defnom{a}}\in E^{R'}$.
\end{enumerate}
\nd The proofs are by induction on the construction of $E$. The only relevant steps are  $E=\exists r.F$ (case $a$) and $E=\exists r.\defnom{b}$ (case $b$).
\begin{enumerate}
\item 

	\begin{itemize}
	\item[$a.$] If $R\models D\subs \exists r.F$ and $R \not \models D \subs \bot$ then 
	$\tuple{x_D,x_{F}} \in r^{R'}$.
	 By induction hypothesis, $x_F\in F^{R'}$, hence $x_D\in (\exists r.F)^{R'}$. 
	Vice-versa, if $x_D\in (\exists r.F)^{R'}$, then  
	$\tuple{x_D,x_{G}} \in r^{R'}$
	for some $x_{G}\in F^{R'}$. By construction, $R \not \models D \subs \bot$, $R\models D\subs \exists r.G$, and
	by induction hypothesis, $R\models G\subs F$, hence $R\models D\subs \exists r.F$.

	\item[$b.$] If $R\models D\subs \exists r.\defnom{b}$ and $R \not \models D \subs \bot$ then 
	$\tuple{x_D, x_{\defnom{b}} } \in r^{R'}$. 
	Clearly $x_{\defnom{b}}\in (\defnom{b})^{R'}$, and $x_D\in (\exists r.{\defnom{b}})^{R'}$. 
	Vice-versa, if $x_D\in (\exists r.\defnom{b})^{R'}$, then  
	$\tuple{x_D, x_{\defnom{b}} } \in r^{R'}$ 
	 and, by construction, $R\models D\subs \exists r.\defnom{b}$ and $R \not \models D \subs \bot$.

	\end{itemize}
\item 
	\begin{itemize}
	\item[$a.$] If $R\models D\usually \exists r.F$ and $R \not \models D \subs \bot$ then 
	$\tuple{x_{\typ{D}}, x_{F} } \in r^{R'}$. 	
	By induction point 1, $x_F\in F^{R'}$, and consequently $x_D\in (\exists r.F)^{R'}$. 
	Vice-versa, if $x_{\typ{D}}\in (\exists r.F)^{R'}$, then  
	$\tuple{x_{\typ{D}}, x_{G} } \in r^{R'}$ 
	for some $x_{G}\in F^{R'}$. By construction, $R \not \models D \subs \bot$, $R\models D\usually \exists r.G$, and by point 1, $R\models G\subs F$, hence $R\models D\usually \exists r.F$.

	\item[$b.$] If $R\models C\usually \exists r.\defnom{b}$ and $R \not \models D \subs \bot$ then 
	$\tuple{x_{\typ{D}}, x_{\defnom{b}} } \in r^{R'}$. 
	By construction, $x_{\defnom{b}}\in (\defnom{b})^{R'}$, hence $x_D\in (\exists r.\defnom{b})^{R'}$. 
	Vice-versa, if $x_{\typ{D}}\in (\exists r.\defnom{b})^{R'}$, then  
	$\tuple{x_{\typ{D}}, x_{\defnom{b}} } \in r^{R'}$,	
	and, by construction, $R\models D\usually \exists r.\defnom{b}$ and $R \not \models D \subs \bot$.
	\end{itemize}
	
\item Analogously to point 2.

\end{enumerate}
Now we prove that $R'$ satisfies the safe  $\mathcal{EL\defnom{O}}_\bot$ ontology $\tuple{\T_{\tiny \defnom{}},\D_{\tiny \defnom{}}}$:
\begin{itemize}
\item Case $D\subs E\in \T_{\tiny \defnom{}}$.
If $x_F\in D^{R'}$, with $F$ a safe concept, by  point 1, $R\models F\subs D$, that, combined with $R\models D\subs E$, implies $R\models F\subs E$, and, again by point 1, $x_F\in E^{R'}$.
If $x_{\typ{F}}\in D^{R'}$, with $F$ a safe concept, by  point 2, $R\models F\usually D$, that, combined with $R\models D\subs E$, implies $R\models F\usually E$, and, again by point 2, $x_F\in E^{R'}$.
Eventually, if $x_{\defnom{a}}\in D^{R'}$, by  point 3, $R\models \defnom{a}\usually D$, that, combined with $R\models D\subs E$, implies $R\models \defnom{a}\usually E$, and, again by point 3, $x_{\defnom{a}}\in E^{R'}$.

\item Case $D\usually E\in \D_{\tiny \defnom{}}$.
By construction of $R'$,  for any $F$, it cannot be $x_F\in \min_{\prec^{R'}}(D^{R'})$: only an object of kind $x_{\typ{F}}$ or $x_{\defnom{a}}$ can be in $\min_{\prec^{R'}}(D^{R'})$. 
If $x_{\typ{F}}\in \min_{\prec^{R'}}(D^{R'})$ then $h_{R'}(x_{\typ{F}})=h_{R'}(x_{\typ{D}})$, and by construction $h_R(F)=h_R(D)$. Also, by point 2 $R\models F\usually D$. It can then be verified that $h_R(F)=h_R(D)$, $R\models F\usually D$ and $R\models D\usually E$ together imply that $R\models F\usually E$, that by point 2 implies $x_{\typ{F}}\in  E^{R'}$.
Analogously, if  $x_{\defnom{a}}\in \min_{\prec^{R'}}(D^{R'})$ then $h_{R'}(x_{\defnom{a}})=h_{R'}(x_{\typ{D}})$, and by construction $h_R(\defnom{a})=h_R(D)$. Also, by point 3 $R\models \defnom{a}\usually D$. Then it can be verified that $h_R(\defnom{a})=h_R(D)$, $R\models \defnom{a}\usually D$ and $R\models D\usually E$ together imply that $R\models \defnom{a}\usually E$, that by point 3 implies $x_{\defnom{a}}\in  E^{R'}$.
\end{itemize}

\nd Therefore, the ranked interpretation $R'$ is a model of $\tuple{\T_{\tiny \defnom{}},\D_{\tiny \defnom{}}}$. Note that since we imposed that $C_{\tiny \defnom{}}$ is not exceptional in $R$, in $R'$ we have the object $x_{C_{\tiny \defnom{}}}$ in layer $0$, with $x_{C_{\tiny \defnom{}}}\in (C_{\tiny \defnom{}})^{R'}$; hence $C_{\tiny \defnom{}}$ is not exceptional also in $R'$.

Now we can trivially extend the interpretation $R'$ to classical nominals:
\begin{itemize}
\item $\{a\}^{R'}=\defnom{a}^{R'}=\{x_{\defnom{a}}\}$ for every $a\in \ON$.
\end{itemize}
It can easily be proved now that $R'$ is also a model of $\KB =\tuple{\T,\D}$. To prove it, it is sufficient to check that for every \emph{n-safe} concept $D$ and every object $x$, 
$x \in D^{R'}$ iff $x \in (D_{\{\}})^{R'}$. This can be proved by induction on the construction of the concept $D$; the only relevant steps are
\begin{itemize}
\item $D=\defnom{a}$. By construction, $x\in \defnom{a}^{R'}$ iff $x\in \{a\}^{R'}$.
\item $D=\exists r.\defnom{a}$. $x\in (\exists r.\defnom{a})^{R'}$ iff 
$\tuple{x, x_{\defnom{a}} } \in r^{R'}$ 
iff $x\in (\exists r.\{a\})^{R'}$.
\end{itemize}
Hence $R'$ is a model of $\KB$. Since in Layer $0$ we have $x_{C_{\tiny \defnom{}}}$, and $x_{C_{\tiny \defnom{}}}\in (C_{\tiny \defnom{}})^{R'}$, we  also have $x_{C_{\tiny \defnom{}}}\in (C)^{R'}$. That is, 
$C^{R'}\cap\min_{\prec^{R'}}(\Delta^{R'}) \neq \emptyset$, \ie~$C$ is not exceptional in \wrt~$\KB$.  
\end{proof}

\nd Now we can easily prove Proposition \ref{ranking_defnom}.

\vspace{2ex}
\nd {\bf Proposition \ref{ranking_defnom}.}
\emph{For every  n-safe $\mathcal{EL{O}}_\bot$ concept $C$ and every nominal safe $\mathcal{EL{O}}_\bot$ ontology $\KB=\tuple{\T,\D}$, $r_\KB(C)=i$ iff $r_{\KB_{\tiny \defnom{}}}(C_{\tiny \defnom{}})=i$. }\qed
%
\begin{proof}
Consider a nominal safe $\mathcal{EL{O}}_\bot$ ontology $\KB=\tuple{\T,\D}$ and the nominal safe $\mathcal{EL\defnom{O}}_\bot$ ontology $\KB_{\tiny \defnom{}}$. Lemma \ref{exceptional_defnom} tells us that for every safe $\mathcal{EL{O}}_\bot$ GCI $C\usually D$, $C\usually D$ is exceptional \wrt~$\KB$ iff $C_{\tiny \defnom{}}\usually D_{\tiny \defnom{}}$ is exceptional \wrt~$\KB_{\tiny \defnom{}}$. Then we can easily prove the proposition by induction on the construction of the ranking. 
\end{proof}
\nd Next, we address Proposition~\ref{ranking_defnomA}.

\vspace{2ex}
\nd {\bf Proposition~\ref{ranking_defnomA}.}
\emph{For every  n-safe $\defELObot$ concept $C$ and every nominal safe $\defELObot$ ontology $\KB=\tuple{\T,\D}$ the following holds:
$r_{\KB}(C)=i$ iff $r_{N(\KB_{\tiny \{\}})}(N(C_{\tiny \{\}}))=i$. } \qed
\begin{proof}
The proof of this proposition is quite immediate. Assume that there is a ranked interpretation $R$ that is a model of $\KB$ and such that there is an object in layer $0$ satisfying $C$. Now define an $\mathcal{EL_\bot}$ ranked interpretation $R'$ that is identical to $R$, by imposing that for every atomic concept $N_a$, ${N_a}^{R'}=\defnom{a}^R$. It is easy to check by induction on the construction of a concept $D$ that, for every object $o\in\Delta^R$, $o\in D^R$ iff $o\in (N(D_{\{\}}))^{R'}$. Hence $R'$ is a model of $N(\KB_{\{\}})$ and $N(C_{\{\}})$ is not exceptional in $R'$. Repeating exactly the same transformation and reasoning in the other direction, we can conclude that $C$ is exceptional in $\KB$ iff $N(C_{\{\}})$ is exceptional in $N(\KB_{\{\}})$. This property can be immediately extended to the definition of the rankings (Section \ref{alc}), proving the proposition. 
\end{proof}

\nd {\bf Corollary~\ref{use_concept_coroll}.}
\emph{Let $\KB=\tuple{\T,\D}$ be a nominal safe $\defELObot$ ontology, with $C\usually D\in \D$. Let  $\T^\bullet$ be  the TBox 
 obtained from $\KB$ after applying {\bf RC.Step 1}, while let $(N(\T_{\{\}}))^*$ be the TBox obtained from $N(\KB_{\{\}})$ using the
$\mathtt{ComputeRanking}$ procedure. Then, the following are equivalent:}
\begin{enumerate}
\item   $C\usually \bot\in \T^\bullet$; 
\item $N(C_{\{\}}\usually \bot)\in (N(\T_{\{\}}))^*$; 
\item $r_{\KB_{\{\}}}(C_{\{\}}\usually D_{\{\}})=\infty$.
\qed
\end{enumerate}
\begin{proof}
The extension of Proposition~\ref{propcr} from $\mathcal{EL_\bot}$ ontologies to $\mathcal{EL\defnom{O}_\bot}$ ontologies is immediate: since defeasible nominals behave exactly like atomic concepts, it is sufficient to go through the proof of Proposition~\ref{propcr} by taking into account also defeasible nominals. 
This, combined with Proposition~\ref{ranking_defnomA}, guarantees that the application of the above-mentioned procedures gives back the same results for an $\mathcal{EL\defnom{O}_\bot}$ ontology $\KB$ and the correspondent $\mathcal{EL{O}_\bot}$ ontology $N(\KB_{\{\}})$.  
\end{proof}

\nd The following Lemma is needed to prove Proposition~\ref{use_concept}. 

\begin{lemma}\label{strict_nominals}
Let $\KB=\tuple{\T,\D}$ be a nominal safe $\defELObot$ ontology. Then
$N(\T_{\{\}}\cup\D^{\subs}_{\{\},\infty})$ is logically equivalent to $(N(\T_{\{\}}))^*$ (where $(N(\T_{\{\}}))^*$ is defined as in Corollary~\ref{use_concept_coroll}).
\qed
\end{lemma}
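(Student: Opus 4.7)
The plan is to reduce both sides of the claimed equivalence to concrete $\mathcal{EL}_\bot$ TBoxes and then prove entailment in both directions. First I would unpack $(N(\T_{\{\}}))^*$ using Proposition~\ref{propcr} (extended to $\mathcal{ELO}_\bot$ in the obvious way): running $\mathtt{ComputeRanking}$ on $N(\KB_{\{\}})$ augments $N(\T_{\{\}})$ precisely with the axioms $N(C)\subs\bot$ for those $C\usually D\in\D_{\{\}}$ whose translated version $N(C)\usually N(D)$ ends up with infinite rank. By Corollary~\ref{use_concept_coroll} this coincides with the set of $C\usually D\in\D_{\{\}}$ with $r_{\KB_{\{\}}}(C\usually D)=\infty$. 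Hence
\[
(N(\T_{\{\}}))^* \;=\; N(\T_{\{\}})\cup\{N(C)\subs\bot \mid C\usually D\in\D_{\{\}},\ r_{\KB_{\{\}}}(C\usually D)=\infty\},
\]
while straight from the definition,
\[
N(\T_{\{\}}\cup\D^{\subs}_{\{\},\infty}) \;=\; N(\T_{\{\}})\cup\{N(C)\subs N(D) \mid C\usually D\in\D_{\{\}},\ r_{\KB_{\{\}}}(C\usually D)=\infty\}.
\]

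One direction is immediate: every model of $(N(\T_{\{\}}))^*$ is a model of $N(\T_{\{\}}\cup\D^{\subs}_{\{\},\infty})$, since $N(C)\subs\bot$ classically entails $N(C)\subs N(D)$.

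For the converse I would proceed by induction on the outer iteration $i$ of the $\mathtt{repeat}$-loop of $\mathtt{ComputeRanking}$ applied to $N(\KB_{\{\}})$. Writing $\T^i$ for the TBox and $\D^i_\infty$ for the fixed-point of exceptional axioms computed at iteration $i$, and setting $\hat\T^i := N(\T_{\{\}})\cup\bigcup_{j<i}\{N(E)\subs N(F) \mid E\usually F\in\D^j_\infty\}$, I would maintain the invariant that every model of $\hat\T^i$ is a model of $\T^i$. The base case $\hat\T^0=\T^0=N(\T_{\{\}})$ is trivial. The inductive step reduces to the following key claim, which is the main obstacle:
\begin{quote}
If $E\usually F\in\D^i_\infty$, then $\T^i\cup\{E'\subs F' \mid E'\usually F'\in\D^i_\infty\}\entails E\subs\bot$.
\end{quote}
I would prove this claim semantically: suppose toward a contradiction that some classical interpretation $\I$ satisfies the left-hand side and contains an object $o\in E^\I$. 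Build a ranked interpretation $R$ with domain $\Delta^\I$, the same interpretation function as $\I$, and all elements placed at rank $0$. The satisfaction of the $E'\subs F'$ axioms in $\I$ yields $\min_{\prec^R}(E'^R)=E'^R\subseteq F'^R$, so $R$ is a ranked model of $\tuple{\T^i,\D^i_\infty}$; but $o\in E^R$ at rank $0$ contradicts the fact that $E$ is exceptional w.r.t.~$\tuple{\T^i,\D^i_\infty}$ (which holds because $\D^i_\infty$ is a fixed-point of the $\mathtt{Exceptional}$ operator). Combining this claim with the inductive hypothesis, every $N(E)\subs\bot$ that is added in iteration $i$ is classically derivable from $\hat\T^{i+1}$, which gives the invariant for $i+1$.

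When the procedure terminates, $\T^k=(N(\T_{\{\}}))^*$ and $\hat\T^k=N(\T_{\{\}}\cup\D^{\subs}_{\{\},\infty})$, so the converse direction follows and the two TBoxes are logically equivalent. The only delicate point is checking that the ranked model construction in the key claim respects the TBox $\T^i$ at intermediate iterations (which it does, since $\I$ is already a classical model of $\T^i$, and a single-layer ranked interpretation satisfies a strict GCI $G\subs H$ iff the underlying classical interpretation does).
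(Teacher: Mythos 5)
Your proof is correct, and on the hard direction it takes a genuinely more careful route than the paper. Both arguments share the same skeleton: identify $(N(\T_{\{\}}))^*$ as $N(\T_{\{\}})$ augmented with $N(E)\subs\bot$ for exactly the infinite-rank $E\usually F$, note that the direction from $(N(\T_{\{\}}))^*$ to $N(\T_{\{\}}\cup\D^{\subs}_{\{\},\infty})$ is trivial since $N(E)\subs\bot$ classically entails $N(E)\subs N(F)$, and then derive each added $N(E)\subs\bot$ from the strict versions of the infinite-rank axioms via the fixed-point property of the exceptionality computation. The paper does this last step in a single global stroke: it asserts that infinite rank gives $N(\T_{\{\}})\models\bigsqcap\{\neg E'\dlOr F'\mid E'\usually F'\in(N(\D_{\{\}}))_\infty\}\subs\neg E$ and combines this with the fact that $N(\T_{\{\}})\cup(N(\D_{\{\}}))^{\subs}_{\infty}$ entails $\top\subs\bigsqcap\{\neg E'\dlOr F'\mid\cdots\}$. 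As written, that glosses over the multi-round structure of $\mathtt{ComputeRanking}$: when the outer \texttt{repeat} loop runs more than once, the fixed-point entailment for axioms eliminated in a later round holds only relative to the already-augmented TBox $\T^i$, not to $N(\T_{\{\}})$ itself (Example~\ref{exinfty} is exactly such a case). Your round-by-round induction with the invariant ``every model of $\hat\T^i$ is a model of $\T^i$'' is precisely the bookkeeping needed to discharge this, and your key claim is the per-round analogue of the paper's global one; proving it by constructing a one-layer ranked countermodel is an equivalent, slightly more semantic alternative to the paper's direct manipulation of $\bigsqcap\{\neg E'\dlOr F'\}$, both ultimately resting on the fixed-point condition of $\mathtt{Exceptional}$ (cf.\ Proposition~\ref{exceptB}). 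In short, your version buys full rigour for the multi-round case at the cost of an induction; the paper's is shorter but leaves that case implicit.
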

\begin{proof}
From Proposition~\ref{ranking_defnomA} we know that $C\usually D\in \D_{\{\},\infty}$ iff $N(C\usually D)\in (N(\D_{\{\}}))_\infty$ (that is, the axioms in $\D_{\{\}}$ that have infinite rank \wrt~$\KB_{\{\}}$ correspond to the axioms in $N(\D_{\{\}})$ that have infinite rank \wrt~$N(\KB_{\{\}})$), hence $N(\T_{\{\}}\cup \D^\subs_{\{\},\infty})$ is the same as to $N(\T_{\{\}})\cup (N(\D_{\{\}}))^\subs_\infty$. 

Now, we have to prove that $N(\T_{\{\}})\cup (N(\D_{\{\}}))^\subs_\infty$ is logically equivalent to $(N(\T_{\{\}}))^*$. To this purpose, it is sufficient to prove that every axiom in $(N(\D_{\{\}}))^\subs_\infty$ is derivable from $(N(\T_{\{\}}))^*$ and every axiom in $(N(\T_{\{\}}))^*\setminus N(\T_{\{\}})$ is derivable from $N(\T_{\{\}})\cup (N(\D_{\{\}}))^\subs_\infty$.

So, assume  $C\subs D\in (N(\D_{\{\}}))^\subs_\infty$. Then $C\usually D\in (N(\D_{\{\}}))_\infty$. By  procedure $\mathtt{ComputeRanking}$, an axiom $C\subs \bot$ has been added to $(N(\T_{\{\}}))^*\setminus N(\T_{\{\}})$ (line 13 of the procedure).  If $C\subs \bot \in (N(\T_{\{\}}))^*$, then $(N(\T_{\{\}}))^*\models C\subs D$.

Vice-versa, assume $C\subs \bot\in (N(\T_{\{\}}))^*\setminus N(\T_{\{\}})$ (due to line 13 of procedure $\mathtt{ComputeRanking}$, only axioms with the  form $C\subs \bot$ can be in $(N(\T_{\{\}}))^*\setminus N(\T_{\{\}})$). This implies that an axiom $C\usually D$ is in $(N(\D_{\{\}}))_\infty$ (due to the construction of $N(\T_{\{\}}))^*$ in the line 13 of  procedure $\mathtt{ComputeRanking}$), and so $C\subs D\in (N(\D_{\{\}}))^\subs_\infty$. By Proposition \ref{propcr}, 
we know that $C\usually D\in (N(\D_{\{\}}))_\infty$ iff it is in the fixed point of the exceptionality procedure: that is, $C\usually D\in (N(\D_{\{\}}))$ and
\[
(N(\T_{\{\}}))\models \bigsqcap\{\neg E\dlOr F\mid E\usually F\in (N(\D_{\{\}}))_\infty\}\subs \neg C
\]
\nd Since 
\[
N(\T_{\{\}})\cup (N(\D_{\{\}}))^\subs_\infty\models \top\subs \bigsqcap\{\neg E\dlOr F\mid E\usually F\in (N(\D_{\{\}}))_\infty\}
\]
we end up with $N(\T_{\{\}})\cup (N(\D_{\{\}}))^\subs_\infty\models \top\subs \neg C$, that is,
\[
N(\T_{\{\}})\cup (N(\D_{\{\}}))^\subs_\infty\models C\subs \bot
\]
for every $C\subs \bot\in (N(\T_{\{\}}))^*\setminus N(\T_{\{\}})$, which concludes. 
\end{proof}

\nd {\bf Proposition~\ref{use_concept}.}
\emph{Let $\KB=\tuple{\T,\D}$ be a nominal safe $\defELObot$ ontology. 
For every \emph{safe}  GCI $C\subs D$, we have that the following are equivalent:}
\begin{enumerate}
\item  $C\subs D$ is in the RC of $\KB$; 
\item $N(C_{\{\}} \subs D_{\{\}})$ is in the RC of $N(\KB_{\{\}})$;
\item ${\T}_{\{\}}\cup\D^\subs_{\{\},\infty}\entails C_{\{\}}\subs D_{\{\}}$.
\qed
\end{enumerate}
\begin{proof}
That $C\subs D$ is in the RC of $\KB$ iff
$N(C_{\{\}} \subs D_{\{\}})$ is in the RC of $N(\KB_{\{\}})$ follows easily from Proposition~\ref{ranking_defnomA} and from the fact that $\mathtt{RationalClosure}$ is invariant to the substitution of every $\defnom{a}$ with an atomic concept $N_a$ (it can be proven semantically in a similar way as in the proof of Proposition~\ref{ranking_defnomA}). From Proposition~\ref{classgci} we know that $N(C_{\{\}} \subs D_{\{\}})$ is in the RC of $N(\KB_{\{\}})$ iff $(N(\T_{\{\}}))^*\entails N(C_{\{\}} \subs D_{\{\}})$, that, by Propositions \ref{propelo} and \ref{propeloB}  and Lemma \ref{strict_nominals} holds iff ${\T}_{\{\}}\cup\D^\subs_{\{\},\infty}\entails C_{\{\}} \subs D_{\{\}}$. 
\end{proof}

 \end{document}